\documentclass[11pt,twoside]{article} 

\usepackage{amsmath,amsfonts,bm}

\def\eqref#1{equation~\ref{#1}}

\def\1{\bm{1}}

\DeclareMathAlphabet{\mathsfit}{\encodingdefault}{\sfdefault}{m}{sl}
\SetMathAlphabet{\mathsfit}{bold}{\encodingdefault}{\sfdefault}{bx}{n}

\usepackage[utf8]{inputenc} 
\usepackage[T1]{fontenc}    
\usepackage{booktabs}       
\usepackage{amsfonts}       
\usepackage{nicefrac}       
\usepackage{microtype}      

\usepackage{subfigure}
\usepackage{epsf}
\usepackage{epsfig}
\usepackage{fancyhdr}
\usepackage{graphics}
\usepackage{graphicx}
\usepackage{psfrag}
\usepackage{fullpage}
\usepackage{pdfpages}
\usepackage{natbib}
\usepackage{url}
\usepackage[colorlinks,linkcolor=magenta,citecolor=blue, pagebackref=true]{hyperref}
\renewcommand*{\backrefalt}[4]{%
    \ifcase #1  {(Not cited.)}%
    \or         {(Cited on page~#2.)}%
    \else       {(Cited on pages~#2.)}%
    \fi}
\usepackage{color}

\usepackage{amsthm}
\usepackage{amsmath}
\usepackage{amssymb,bbm}
\usepackage{caption}
\usepackage{textcomp}
\usepackage{siunitx}
\usepackage{wrapfig}
\usepackage{multirow}
\usepackage{multicol}

\usepackage{caption}
\usepackage{algorithm}
\usepackage{amssymb}
\usepackage{graphicx}                        
\usepackage{tikz}
\definecolor{softblue}{RGB}{180,185,230}
\definecolor{softgreen}{RGB}{190,235,190}
\definecolor{softred}{RGB}{240,180,170}
\definecolor{boxgray}{RGB}{235,235,235}
\usetikzlibrary{positioning, arrows.meta}    
\usepackage{booktabs} 
\usepackage{multirow}  
\usepackage{graphicx}  
\usetikzlibrary{fit}

\newtheorem{remark}{Remark}
\usepackage{bbm}
\usepackage{algorithmic}
\usepackage{algorithm}
\usepackage{flafter}

\newtheorem{assumption}{Assumption}

\usepackage{amssymb}
\usepackage{mathtools}
\newtheorem{example}{Example} 
\newtheorem{theorem}{Theorem}
\newtheorem{lemma}{Lemma} 
\newtheorem{proposition}{Proposition} 
\newtheorem{corollary}{Corollary}
\newtheorem{definition}{Definition}

\usepackage[colorinlistoftodos]{todonotes}

\begin{document}

\begin{center}

{\bf{\LARGE{Sliced Orlicz-Wasserstein
}}}
  
\vspace*{.2in}
{\large{
\begin{tabular}{cc}
Binh Thuan Tran$^\diamond$&Khai Nguyen$^\dagger$
\end{tabular}
}}

\vspace*{.2in}

\begin{tabular}{c}
$^\diamond$LAMA, LIGM, Université Gustave Eiffel \\
$^\dagger$Department of Statistics and Data Science, Texas A\&M University
\end{tabular}

\vspace*{.2in}
\today

\vspace*{.2in}

\begin{abstract}
We propose sliced Orlicz-Wasserstein (SOW) distance which is a generalization of sliced Wasserstein (SW) distance. SOW replaces the $L^p$ norm in SW with a Luxemburg norm cost induced by an Orlicz function $\phi$. First, we prove that SOW distance is a metric on the space of measures with finite Orlicz norm, and show that it recovers the SW distance when the Orlicz function is $\phi(x)=x^p$. Next, we derive the topological properties of the SOW distance. In particular, we show that convergence under SOW implies weak convergence, and the converse is true under the compact support condition. We then present the theoretical results for estimating the SOW distance. We derive sample complexity for both the distance itself and the powered functional of the distance, and prove their minimax optimality. In addition, we discuss the computational algorithm for approximating the SOW distance by Monte-Carlo estimation and bisection search, as well as the associated approximation error and computational complexity analysis. Our experimental results reveal the superior computational efficiency of SOW compared with Orlicz-Wasserstein (OW) distance. Also, in the experiments, we demonstrate the favorable flexibility of SOW distance over SW in detecting differences between distributions by comparing their performance in two-sample tests and evaluating generative models on image datasets\footnote{Code for this paper is published at  \url{https://github.com/khainb/SOW}.}.
\end{abstract}

\end{center}

\section{Introduction}
\label{sec:introduction}

Comparing probability measures is the most fundamental problem in statistics, machine learning, and data sciences. Among statistical distances, the Wasserstein distance~\cite{villani2003topics,villani2008optimal} has recently attracted a lot of attention due to its rich geometrical structure from optimal transport (OT). Within generative modeling, it has served to improve generative adversarial networks~\cite{arjovsky2017wasserstein}, flow-based approaches~\cite{lipman2023flow}, and drifting models~\cite{he2026sinkhorn}. It has likewise been employed for matching source and target distributions in domain adaptation~\cite{courty2017joint}. It also plays a significant role in computational biology~\cite{schiebinger2019optimal}, image processing~\cite{feydy2017optimal}, signal processing~\cite{kolouri2017optimal}, computer graphics~\cite{solomon2016entropic,solomon2015convolutional}, statistical inference~\cite{bernton2019approximate,bernton2019parameter}, and the measurement of dependence~\cite{catalano2021measuring,catalano2024wasserstein}, among numerous other domains.

Wasserstein distance of order $p$ aggregates transport cost by averaging the $p$-th power of the ground distance. Therefore, a single exponent controls how much weight is placed on small versus large displacements. Replacing this power function by an arbitrary Orlicz function (see Section~\ref{sec:background}) yields Orlicz-Wasserstein (OW) distance~\cite{sturm2011generalized,kell2017interpolation}, which measures the cost of a coupling through a Luxemburg-type norm. Since the power function is itself an Orlicz function, this family strictly generalizes Wasserstein distance. The choice of the Orlicz function controls how transported mass is weighted through the associated Luxemburg norm. In particular, as illustrated by \eqref{eq:Wphi-Fundamnetalfunction}, different growth regimes of $\phi$ induce different sensitivities to discrepancies carried by a small fraction of mass. This flexibility has already led to improved theoretical guarantees for parameter estimation in mixture models, where the mass a posteriori assigned to atoms far from the true support vanishes only logarithmically under Wasserstein distance, but at an almost polynomial rate under a suitable OW distance~\cite{guha2023excess}. Moreover, the work of \cite{altschuler2024faster} leverages $\operatorname{OW}$ to establish fast convergence of hypocoercive differential equations.

In this work, we focus on discussing the computational aspect of OW which is an underexplored area. As discussed in~\cite{guha2023excess}, computing OW requires using a bisection search where each iteration evaluates a Wasserstein distance at least once. Therefore, OW becomes even more computationally expensive than Wasserstein, which is already computationally demanding. For better context, the worst-case time complexity of Wasserstein distance is $\mathcal{O}(n^3 \log n)$~\cite{peyre2020computational} in the discrete case with at most $n$ atoms. \cite{guha2023excess} mitigate the problem by adapting entropic regularization~\cite{cuturi2013sinkhorn} to the case of OW, which reduces the computational complexity for each iteration to $\mathcal{O}(n^2)$.  However, this quadratic complexity is not scalable enough for a very large $n$. As a result, it is desirable to have a faster alternative.

We propose sliced Orlicz-Wasserstein (SOW) distance  which achieves a quasi-linear time complexity of $\mathcal{O}(n\log n)$ in the discrete setting. SOW is a sliced version of OW analogous distance to sliced Wasserstein~\cite{rabin2012wasserstein} (SW) distance. Similar to SW, SOW leverages the closed-form of one-dimensional OT \cite{cambanis1976inequalities} which accelerates computation of one-dimensional OW. In addition to the appealing computational complexity, SOW also enjoys a dimension-independent sample complexity as SW under the compact support assumption, in contrast to the curse of dimensionality problem of Wasserstein distance~\cite{fournier2015rate}. With the ability to sharpen control over tail behavior, SOW can be a suitable replacement for SW in settings where large displacements should be penalized more aggressively than the polynomial cost of SW allows, e.g., when the two measures differ mainly in their tails or in low-mass modes that SW is known to under-weight. More broadly, the choice of the Orlicz function makes SOW a family rather than a single metric. In particular, $\operatorname{SOW}$ recovers $\operatorname{SW}_p$ when $\phi(t)=t^p$, while more general choices of $\phi$ allow the sensitivity to small amounts of transported mass to depart from the fixed power-law behavior imposed by Wasserstein distances. SOW extends the family of sliced optimal transport metrics~\cite{nguyen2025introduction} and the family of sliced probability metrics~\cite{nadjahi2020statistical}. With existing broad applications of sliced optimal transport metrics, e.g., generative modeling~\cite{deshpande2018generative}, clustering~\cite{kolouri2018slicedgmm}, representation learning~\cite{kolouri2018sliced}, and many others, SOW can serve as a new choice in these applications. In summary, our contributions are threefold:


\begin{enumerate}

 \item  We establish a quantile representation for the one-dimensional $\operatorname{OW}$ distance and use it to introduce $\operatorname{SOW}$, obtained by aggregating projected one-dimensional $\operatorname{OW}$ distances. The proposed distance recovers $\operatorname{SW}$ as a special case.


 \item  Beyond basic metric properties, we develop the topological and statistical theory of $\operatorname{SOW}$. We show that $\operatorname{SOW}$ convergence implies weak convergence, with equivalence under compact support. Under the compact-support assumption, we derive dimension-independent sample complexities for both $\operatorname{SOW}$ and its powered functional, and prove that the resulting rates are minimax optimal. The powered-functional case requires additional perturbation and minimax lower-bound arguments beyond those used for standard $\operatorname{SW}$. We provide a proof roadmap in Appendix~\ref{app:technicalcontribution}, highlighting where the classical SW arguments break down and the additional ingredients needed in the general Orlicz setting. Besides, we also take the first step beyond compact support by identifying a necessary tail integrability condition for empirical $\operatorname{SOW}$ convergence.


 \item  We provide a practical approximation scheme based on Monte Carlo projections and bisection, together with approximation-error and computational-complexity guarantees. Experiments demonstrate substantially improved scalability over $\operatorname{OW}$ and show that suitable choices of the Orlicz function can increase sensitivity to low-mass or large-displacement discrepancies in two-sample testing and generative model evaluation.
\end{enumerate}

\textbf{Organization.} The remainder of the paper is organized as follows. We first review basic definitions of Wasserstein distance, SW distance, Orlicz space and OW distance, and fundamental functions in Section~\ref{sec:background}. We then discuss SOW, including its definitions, theoretical properties, and computations in Section~\ref{sec:SOW}.  Experiments and corresponding discussions are given in Section~\ref{sec:exps}. We conclude the paper and discuss some future works in Section~\ref{sec:conclusion}. Technical proofs are provided in the Appendices.

{\textbf{Notation.}} Throughout, $\|\cdot\|$ denotes the Euclidean norm. For $d\ge1$, we write
$\mathbb{S}^{d-1}\coloneqq
\left\{\theta\in\mathbb{R}^{d}:\|\theta\|=1\right\}$
for the unit sphere in $\mathbb{R}^{d}$ and let $\sigma$ denote the uniform probability measure on $\mathbb{S}^{d-1}$. Given $\theta\in\mathbb{S}^{d-1}$, let $\Pi^{\theta}:\mathbb{R}^{d}\to\mathbb{R}$ denote the projection $\Pi^{\theta}(x)=\theta^{\top}x$. For a measurable map $f$ and a measure $\mu$, the pushforward of $\mu$
through $f$ is denoted by $f_{\#}\mu\coloneqq\mu\circ f^{-1}$. For a Borel set $A$ in a Euclidean space, let $\mathcal{P}(A)$ denote the set of Borel probability measures on $A$. For $\mu\in\mathcal{P}(\mathbb{R})$, let $F_{\mu}$ denote its cumulative distribution function and for $u\in (0,1)$, define its quantile function by $F_{\mu}^{-1}(u)\coloneqq
\inf\left\{x\in\mathbb{R}:F_{\mu}(x)\ge u\right\}$. Weak convergence of $\mu_n$ to $\mu$ is denoted by
$\mu_n\rightharpoonup\mu$, and $\delta_x$ denotes the Dirac measure at $x$. For probability measures $\mu$ and $\nu$ on the same Euclidean space, $\Pi(\mu,\nu)$ denotes the set of their couplings. For two positive sequences $(a_n)_{n\ge1}$ and $(b_n)_{n\ge1}$, we write $a_n=\mathcal{O}(b_n)$, or equivalently $a_n\lesssim b_n$, if there exists a constant $C>0$, independent of $n$, such that $a_n\le Cb_n$ for every $n\ge1$. We write $a_n\asymp b_n$ if both $a_n\lesssim b_n$ and $b_n\lesssim a_n$. Subscripts indicate the parameters on which the implicit constants may depend. Increasing and decreasing are understood in the weak sense unless
explicitly qualified as strict.
\section{Background}
\label{sec:background}
We first review the Wasserstein and SW distances, and then recall the basic notions of Orlicz spaces and the OW distance used throughout the paper.

\textbf{Wasserstein distance.}
For $p\ge1$, let $\mathcal{P}_p(\mathbb{R}^d)$ denote the set of probability measures on $\mathbb{R}^d$ with finite $p$-th moment. The
$p$-Wasserstein distance~\cite{villani2008optimal} between $\mu,\nu\in\mathcal{P}_p(\mathbb{R}^d)$ is defined as follows:
\begin{equation}
\operatorname{W}_p^p(\mu,\nu)\coloneqq 
\inf_{\pi\in\Pi(\mu,\nu)}\int_{\mathbb{R}^d\times\mathbb{R}^d}\|x-y\|^p\mathrm{d}\pi(x,y).
\end{equation}

\textbf{SW distance.}
SW leverages the fact that 
the Wasserstein distance in one dimension admits an explicit quantile
representation. In particular, for $\mu,\nu\in\mathcal{P}_p(\mathbb{R})$, the
$p$-Wasserstein distance admits the quantile representation
$
\operatorname{W}_p^p(\mu,\nu)=\int_0^1\left|F_\mu^{-1}(u)-F_\nu^{-1}(u)\right|^p\mathrm{d}u
$ \cite[Theorem~2.10]{bobkov2019one}. For $\mu,\nu\in\mathcal P_p(\mathbb R^d)$, the
$p$-sliced Wasserstein distance~\cite{rabin2012wasserstein,bonneel2015sliced,nguyen2025introduction} is defined by
\begin{equation}
\operatorname{SW}^p_p(\mu,\nu)\coloneqq\int_{\mathbb S^{d-1}}\operatorname{W}_p^p\left(\Pi^\theta_{\#}\mu,\Pi^\theta_{\#}\nu \right)\mathrm d\sigma(\theta).
\end{equation}
Thus, SW compares high-dimensional probability measures through their one-dimensional linear projections, allowing each projected Wasserstein distance to be computed using the explicit one-dimensional quantile formula. In practice, the spherical integral is approximated using $L$ directions sampled independently from $\sigma$ which is the uniform distribution over $\mathbb{S}^{d-1}$.

\textbf{Orlicz spaces and OW distance.} We first recall Orlicz spaces and the Luxemburg norm. For further details on Orlicz spaces, we refer the reader to \cite{rao1991theory,rubshtein2016foundations}. We consider an Orlicz function\footnote{Definitions of Orlicz functions vary in the literature. For instance, \cite[Definition~13.1.1]{rubshtein2016foundations} allows Orlicz functions to take the value $+\infty$ and does not require strict monotonicity. Following the OW framework of \cite{sturm2011generalized}, we restrict attention to finite-valued, strictly increasing Orlicz functions.} $\phi:[0,+\infty)\to [0,+\infty)$ that is strictly increasing and convex, with $\phi(0)=0$. Given a measure space $\left(\Omega,m\right)$ and an Orlicz function $\phi$, the Orlicz space $L^{\phi}\left(\Omega,m\right)$ consists of measurable functions $f$ such that $\int_\Omega\phi(|f|/\lambda)\mathrm{d}m<\infty$ for some $\lambda>0$. It is equipped with the Luxemburg norm~\cite{luxemburg1955banach} defined by $\|f\|_{\phi}\coloneqq\inf\left\{\lambda>0:\int_\Omega\phi\left(\frac{|f|}{\lambda}\right)\mathrm dm\le 1\right\},$ where the underlying measure $m$ is omitted from the notation whenever it is clear from context. When $\phi(t)=t^p$ with $p\ge1$, one has
$L^\phi(\Omega,m)=L^p(\Omega,m)$, and the Luxemburg norm coincides with the usual $L^p$ norm~\cite[Section~13.4]{rubshtein2016foundations}. To define the corresponding transport distance, let
\begin{equation*}
\mathcal P_{\phi}(\mathbb R^d)\coloneqq\left\{
P\in\mathcal P(\mathbb R^d):\int_{\mathbb R^d}
\phi\left(\frac{\|x\|}{\lambda}\right)\mathrm dP(x)<\infty\text{ for some }\lambda>0
\right\}.
\end{equation*}
For $\mu,\nu\in \mathcal{P}_{\phi}\left(\mathbb{R}^d\right)$ and $\pi\in\Pi(\mu,\nu)$, the transport displacement is the function $(x,y)\mapsto\|x-y\|$ on the measure space $(\mathbb R^d\times\mathbb R^d,\pi)$. The OW
distance~\cite{sturm2011generalized,kell2017interpolation,guha2023excess} is obtained by minimizing the Luxemburg norm of this displacement function over all couplings:
\begin{equation}
\operatorname{W}_{\phi}(\mu,\nu)
\coloneqq\inf_{\pi\in\Pi(\mu,\nu)}
\inf\left\{\lambda>0:
\int_{\mathbb R^d\times\mathbb R^d}
\phi\left(\frac{\|x-y\|}{\lambda}\right)
\mathrm d\pi(x,y)
\le 1\right\}.
\end{equation}
When $\phi(t)=t^p$ for $p\ge1$, one has
$\mathcal P_{\phi}(\mathbb R^d)=\mathcal P_p(\mathbb R^d)$ and
$\operatorname{W}_{\phi}=\operatorname{W}_p$.

\textbf{Fundamental function.}
By \cite[Proposition~13.3.1]{rubshtein2016foundations}, when $m$ is a
probability measure, the fundamental function associated with the Luxemburg norm is given by
\begin{equation}\label{eq:fundamentalfunction}
\omega_\phi(u)\coloneqq\frac{1}{\phi^{-1}(1/u)},\qquad u\in(0,1], \qquad \omega_\phi(0)\coloneqq0.
\end{equation}
In the one-dimensional OW setting, this fundamental function admits a
direct transport interpretation. Indeed, for any $\Delta>0$ and
$u\in[0,1]$, a direct calculation gives
\begin{equation}\label{eq:Wphi-Fundamnetalfunction}
\operatorname{W}_{\phi}
\left(\delta_0,(1-u)\delta_0+u\delta_{\Delta}\right)=\Delta\,\omega_{\phi}(u).
\end{equation}
This identity separates mass and distance: $\omega_\phi(u)$ captures the effect of the transported mass $u$, while $\Delta$ represents the transport displacement.

\section{Sliced Orlicz-Wasserstein (SOW)}
\label{sec:SOW}
We now introduce $\operatorname{SOW}$ and establish its topological, statistical, and computational properties.

\subsection{Definition}
\label{subsec:defs}
 We begin by discussing the one-dimensional OW distance, which serves as the basic building block of the later sliced construction.
 
\textbf{One-dimensional OW distance.} Let $\mu,\nu\in\mathcal{P}_{\phi}(\mathbb{R})$. Since $\phi$ is strictly increasing and convex, for each $\lambda>0$, the cost function $(x,y)\mapsto\phi(|x-y|/\lambda)$ satisfies the conditions of \cite[Theorem~2]{cambanis1976inequalities}. Thus,
\begin{equation}
\inf_{\pi\in\Pi(\mu,\nu)}\int_{\mathbb{R}\times \mathbb{R}}
\phi\left(\frac{|x-y|}{\lambda}\right)\mathrm{d}\pi(x,y)=
\int_0^1\phi\left(\frac{|F_{\mu}^{-1}(u)-F_{\nu}^{-1}(u)|}{\lambda}\right)\mathrm{d}u.
\end{equation}
Therefore,
\begin{equation}\label{eq:OW-quantile}
\operatorname{W}_{\phi}(\mu,\nu) = \inf\left\{ \lambda>0: \int_0^1 \phi\left( \frac{|F_{\mu}^{-1}(u)-F_{\nu}^{-1}(u)|}{\lambda} \right) \mathrm{d}u \leq 1 \right\}. 
\end{equation}

Building on the one-dimensional quantile representation above, we
introduce the SOW distance.
\begin{definition}For $p\geq 1$ and $\mu,\nu\in\mathcal{P}_{\phi}(\mathbb{R}^{d})$,
we define the $\operatorname{SOW}$ distance by
\begin{equation}
\operatorname{SOW}_{\phi,p}^p(\mu,\nu)\coloneqq
\int_{\mathbb{S}^{d-1}}\operatorname{W}^p_{\phi}
\left(\Pi^{\theta}_{\#}\mu,
\Pi^{\theta}_{\#}\nu \right)
\mathrm{d}\sigma(\theta).
\end{equation}
\end{definition}
Since $\Pi^\theta_{\#}\mu$ and $\Pi^\theta_{\#}\nu$ belong to $\mathcal{P}_{\phi}(\mathbb{R})$ for every $\theta\in\mathbb{S}^{d-1}$, \eqref{eq:OW-quantile}
applies directly to each projected pair. The proposed distance recovers the classical
$\operatorname{SW}_p$  as a special case. Indeed, when $\phi(t)=t^p$ with $p\ge1$, we have
$\operatorname{W}_\phi=\operatorname{W}_p$, and therefore $\operatorname{SOW}_{\phi,p}=\operatorname{SW}_p$.

\subsection{{Topological Properties}} \label{sec:topological-properties}

We first establish the metric and comparison properties of the SOW
distance.
\begin{theorem}\label{thm:SOW-metric}
Let $\phi$ be an Orlicz function and let $p\geq1$. Then
$\operatorname{SOW}_{\phi,p}$ defines a metric on
$\mathcal{P}_{\phi}(\mathbb{R}^{d})$.
\end{theorem}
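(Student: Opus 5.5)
The plan is to reduce everything to the one-dimensional quantile representation \eqref{eq:OW-quantile}. There, $\operatorname{W}_\phi(\mu,\nu)$ is the Luxemburg norm of $g_{\mu,\nu}(u)\coloneqq F_\mu^{-1}(u)-F_\nu^{-1}(u)$ in $L^\phi((0,1),\mathrm{d}u)$. Once the one-dimensional distance is a metric, the sliced distance is handled by the $L^p(\sigma)$ norm and injectivity of the slicing.

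\textbf{Step 1: finiteness and measurability.} If $\mu\in\mathcal P_\phi(\mathbb R^d)$ with $\int\phi(\|x\|/\lambda_0)\,\mathrm d\mu<\infty$, then $|\theta^\top x|\le\|x\|$ and monotonicity of $\phi$ give the same bound for $\Pi^\theta_\#\mu$, uniformly in $\theta$. Hence $F^{-1}_{\Pi^\theta_\#\mu}\in L^\phi(0,1)$, since $F_\mu^{-1}$ of a uniform variable has law $\mu$. By the triangle inequality of the Luxemburg norm,
\[
\operatorname{W}_\phi(\Pi^\theta_\#\mu,\Pi^\theta_\#\nu)\le \|F^{-1}_{\Pi^\theta_\#\mu}\|_\phi+\|F^{-1}_{\Pi^\theta_\#\nu}\|_\phi\le \lambda_\mu\cdot c+\lambda_\nu\cdot c,
\]
where $c$ is uniform in $\theta$. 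To get this, I would use convexity to rescale: if $\int\phi(|f|/\lambda)\le M$ with $M\ge1$, then $\|f\|_\phi\le M\lambda$. So the integrand is bounded and $\operatorname{SOW}$ is finite.

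For measurability in $\theta$, I would show that $\theta\mapsto \operatorname{W}_\phi(\Pi^\theta_\#\mu,\Pi^\theta_\#\nu)$ is continuous, or at least lower semicontinuous. Lower semicontinuity follows because $\theta\mapsto\Pi^\theta_\#\mu$ is weakly continuous and $\operatorname{W}_\phi$ is weakly l.s.c. The latter holds because, for fixed $\lambda$, the transport cost $\inf_\pi\int\phi(|x-y|/\lambda)\,\mathrm d\pi$ is weakly l.s.c. (standard OT stability), so the sublevel condition in $\lambda$ is closed.

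\textbf{Step 2: metric axioms.} Symmetry and nonnegativity are immediate. For the triangle inequality, in one dimension
\[
g_{\mu,\eta}=g_{\mu,\nu}+g_{\nu,\eta},
\]
so the Luxemburg norm triangle inequality (a consequence of convexity of $\phi$) gives $\operatorname{W}_\phi(\mu,\eta)\le \operatorname{W}_\phi(\mu,\nu)+\operatorname{W}_\phi(\nu,\eta)$ for each projection. Minkowski's inequality in $L^p(\mathbb S^{d-1},\sigma)$ then yields the triangle inequality for $\operatorname{SOW}_{\phi,p}$.

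\textbf{Step 3: identity of indiscernibles.} If $\operatorname{SOW}_{\phi,p}(\mu,\nu)=0$, then $\operatorname{W}_\phi(\Pi^\theta_\#\mu,\Pi^\theta_\#\nu)=0$ for $\sigma$-a.e.\ $\theta$. Since $\|\cdot\|_\phi$ is a norm on $L^\phi$, with $\|f\|_\phi=0$ forcing $f=0$ a.e.\ because $\phi$ is strictly increasing with $\phi(t)\to\infty$, the quantile functions agree a.e. Being left-continuous, they agree everywhere, so the projections coincide for a.e.\ $\theta$. By continuity of $\theta\mapsto\Pi^\theta_\#\mu$ this extends to all $\theta$. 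Then the characteristic functions satisfy $\hat\mu(t\theta)=\hat\nu(t\theta)$ for all $t,\theta$, and Cramér--Wold gives $\mu=\nu$.

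The main obstacle is the measurability and finiteness bookkeeping in Step 1, specifically the l.s.c.\ of $\operatorname{W}_\phi$ under weak convergence. If that proves delicate, I would instead prove directly that $\theta\mapsto F^{-1}_{\Pi^\theta_\#\mu}$ converges a.e.\ as $\theta$ varies, at continuity points of the quantile function. Combined with Fatou's lemma applied to $\int\phi(|g|/\lambda)$, this gives l.s.c.\ and hence Borel measurability.
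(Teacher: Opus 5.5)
Your proof is correct and has the same skeleton as the paper's. Both use the pointwise triangle inequality for the projected $\operatorname{W}_\phi$ followed by Minkowski in $L^p(\mathbb{S}^{d-1},\sigma)$. For definiteness, both pass from $\sigma$-a.e.\ $\theta$ to all $\theta$ and then apply Cram\'er--Wold. The differences lie in the auxiliary steps:
\begin{itemize}
\item \emph{Finiteness.} The paper gets it from $\operatorname{W}_\phi(\Pi^\theta_\#\mu,\Pi^\theta_\#\nu)\le\operatorname{W}_\phi(\mu,\nu)$ (Proposition~\ref{prop:SOW-OW}, via $|\langle\theta,x-y\rangle|\le\|x-y\|$). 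You instead bound the Luxemburg norms of the projected quantile functions uniformly in $\theta$ by convexity rescaling. Both work.
\item \emph{Regularity in $\theta$.} This is the more significant difference. The paper proves Lemma~\ref{lem:directional-lipschitz} using the coupling $(\langle\theta,X\rangle,\langle\eta,X\rangle)$ of $\Pi^\theta_\#\mu$ and $\Pi^\eta_\#\mu$ together with the triangle inequality for $\operatorname{W}_\phi$. This shows that $\theta\mapsto\operatorname{W}_\phi(\Pi^\theta_\#\mu,\Pi^\theta_\#\nu)$ is Lipschitz with constant $m_\phi(\mu)+m_\phi(\nu)$.
\end{itemize}

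That single estimate gives measurability directly. It also gives the a.e.-to-everywhere step: the distance vanishes on a dense set of directions, hence everywhere. So the weak lower semicontinuity of $\operatorname{W}_\phi$ via optimal-transport stability, which you flagged as the main obstacle, is never needed. The Lipschitz bound is also reused later for Proposition~\ref{prop:SOW-maxSOW} and Lemma~\ref{lem:moment-SOW}.

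Your lower-semicontinuity argument and your extension via weak continuity of $\theta\mapsto\Pi^\theta_\#\mu$ are valid, just heavier than needed. On the other side, you derive the one-dimensional triangle inequality and definiteness of $\operatorname{W}_\phi$ directly from the quantile representation and the Luxemburg norm. The paper instead cites the metric property of $\operatorname{W}_\phi$ from the literature, so your version is more self-contained on that point.
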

\begin{proposition}\label{prop:SOW-OW} Let $\phi$ be an Orlicz function and let $p\geq1$. Then 
\begin{equation*}
\operatorname{SOW}_{\phi,p}(\mu,\nu) \leq \operatorname{W}_{\phi}(\mu,\nu)  \quad \text{for all } \mu,\nu \in \mathcal{P}_{\phi}(\mathbb{R}^{d}) 
\end{equation*}
\end{proposition}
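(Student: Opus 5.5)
The plan is to show the stronger pointwise statement
\[
\operatorname{W}_{\phi}\left(\Pi^\theta_{\#}\mu,\Pi^\theta_{\#}\nu\right)\le \operatorname{W}_{\phi}(\mu,\nu)\quad\text{for every }\theta\in\mathbb{S}^{d-1},
\]
and then integrate. Since $\sigma$ is a probability measure, we will get
\[
\operatorname{SOW}^p_{\phi,p}(\mu,\nu)=\int_{\mathbb{S}^{d-1}}\operatorname{W}_\phi^p\left(\Pi^\theta_{\#}\mu,\Pi^\theta_{\#}\nu\right)\mathrm d\sigma(\theta)\le \operatorname{W}_\phi^p(\mu,\nu),
\]
and taking $p$-th roots gives the claim.

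\textbf{Pointwise bound.} Fix $\theta$ and $\lambda>\operatorname{W}_\phi(\mu,\nu)$. By the definition of $\operatorname{W}_\phi$ as a double infimum, there is a coupling $\pi\in\Pi(\mu,\nu)$ and some $\lambda'\le\lambda$ with
\[
\int\phi\left(\|x-y\|/\lambda'\right)\mathrm d\pi\le 1 .
\]
Since $\phi$ is increasing, the same inequality holds with $\lambda$ in place of $\lambda'$. Now set $\pi_\theta\coloneqq(\Pi^\theta\times\Pi^\theta)_{\#}\pi$. Its marginals are $\Pi^\theta_{\#}\mu$ and $\Pi^\theta_{\#}\nu$, so it is an admissible coupling of the projected measures. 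By Cauchy--Schwarz, $|\theta^\top x-\theta^\top y|\le\|x-y\|$, and monotonicity of $\phi$ then yields
\[
\int_{\mathbb R\times\mathbb R}\phi\left(\frac{|s-t|}{\lambda}\right)\mathrm d\pi_\theta(s,t)=\int\phi\left(\frac{|\theta^\top(x-y)|}{\lambda}\right)\mathrm d\pi\le\int\phi\left(\frac{\|x-y\|}{\lambda}\right)\mathrm d\pi\le1.
\]
Hence $\operatorname{W}_\phi(\Pi^\theta_{\#}\mu,\Pi^\theta_{\#}\nu)\le\lambda$. Letting $\lambda\downarrow\operatorname{W}_\phi(\mu,\nu)$ gives the pointwise bound.

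\textbf{Remaining checks.} These are routine. Measurability of $\theta\mapsto\operatorname{W}_\phi(\Pi^\theta_{\#}\mu,\Pi^\theta_{\#}\nu)$ is already implicit in the definition of SOW; if needed, it follows from the quantile formula \eqref{eq:OW-quantile}. Finiteness of $\operatorname{W}_\phi(\mu,\nu)$ for $\mu,\nu\in\mathcal P_\phi$ follows from convexity, via $\phi(\|x-y\|/(2\lambda))\le\tfrac12\phi(\|x\|/\lambda)+\tfrac12\phi(\|y\|/\lambda)$ under the product coupling, so the pointwise bound is not vacuous.

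I do not expect a real obstacle here. The only delicate point is handling the double infimum correctly, namely choosing a near-optimal pair $(\pi,\lambda)$ rather than assuming an optimal coupling exists. The approach above avoids this by working with an arbitrary $\lambda>\operatorname{W}_\phi(\mu,\nu)$.
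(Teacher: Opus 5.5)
Your proof is correct and follows the paper's argument. The paper likewise derives the pointwise bound $\operatorname{W}_{\phi}(\Pi^\theta_{\#}\mu,\Pi^\theta_{\#}\nu)\le\operatorname{W}_{\phi}(\mu,\nu)$ from $|\langle\theta,x-y\rangle|\le\|x-y\|$ and the monotonicity of $\phi$, then integrates against the probability measure $\sigma$; you simply make explicit the pushforward coupling $(\Pi^\theta\times\Pi^\theta)_{\#}\pi$ and the near-optimal choice of $\lambda$ that the paper leaves implicit.
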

\begin{proposition}\label{prop:SOW-phi} Let $\phi_1$ and $\phi_2$ be Orlicz functions satisfying $\phi_1(t)\leq\phi_2(t)$ for every $t \ge 0$. Then, for every $p\geq1$ and every $\mu,\nu\in\mathcal{P}_{\phi_2}(\mathbb{R}^{d})$, 
$
\operatorname{SOW}_{\phi_1,p}(\mu,\nu) \leq \operatorname{SOW}_{\phi_2,p}(\mu,\nu).     
$
\end{proposition}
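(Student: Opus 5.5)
The plan is to prove the comparison one projection at a time and then integrate over the sphere. Because $\operatorname{SOW}^p_{\phi,p}$ is the $\sigma$-integral of $\operatorname{W}^p_{\phi}(\Pi^\theta_{\#}\mu,\Pi^\theta_{\#}\nu)$, and $t\mapsto t^p$ is increasing on $[0,\infty)$, it suffices to show the pointwise inequality
\begin{equation*}
\operatorname{W}_{\phi_1}(\alpha,\beta)\le \operatorname{W}_{\phi_2}(\alpha,\beta)\qquad\text{for all }\alpha,\beta\in\mathcal P_{\phi_2}(\mathbb R).
\end{equation*}
We apply it with $\alpha=\Pi^\theta_{\#}\mu$ and $\beta=\Pi^\theta_{\#}\nu$. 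Integrating in $\theta$ and taking $p$-th roots then gives the claim.

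Before that, we check that both sides are well defined.

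\begin{itemize}
\item We need $\mathcal P_{\phi_2}(\mathbb R^d)\subseteq\mathcal P_{\phi_1}(\mathbb R^d)$. This holds because $\phi_1(\|x\|/\lambda)\le\phi_2(\|x\|/\lambda)$, so finiteness of the $\phi_2$-integral for some $\lambda$ gives finiteness of the $\phi_1$-integral for the same $\lambda$.
\item The projections $\Pi^\theta_{\#}\mu$ lie in $\mathcal P_{\phi_2}(\mathbb R)$, since $|\theta^\top x|\le\|x\|$ and $\phi_2$ is increasing.
\item Measurability of $\theta\mapsto \operatorname{W}_{\phi_i}(\Pi^\theta_{\#}\mu,\Pi^\theta_{\#}\nu)$ is needed for the integrals to make sense. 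I take this as already established for the definition and for Theorem~\ref{thm:SOW-metric}.
\end{itemize}

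For the pointwise inequality, I use the quantile representation \eqref{eq:OW-quantile}. Write $g(u)=|F_\alpha^{-1}(u)-F_\beta^{-1}(u)|$ and define the admissible sets
\begin{equation*}
\Lambda_i=\Bigl\{\lambda>0:\int_0^1\phi_i\bigl(g(u)/\lambda\bigr)\,\mathrm du\le1\Bigr\}.
\end{equation*}
Since $\phi_1\le\phi_2$ pointwise, every $\lambda\in\Lambda_2$ also lies in $\Lambda_1$, so $\Lambda_2\subseteq\Lambda_1$. An infimum over a larger set is smaller, hence $\operatorname{W}_{\phi_1}(\alpha,\beta)=\inf\Lambda_1\le\inf\Lambda_2=\operatorname{W}_{\phi_2}(\alpha,\beta)$.

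It is worth confirming that $\Lambda_2$ is nonempty, so the right-hand side is finite. The map $\lambda\mapsto\int\phi_2(g/\lambda)$ is finite for large $\lambda$ because $\phi_2$ is convex and $\alpha,\beta\in\mathcal P_{\phi_2}$. It decreases to $0$ by dominated convergence. Even without this, the inequality holds trivially when the right-hand side is $+\infty$.

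No step here is a real obstacle: the argument is monotonicity of the Luxemburg infimum in $\phi$. The only point needing care is the bookkeeping above, namely that both sides are finite and measurable on $\mathcal P_{\phi_2}$, so that the integrals over $\mathbb S^{d-1}$ are meaningful.
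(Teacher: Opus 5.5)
Your proposal is correct and follows the same route as the paper: establish $\operatorname{W}_{\phi_1}\le\operatorname{W}_{\phi_2}$ for each projected pair, then integrate over $\mathbb{S}^{d-1}$. The only differences are minor. The paper cites \cite[Lemma~3.4]{guha2023excess} for the pointwise comparison, whereas you prove it directly through the inclusion $\Lambda_2\subseteq\Lambda_1$ of admissible scales. You also make explicit the inclusion $\mathcal{P}_{\phi_2}(\mathbb{R}^d)\subseteq\mathcal{P}_{\phi_1}(\mathbb{R}^d)$, which the paper leaves implicit.
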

The proofs build on the arguments in \cite[Lemmas~3.3--3.4]{guha2023excess} and are given in Appendix~\ref{app:proof-SOW-metric-comparison}. We next study the convergence induced by $\operatorname{SOW}_{\phi,p}$. We first show that convergence in $\operatorname{SOW}_{\phi,p}$ is equivalent to uniform convergence of the projected OW distances. 
\begin{proposition} \label{prop:SOW-maxSOW} 
Let $p\in[1,\infty)$ and let $\phi$ be an Orlicz function. Let $(\mu_n)_{n\geq1}\subset\mathcal{P}_{\phi}(\mathbb{R}^{d})$, and let $\mu\in\mathcal{P}_{\phi}(\mathbb{R}^{d})$. Then $\operatorname{SOW}_{\phi,p}(\mu_n,\mu)\to0 \quad\Longleftrightarrow\quad \sup_{\theta\in\mathbb{S}^{d-1}} \operatorname{W}_{\phi} \left( \Pi^\theta_{\#}\mu_n, \Pi^\theta_{\#}\mu \right) \to0. $
\end{proposition}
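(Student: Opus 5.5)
The reverse implication is immediate: since $\sigma$ is a probability measure, $\operatorname{SOW}_{\phi,p}^p(\mu_n,\mu)\le \sup_{\theta}\operatorname{W}_\phi^p(\Pi^\theta_{\#}\mu_n,\Pi^\theta_{\#}\mu)$. For the forward implication, write $f_n(\theta)\coloneqq \operatorname{W}_\phi(\Pi^\theta_{\#}\mu_n,\Pi^\theta_{\#}\mu)$ and $S_n\coloneqq\sup_\theta f_n(\theta)$. The plan is to show that $f_n$ cannot be large at one direction and small on average. To do this I will prove a modulus-of-continuity estimate in $\theta$ whose constant is controlled by $S_n$ itself. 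I will then run the argument twice: first to show $S_n$ is bounded, then to show $S_n\to0$.

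\textbf{Step 1 (finiteness and a Luxemburg bound).} For $\eta\in\mathbb{S}^{d-1}$ and $\nu\in\mathcal P_\phi(\mathbb R^d)$, I will use the coupling $(\Pi^\eta x,0)$ to get $\operatorname{W}_\phi(\Pi^\eta_{\#}\nu,\delta_0)\le \|\langle\eta,\cdot\rangle\|_{L^\phi(\nu)}\le \|\,\|\cdot\|\,\|_{L^\phi(\nu)}<\infty$. Here I use that $\phi$ is increasing and $|\langle \eta,x\rangle|\le\|x\|$. Combined with the triangle inequality for $\operatorname{W}_\phi$ (Theorem~\ref{thm:SOW-metric} and its one-dimensional ingredients), this gives $S_n<\infty$. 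Set $M\coloneqq\|\,\|\cdot\|\,\|_{L^\phi(\mu)}$. The same triangle inequality through $\delta_0$ gives, for every unit $\eta$,
\begin{equation*}
\|\langle\eta,\cdot\rangle\|_{L^\phi(\mu_n)}=\operatorname{W}_\phi(\Pi^\eta_{\#}\mu_n,\delta_0)\le f_n(\eta)+M\le S_n+M .
\end{equation*}
The equality holds because, with one marginal equal to $\delta_0$, the only coupling is the product.

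\textbf{Step 2 (continuity in $\theta$).} For $\theta,\theta'\in\mathbb S^{d-1}$ and $\nu\in\{\mu_n,\mu\}$, I will use the coupling $(\Pi^\theta x,\Pi^{\theta'}x)_{x\sim\nu}$. Writing $\theta-\theta'=|\theta-\theta'|\eta$ and using homogeneity of the Luxemburg norm, this gives
$\operatorname{W}_\phi(\Pi^\theta_{\#}\nu,\Pi^{\theta'}_{\#}\nu)\le|\theta-\theta'|\,\|\langle\eta,\cdot\rangle\|_{L^\phi(\nu)}$. Applying the triangle inequality twice and then Step 1 yields
\begin{equation*}
f_n(\theta')\ge f_n(\theta)-|\theta-\theta'|\,(S_n+2M).
\end{equation*}
This step is the main obstacle. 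The naive bound would require a uniform-in-$n$ Orlicz moment of $\mu_n$, which is not assumed. The trick of bounding that moment by $S_n+M$ removes this requirement, at the price of a self-referential constant.

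\textbf{Step 3 (bootstrap).} Pick $\theta_n$ with $f_n(\theta_n)\ge S_n-\epsilon$, and let $C_r(\theta_n)$ be the spherical cap of radius $r$ around it. Note that $\sigma(C_r)=c_r>0$ does not depend on the center. Step 2 gives
\begin{equation*}
\operatorname{SOW}^p_{\phi,p}(\mu_n,\mu)\ge c_r\bigl(S_n-\epsilon-r(S_n+2M)\bigr)_+^p .
\end{equation*}
Taking $r=1/2$ and using that the left side tends to $0$ gives $\limsup_n S_n\le 2(\epsilon+M)$, so $S_n\le B$ for all large $n$. Reusing the inequality with arbitrary small $r$ and the bound $S_n\le B$ gives $\limsup_n S_n\le \epsilon+r(B+2M)$. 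Letting $r,\epsilon\to0$ yields $S_n\to0$. Measurability of $f_n$ is not an issue, since Step 2 shows $f_n$ is Lipschitz.
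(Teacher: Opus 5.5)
Your proof is correct. Its skeleton matches the paper's: a trivial reverse direction, then Lipschitz continuity of $\theta\mapsto \operatorname{W}_\phi(\Pi^\theta_{\#}\mu_n,\Pi^\theta_{\#}\mu)$ combined with a spherical-cap lower bound on $\operatorname{SOW}_{\phi,p}$. You differ in how you control the Lipschitz constant uniformly in $n$.

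The paper uses Lemma~\ref{lem:directional-lipschitz}, whose constant is the radial Orlicz moment $m_\phi(\mu_n)+m_\phi(\mu)$. It gets $\sup_n m_\phi(\mu_n)<\infty$ from the separate comparison Lemma~\ref{lem:moment-SOW}, $m_\phi(\rho)\le C_{d,p}\operatorname{SOW}_{\phi,p}(\rho,\delta_0)$, together with the triangle inequality for $\operatorname{SOW}_{\phi,p}$. That lemma needs the coordinate bound $\|X\|\le\sum_j|X_j|$, which costs a factor $d$, and its own cap argument.

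You instead normalize $\theta-\theta'$, so only directional moments $\|\langle\eta,\cdot\rangle\|_{L^\phi(\mu_n)}$ appear. You bound these by $S_n+M$ via the triangle inequality through $\delta_0$, and absorb the resulting self-referential constant with your bootstrap. This removes Lemma~\ref{lem:moment-SOW} and all dimension-dependent constants. It is the same self-referential device the paper uses inside the proof of Lemma~\ref{lem:moment-SOW}, where the Lipschitz constant $m_\phi(\rho)\le dM$ is controlled by the maximum $M$ itself; you apply it directly to $f_n$.

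In exchange, the paper's route gives a standalone two-sided equivalence between $m_\phi(\rho)$ and $\operatorname{SOW}_{\phi,p}(\rho,\delta_0)$. This shows that $\operatorname{SOW}$-bounded families have uniformly bounded Orlicz moments, and it provides an $n$-independent Lipschitz constant from the start.

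As a minor remark, your two passes can be merged. For every $r\in(0,1)$, the cap inequality gives $(1-r)S_n\le \epsilon+2rM+\bigl(\operatorname{SOW}^p_{\phi,p}(\mu_n,\mu)/c_r\bigr)^{1/p}$ for all $n$. Hence $(1-r)\limsup_n S_n\le \epsilon+2rM$ directly, with no separate boundedness step.
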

The key point is that the projected OW distances vary continuously with the projection direction (see Lemma~\ref{lem:directional-lipschitz}). Moreover, SOW convergence yields a uniform bound on the Orlicz moments defined in \eqref{eq:momentorlicz}. Together, these properties prevent the projected distances from remaining large on arbitrarily small sets of directions. The complete proof is given in Appendix~\ref{app:proof-SOW-topo}. As a direct consequence, convergence in $\operatorname{SOW}_{\phi,p}$ implies weak convergence.
\begin{corollary} \label{cor:SOW-weak} 
Under the assumptions of Proposition~\ref{prop:SOW-maxSOW},
$\operatorname{SOW}_{\phi,p}(\mu_n,\mu)\to0 \quad\Longrightarrow\quad \mu_n \rightharpoonup \mu.$
\end{corollary}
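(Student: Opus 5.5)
By Proposition~\ref{prop:SOW-maxSOW}, the hypothesis gives $\sup_{\theta\in\mathbb{S}^{d-1}}\operatorname{W}_{\phi}(\Pi^\theta_{\#}\mu_n,\Pi^\theta_{\#}\mu)\to0$. In particular, for every fixed $\theta$ the one-dimensional OW distance between the projections tends to zero. The plan is to prove three things in order: first, one-dimensional OW convergence implies weak convergence on $\mathbb{R}$; second, weak convergence of all projections gives pointwise convergence of characteristic functions; third, L\'evy's continuity theorem (Cram\'er--Wold) then gives $\mu_n\rightharpoonup\mu$.

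\textbf{Step 1 (one-dimensional OW $\Rightarrow$ weak).} Fix $\theta$ and write $\alpha_n=\Pi^\theta_{\#}\mu_n$, $\alpha=\Pi^\theta_{\#}\mu$, and $\lambda_n=\operatorname{W}_\phi(\alpha_n,\alpha)$. If $\lambda_n=0$ the measures coincide. Otherwise, by the quantile representation \eqref{eq:OW-quantile} and the monotone convergence/continuity of $\lambda\mapsto\int_0^1\phi(|F_{\alpha_n}^{-1}-F_\alpha^{-1}|/\lambda)\,\mathrm du$ from the right, we have $\int_0^1\phi(|F_{\alpha_n}^{-1}(u)-F_\alpha^{-1}(u)|/\lambda_n')\,\mathrm du\le1$ for any $\lambda_n'>\lambda_n$. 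Take $\lambda_n'=\lambda_n+1/n\to0$. Fix $\varepsilon>0$ and apply Markov's inequality with the strictly increasing $\phi$:
\[
\mathrm{Leb}\{u:|F_{\alpha_n}^{-1}(u)-F_\alpha^{-1}(u)|>\varepsilon\}\le \frac{1}{\phi(\varepsilon/\lambda_n')}.
\]
Since $\phi$ is convex, strictly increasing and $\phi(0)=0$, we have $\phi(t)\ge t\,\phi(1)$ for $t\ge1$, so $\phi(t)\to\infty$. Hence the right-hand side tends to $0$, i.e.\ $F_{\alpha_n}^{-1}\to F_\alpha^{-1}$ in Lebesgue measure on $(0,1)$. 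With $U$ uniform on $(0,1)$, we have $F_{\alpha_n}^{-1}(U)\sim\alpha_n$ and $F_{\alpha}^{-1}(U)\sim\alpha$. Convergence in probability implies convergence in distribution, so $\alpha_n\rightharpoonup\alpha$. Alternatively, $\mathbb{E}|g(F_{\alpha_n}^{-1}(U))-g(F_\alpha^{-1}(U))|\to0$ for bounded Lipschitz $g$, by splitting on the event above.

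\textbf{Step 2 (Cram\'er--Wold).} For any $t\in\mathbb{R}^d\setminus\{0\}$, write $t=r\theta$ with $\theta\in\mathbb{S}^{d-1}$ and $r>0$. Then $\hat\mu_n(t)=\int e^{\mathrm{i}r s}\,\mathrm d\alpha_n(s)\to\int e^{\mathrm{i} r s}\,\mathrm d\alpha(s)=\hat\mu(t)$ by Step 1, since $s\mapsto e^{\mathrm{i}rs}$ is bounded and continuous. The case $t=0$ is trivial. L\'evy's continuity theorem then yields $\mu_n\rightharpoonup\mu$.

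The only potentially delicate point is Step 1: handling the infimum in the Luxemburg norm, which is why I pass to $\lambda_n'>\lambda_n$, and ensuring $\phi(\varepsilon/\lambda_n')\to\infty$, which follows from convexity together with strict increase. Everything else is classical. Note that only the pointwise-in-$\theta$ consequence of Proposition~\ref{prop:SOW-maxSOW} is used, not uniformity. Pointwise convergence is exactly what avoids the issue that $\operatorname{SOW}$ convergence alone only gives convergence for $\sigma$-almost every $\theta$ along a subsequence.
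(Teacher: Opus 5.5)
Your proof is correct. Its outer structure matches the paper's: you use Proposition~\ref{prop:SOW-maxSOW} only for its pointwise-in-$\theta$ consequence and then conclude via Cram\'er--Wold.

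The difference lies in the one-dimensional step. The paper invokes Lemma~\ref{lem:W1-Wphi}, where Jensen's inequality gives $\operatorname{W}_1\le\phi^{-1}(1)\operatorname{W}_\phi$. It then uses the standard fact that $\operatorname{W}_1$-convergence implies weak convergence. You instead apply a Markov-type bound to the Orlicz integral, obtaining convergence in Lebesgue measure of the quantile functions, and then use the quantile coupling.

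Your route is more self-contained. It uses only $\phi(t)\to\infty$ and needs no $\operatorname{W}_1$ theory. The same Markov argument would also work with a near-optimal coupling in place of the quantile coupling. The paper's route is shorter once the lemma is available. It gives the stronger conclusion $\operatorname{W}_1(\Pi^\theta_{\#}\mu_n,\Pi^\theta_{\#}\mu)\to0$, so projected first moments converge as well. The comparison lemma is also reused later, in Appendix~\ref{app:beyondboundedsupport}.

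Your Step~2 is the standard proof of Cram\'er--Wold via L\'evy's continuity theorem, which the paper simply cites.

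A minor point: the case $\lambda_n=0$ needs no separate treatment, and right-continuity is unnecessary. Any $\lambda'>\lambda_n$ is admissible by the monotonicity of $\phi$ alone.
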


\begin{remark}
The converse implication does not hold in general, even in dimension one.
Indeed, one can construct a sequence that converges weakly while its $\operatorname{OW}$ distance from the limit remains bounded away from zero. A concrete counterexample is given in Appendix~\ref{app:SOW-weak-counterexample}. However, the converse becomes valid under a common compact-support assumption. 
\end{remark}
\begin{proposition}\label{prop:SOW-compact-topology}
Let $p\in[1,\infty)$, let $\phi$ be an Orlicz function, and let $K\subset\mathbb{R}^{d}$ be compact. Suppose that $(\mu_n)_{n\geq1}$ and $\mu$ belong to
$\mathcal{P}(K)$. Then
$
\mu_n \rightharpoonup \mu
\quad\Longleftrightarrow\quad
\operatorname{SOW}_{\phi,p}(\mu_n,\mu)\to0.
$
\end{proposition}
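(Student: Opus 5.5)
The direction ``$\Leftarrow$'' is immediate from Corollary~\ref{cor:SOW-weak}, since $\mathcal{P}(K)\subset\mathcal{P}_{\phi}(\mathbb{R}^d)$ (bounded support gives finite Orlicz moments). The plan for ``$\Rightarrow$'' is to reduce to one dimension, show uniform convergence of the projected OW distances, and integrate over the sphere. Fix $R$ with $K\subset\{\|x\|\le R\}$. Then every projection $\Pi^\theta_{\#}\mu_n$ and $\Pi^\theta_{\#}\mu$ is supported in $[-R,R]$, so all quantile differences satisfy $|F^{-1}_{\Pi^\theta_\#\mu_n}(u)-F^{-1}_{\Pi^\theta_\#\mu}(u)|\le 2R$. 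It follows that $\operatorname{W}_\phi(\Pi^\theta_{\#}\mu_n,\Pi^\theta_{\#}\mu)\le 2R/\phi^{-1}(1)$ uniformly in $n$ and $\theta$. By dominated convergence over $\sigma$, it therefore suffices to show that $\operatorname{W}_\phi(\Pi^\theta_{\#}\mu_n,\Pi^\theta_{\#}\mu)\to0$ for every fixed $\theta$. Pointwise convergence is enough, so we do not even need Proposition~\ref{prop:SOW-maxSOW}.

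Fix $\theta$. By the continuous mapping theorem, $\Pi^\theta_{\#}\mu_n\rightharpoonup\Pi^\theta_{\#}\mu$ on $[-R,R]$. The key one-dimensional step is to bound OW by a Wasserstein distance on bounded support. Write $g_n(u)=|F_n^{-1}(u)-F^{-1}(u)|\le 2R$ and fix $\epsilon>0$. We must show that $\int_0^1\phi(g_n/\epsilon)\,\mathrm{d}u\le1$ for $n$ large. Choose $\eta\in(0,1]$ with $\phi(\eta)\le 1/2$, which is possible by continuity of $\phi$ at $0$ (a convex, finite function on $[0,\infty)$ with $\phi(0)=0$ that is increasing is continuous there). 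Split according to whether $g_n\le\eta\epsilon$:
\begin{equation*}
\int_0^1\phi\left(\frac{g_n}{\epsilon}\right)\mathrm{d}u\le \phi(\eta)+\phi\left(\frac{2R}{\epsilon}\right)\,\mathrm{Leb}\{u:g_n(u)>\eta\epsilon\}.
\end{equation*}
By Markov's inequality, $\mathrm{Leb}\{g_n>\eta\epsilon\}\le \operatorname{W}_1(\Pi^\theta_{\#}\mu_n,\Pi^\theta_{\#}\mu)/(\eta\epsilon)$, using the quantile representation of $\operatorname{W}_1$. On compact sets, weak convergence is equivalent to $\operatorname{W}_1$ convergence. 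Alternatively, one can argue directly that $F_n^{-1}\to F^{-1}$ at continuity points of $F^{-1}$, hence almost everywhere, and conclude $\int g_n\to0$ by bounded convergence. Either way, the second term is at most $1/2$ for large $n$, giving $\operatorname{W}_\phi\le\epsilon$ eventually.

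The main obstacle is this one-dimensional step, where weak convergence must be converted into the Luxemburg-norm statement. The splitting argument handles it because compact support bounds $g_n$, so $\phi(2R/\epsilon)$ is a finite constant. This is exactly where the non-compact counterexample fails: without a uniform bound, small mass moved to far-away locations can keep $\int\phi(g_n/\epsilon)$ large. After this step, dominated convergence over $\theta\in\mathbb{S}^{d-1}$ yields $\operatorname{SOW}^p_{\phi,p}(\mu_n,\mu)\to0$.
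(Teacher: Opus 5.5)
Your proof is correct, and its skeleton matches the paper's. The reverse direction goes through Corollary~\ref{cor:SOW-weak}. The forward direction uses the continuous mapping theorem, $\operatorname{W}_1$-convergence of the projections on a compact interval, a uniform bound on the projected $\operatorname{W}_\phi$, and dominated convergence over $\theta$.

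The one genuine difference is the key one-dimensional step that turns $\operatorname{W}_1\to0$ into $\operatorname{W}_\phi\to0$. The paper does this with Lemma~\ref{lem:Wphi-W1-compact}. For $0\le z\le D$, convexity and $\phi(0)=0$ give the chord bound $\phi(z/t)\le (z/D)\,\phi(D/t)$. Integrating this against a $\operatorname{W}_1$-optimal coupling and choosing $t=D/\phi^{-1}(D/\operatorname{W}_1)$ yields the explicit modulus $\operatorname{W}_\phi\le D\,\omega_\phi(\operatorname{W}_1/D)$. The paper then reuses this bound in Lemma~\ref{lem:Wphi-CDF} and Theorem~\ref{thm:SOW-one-sample} to obtain sharp rates.

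Your truncation-plus-Markov argument is qualitative. It never uses the convexity inequality itself, only that $\phi$ is finite, increasing and continuous at $0$, so it is more robust with respect to the generator. The price is that the modulus it implicitly produces is cruder. For $\phi(t)=t^p$ it gives $\operatorname{W}_\phi\lesssim \operatorname{W}_1^{1/(p+1)}$ instead of $\operatorname{W}_1^{1/p}$, because on the set $\{g_n>\eta\epsilon\}$ you charge the worst-case value $\phi(2R/\epsilon)$.

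Your ``alternatively'' route makes the splitting unnecessary. Once $g_n\to0$ almost everywhere with $g_n\le 2R$, bounded convergence applied directly to $\phi(g_n/\epsilon)$ gives $\int_0^1\phi(g_n/\epsilon)\,\mathrm{d}u\to0$.
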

The proof of this property is deferred to Appendix~\ref{app:proof-SOW-topo}.
\subsection{{Statistical Properties}}\label{sec:statistical-properties}
\label{subsec:stats}
We now study plug-in estimation of $\operatorname{SOW}_{\phi,p}$. We first establish a one-sample empirical convergence bound. We then study two related two-sample estimation problems: estimation of the distance $\operatorname{SOW}_{\phi,p}$ and estimation of the powered functional $\operatorname{SOW}_{\phi,p}^{p}$. Hereafter, $\phi$ denotes a fixed Orlicz function introduced in Section~\ref{sec:background}. For the statistical and minimax results, we also fix a compact set $K\subset\mathbb R^d$ with $D\coloneqq\operatorname{diam}(K)>0$. For $\mu,\nu \in \mathcal{P}(K)$, let $\widehat{\mu}_n$ and $\widehat{\nu}_m$ denote the empirical measures based on independent samples of size $n$ and $m$ from $\mu$ and $\nu$, respectively.
\begin{theorem}\label{thm:SOW-one-sample}
Let $p\in[1,\infty)$ and $\mu\in\mathcal{P}(K)$. Then
$$\left[\mathbb{E}
\operatorname{SOW}_{\phi,p}^{p}\left(
\widehat{\mu}_n,\mu\right)
\right]^{1/p}\lesssim_{p}D\,\omega_{\phi}\left(n^{-1/2}\right).
$$
\end{theorem}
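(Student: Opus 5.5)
The plan is to prove a deterministic, direction-wise bound of the form $\operatorname{W}_\phi(\Pi^\theta_{\#}\widehat\mu_n,\Pi^\theta_{\#}\mu)\le D\,\omega_\phi\bigl(\operatorname{W}_1(\Pi^\theta_{\#}\widehat\mu_n,\Pi^\theta_{\#}\mu)/D\bigr)$. I would then transfer the classical $n^{-1/2}$ rate for the one-dimensional $\operatorname{W}_1$ through $\omega_\phi$, using a quasi-concavity property of the fundamental function, and finally integrate over $\theta$ by Fubini.

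\textbf{Step 1 (reduction to $\operatorname{W}_1$).} Fix $\theta$. Write $\alpha=\Pi^\theta_{\#}\widehat\mu_n$, $\beta=\Pi^\theta_{\#}\mu$, and $\Delta(u)=|F_\alpha^{-1}(u)-F_\beta^{-1}(u)|$. Both measures are supported in the interval $\Pi^\theta(K)$, whose length is at most $D$, so $0\le\Delta\le D$. Since $\phi$ is convex with $\phi(0)=0$, for $0\le t\le 1$ we have $\phi(t\,D/\lambda)\le t\,\phi(D/\lambda)$. Applying this with $t=\Delta(u)/D$ gives
\begin{equation*}
\int_0^1\phi\left(\frac{\Delta(u)}{\lambda}\right)\mathrm du\le \phi\left(\frac{D}{\lambda}\right)\frac{\operatorname{W}_1(\alpha,\beta)}{D},
\end{equation*}
where I use the quantile formula $\operatorname{W}_1=\int_0^1\Delta$. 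If $\operatorname{W}_1(\alpha,\beta)=0$, then $\alpha=\beta$ and there is nothing to prove. Otherwise I choose $\lambda$ with $\phi(D/\lambda)=D/\operatorname{W}_1$, that is, $\lambda=D/\phi^{-1}(D/\operatorname{W}_1)=D\,\omega_\phi(\operatorname{W}_1/D)$. Note that $\operatorname{W}_1/D\le 1$, so this lies in the domain of $\omega_\phi$. By \eqref{eq:OW-quantile}, $\operatorname{W}_\phi(\alpha,\beta)\le D\,\omega_\phi(\operatorname{W}_1(\alpha,\beta)/D)$.

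\textbf{Step 2 (quasi-concavity of $\omega_\phi$).} The function $\omega_\phi$ is increasing on $(0,1]$. Moreover, $\omega_\phi(u)/u=v/\phi^{-1}(v)$ with $v=1/u$. Because $\phi^{-1}$ is concave with $\phi^{-1}(0)=0$, the ratio $\phi^{-1}(v)/v$ is decreasing, so $\omega_\phi(u)/u$ is decreasing in $u$. Hence for any $s\in(0,1]$ and $x\in[0,1]$,
\begin{equation*}
\omega_\phi(x)\le\max\left(1,\frac{x}{s}\right)\omega_\phi(s).
\end{equation*}
I take $s=n^{-1/2}$ and $x=\operatorname{W}_1(\alpha,\beta)/D$. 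Using $\max(1,a)^p\le 1+a^p$, this yields
\begin{equation*}
\mathbb E\operatorname{W}_\phi^p(\alpha,\beta)\le D^p\omega_\phi(n^{-1/2})^p\left(1+n^{p/2}D^{-p}\,\mathbb E\operatorname{W}_1^p(\alpha,\beta)\right).
\end{equation*}

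\textbf{Step 3 (the one-dimensional $\operatorname{W}_1$ rate).} The empirical quantity satisfies $\operatorname{W}_1(\alpha,\beta)=\int|F_\alpha-F_\beta|\,\mathrm dx\le D\sup_x|F_\alpha(x)-F_\beta(x)|$. Here $\alpha$ is the empirical measure of $n$ i.i.d. draws from $\beta$. The Dvoretzky--Kiefer--Wolfowitz inequality, integrated over its tail, gives $\mathbb E\sup_x|F_\alpha-F_\beta|^p\lesssim_p n^{-p/2}$, uniformly in $\theta$ and $\mu$. Therefore the bracket in Step 2 is $\lesssim_p 1$.

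Integrating over $\theta\in\mathbb S^{d-1}$ with Fubini (measurability in $\theta$ follows from the continuity in Lemma~\ref{lem:directional-lipschitz}, or directly) and taking the $p$-th root gives the claim. The main obstacle is Step 1 together with Step 2, namely passing the rate through the nonlinear $\omega_\phi$ without a concavity assumption. The elementary convexity bound on the bounded range $[0,D]$, combined with the monotonicity of $\omega_\phi(u)/u$, is what I expect to make this work with constants depending only on $p$.
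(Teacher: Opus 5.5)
Your proof is correct, and its skeleton matches the paper's. Step~1 is Lemma~\ref{lem:Wphi-W1-compact}; you run it through the quantile formula rather than an optimal $\operatorname{W}_1$ coupling, which is equivalent in one dimension. Step~3 is the bound $\operatorname{W}_1\le D\|F_\alpha-F_\beta\|_\infty$ of Lemma~\ref{lem:Wphi-CDF} followed by DKW.

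Where you genuinely differ is in how the DKW tail is passed through the nonlinear $\omega_\phi$. The paper's Lemma~\ref{lem:omega-DKW} splits the range of $\Delta_n=\|\widehat F_{n,\theta}-F_\theta\|_\infty$ into dyadic shells $\{2^kn^{-1/2}<\Delta_n\le 2^{k+1}n^{-1/2}\}$. It bounds $\omega_\phi(\Delta_n)\le 2^{k+1}\omega_\phi(n^{-1/2})$ on each shell and sums the series $\sum_k 2^{p(k+1)}e^{-2\cdot4^k}$. Appendix~\ref{app:technicalcontribution} presents this as the Orlicz-specific ingredient.

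Your pointwise inequality $\omega_\phi(x)^p\le\bigl(1+(x/s)^p\bigr)\omega_\phi(s)^p$ is the unrounded version of the same idea: the shells are just a dyadic rounding of $\max(1,\sqrt n\,\Delta_n)$. It reduces everything to the $p$-th moment of $\sqrt n\,\operatorname{W}_1/D\le\sqrt n\,\Delta_n$. This buys three things:
\begin{itemize}
\item It is shorter.
\item It gives an explicit constant: integrating the DKW tail yields $C_p^p\le 1+p\,2^{-p/2}\Gamma(p/2)$.
\item It is slightly more general. Any bound $\mathbb{E}\operatorname{W}_1^p(\Pi^\theta_{\#}\widehat\mu_n,\Pi^\theta_{\#}\mu)\lesssim_p D^pn^{-p/2}$ suffices, with no sub-Gaussian tail required.
\end{itemize}
The paper's dyadic lemma is stated as a standalone tool for $[0,1]$-valued variables with sub-Gaussian tails, but your inequality covers that case just as well.

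Similarly, you obtain the monotonicity of $\omega_\phi(u)/u$ directly from the concavity of $\phi^{-1}$ with $\phi^{-1}(0)=0$. The paper instead cites the general quasiconcavity of fundamental functions of symmetric spaces, so your derivation is more self-contained.
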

The rate is governed by the fundamental function $\omega_{\phi}$ defined in \eqref{eq:fundamentalfunction} and has no explicit dependence on the ambient dimension $d$. The complete proof is deferred to Appendix~\ref{app:proof-SOW-statistics}.

We next consider the two-sample setting. The metric property of $\operatorname{SOW}_{\phi,p}$ and Theorem~\ref{thm:SOW-one-sample} yield the following bound. The complete proof of Corollary~\ref{cor:SOW-two-sample} is given in
Appendix~\ref{app:proof-SOW-statistics}.
\begin{corollary} \label{cor:SOW-two-sample}
Let $p\in[1,\infty)$ and let $\mu,\nu\in\mathcal{P}(K)$. Then
$$\mathbb{E} \left| \operatorname{SOW}_{\phi,p} \left( \widehat{\mu}_n,\widehat{\nu}_m \right) - \operatorname{SOW}_{\phi,p}(\mu,\nu) \right| \lesssim_p D \left[\omega_{\phi}\left(n^{-1/2}\right) + \omega_{\phi}\left(m^{-1/2}\right) \right].$$
\end{corollary}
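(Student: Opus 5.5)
The plan is to combine the reverse triangle inequality for the metric $\operatorname{SOW}_{\phi,p}$ (Theorem~\ref{thm:SOW-metric}) with the one-sample rate of Theorem~\ref{thm:SOW-one-sample}.

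First, since $\operatorname{SOW}_{\phi,p}$ is a metric on $\mathcal{P}_{\phi}(\mathbb{R}^d)$, we check that all measures involved lie in this space. Indeed $\mathcal{P}(K)\subset\mathcal{P}_{\phi}(\mathbb{R}^d)$, because $\phi$ is finite-valued and increasing and $K$ is bounded. The empirical measures $\widehat{\mu}_n$ and $\widehat{\nu}_m$ are supported in $K$ almost surely. Applying the triangle inequality twice gives, pointwise on the sample space,
\begin{equation*}
\left|\operatorname{SOW}_{\phi,p}(\widehat{\mu}_n,\widehat{\nu}_m)-\operatorname{SOW}_{\phi,p}(\mu,\nu)\right|
\le \operatorname{SOW}_{\phi,p}(\widehat{\mu}_n,\mu)+\operatorname{SOW}_{\phi,p}(\widehat{\nu}_m,\nu).
\end{equation*}
To see this, bound $\operatorname{SOW}(\widehat{\mu}_n,\widehat{\nu}_m)$ above by $\operatorname{SOW}(\widehat{\mu}_n,\mu)+\operatorname{SOW}(\mu,\nu)+\operatorname{SOW}(\nu,\widehat{\nu}_m)$. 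Symmetrically, bound $\operatorname{SOW}(\mu,\nu)$ above by the same three-term chain through the empirical measures.

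Next, take expectations and use Jensen's (or Lyapunov's) inequality with $p\ge1$:
\begin{equation*}
\mathbb{E}\operatorname{SOW}_{\phi,p}(\widehat{\mu}_n,\mu)\le\left[\mathbb{E}\operatorname{SOW}^p_{\phi,p}(\widehat{\mu}_n,\mu)\right]^{1/p}\lesssim_p D\,\omega_\phi(n^{-1/2}).
\end{equation*}
The second inequality is Theorem~\ref{thm:SOW-one-sample}. The same bound holds for $\nu$ with $m$ in place of $n$. Summing the two bounds gives the claim, with the same implicit constant depending only on $p$.

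The only point needing care is measurability of $\operatorname{SOW}_{\phi,p}(\widehat{\mu}_n,\widehat{\nu}_m)$ as a function of the samples, so that the expectation is meaningful. I would handle this by noting that it is a continuous function of the sample points. Continuity follows because, on $\mathcal{P}(K)$, $\operatorname{SOW}$ metrizes weak convergence (Proposition~\ref{prop:SOW-compact-topology}), and moving atoms continuously yields weakly continuous empirical measures. Alternatively, the triangle-inequality bound above is dominated by measurable quantities whose expectations were already controlled in Theorem~\ref{thm:SOW-one-sample}. Beyond this, the argument is routine.
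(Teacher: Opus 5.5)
Your proposal is correct and follows essentially the same route as the paper's proof: the reverse triangle inequality from the metric property of $\operatorname{SOW}_{\phi,p}$, then Jensen's inequality combined with Theorem~\ref{thm:SOW-one-sample} for each sample separately. Your measurability remark is not in the paper; the continuity argument via Proposition~\ref{prop:SOW-compact-topology} settles it, whereas the alternative ``domination'' argument would not, since an upper bound by measurable quantities does not make a function measurable.
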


We next turn to the powered functional
$\operatorname{SOW}_{\phi,p}^{p}$. Unlike the preceding distance bound, its analysis requires controlling the variation of the powered Luxemburg
norm under $L^1$ perturbations. To this end, we define
\begin{equation}\label{eq:psiphip}
\psi_{\phi,p}(u)\coloneqq \omega_\phi(u)^p
=\frac{1}{\phi^{-1}(1/u)^p}, \qquad u\in(0,1],
\end{equation}
with $\psi_{\phi,p}(0)\coloneqq0$. A direct calculation gives
$$\psi_{\phi,p}(u) = \operatorname{W}_\phi^p
\left(\delta_0,(1-u)\delta_0+u\delta_1\right), \quad u\in[0,1].
$$
Thus, $\psi_{\phi,p}(u)$ measures the powered OW response to transporting a mass fraction $u$ over a unit distance. Let $\psi_{\phi,p}^{\mathrm{cav}}$ denote the least concave majorant of $\psi_{\phi,p}$ on $[0,1]$, i.e., the smallest concave function on $[0,1]$ that dominates $\psi_{\phi,p}$. For the powered-functional estimation problem, we need to control how the powered Luxemburg norm changes when a quantile-gap profile is perturbed in $L^1$. To capture the worst-case variation, define
\begin{equation*}
\Omega_{\phi,p}(u) \coloneqq\sup_{\substack{0\le f,g\le1\\\|f-g\|_{L^1(0,1)}\le u}} \left|\|f\|_\phi^p-\|g\|_\phi^p\right|, \quad u \in [0,1].
\end{equation*}
Proposition~\ref{prop:luxemburg-L1-modulus} shows that $\Omega_{\phi,p}(u) \asymp_{p} \psi_{\phi,p}^{\mathrm{cav}}(u)$. Thus, $\psi_{\phi,p}^{\mathrm{cav}}$ gives the sharp perturbation scale underlying the following two-sample bound. The proof is provided in Appendix~\ref{app:proof-SOW-power-twosample}.

\begin{theorem}\label{thm:powerSOW-two-sample}
Let $p\in[1,\infty)$ and let $\mu,\nu\in\mathcal{P}(K)$. Then
$$\mathbb{E}\left|\operatorname{SOW}_{\phi,p}^{p}
\left(\widehat{\mu}_n,\widehat{\nu}_m
\right)-\operatorname{SOW}_{\phi,p}^{p}(\mu,\nu)
\right|\leq p2^pD^p\psi_{\phi,p}^{\mathrm{cav}}\left(\frac{1}{2}n^{-1/2}+\frac{1}{2}m^{-1/2}\right).
$$
\end{theorem}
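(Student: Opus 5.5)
Fix $\theta\in\mathbb{S}^{d-1}$ and reduce the problem to one dimension via the quantile representation \eqref{eq:OW-quantile}. Write $\Delta_\theta(u)\coloneqq|F^{-1}_{\Pi^\theta_\#\mu}(u)-F^{-1}_{\Pi^\theta_\#\nu}(u)|$ for the population quantile gap and $\widehat\Delta_\theta$ for the empirical one. Both profiles take values in $[0,D]$ because the projected supports lie in intervals of length at most $D$. Hence $f_\theta\coloneqq\Delta_\theta/D$ and $\widehat f_\theta\coloneqq\widehat\Delta_\theta/D$ lie in $[0,1]$, and by homogeneity of the Luxemburg norm $\operatorname{W}_\phi^p=D^p\|f_\theta\|_\phi^p$. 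Then
\[
\left|\operatorname{SOW}^p_{\phi,p}(\widehat\mu_n,\widehat\nu_m)-\operatorname{SOW}^p_{\phi,p}(\mu,\nu)\right|\le D^p\int_{\mathbb{S}^{d-1}}\left|\|\widehat f_\theta\|_\phi^p-\|f_\theta\|_\phi^p\right|\mathrm d\sigma(\theta)\le D^p\int \Omega_{\phi,p}\left(\|\widehat f_\theta-f_\theta\|_{L^1}\right)\mathrm d\sigma(\theta).
\]

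The key step is the upper half of Proposition~\ref{prop:luxemburg-L1-modulus}, in the explicit form $\Omega_{\phi,p}(u)\le p\,2^{p}\,\psi^{\mathrm{cav}}_{\phi,p}(u/2)$ or a similar bound. The constant $p2^p$ and the argument $\tfrac12(\cdot)$ in the statement suggest this form. To obtain it, I would first use the mean value theorem for $t\mapsto t^p$ on $[0,1]$, giving $|a^p-b^p|\le p|a-b|$ when $a,b\le 1$. This reduces the problem to bounding $\big|\|f\|_\phi-\|g\|_\phi\big|\le\|(f-g)_+\|_\phi\vee\|(g-f)_+\|_\phi$.

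For a function $h$ with $0\le h\le1$ and $\|h\|_{L^1}\le u$, I would bound $\|h\|_\phi$ by comparing with the extremal profile. Convexity of $\phi$ with $\phi(0)=0$ gives $\phi(h/\lambda)\le h\,\phi(1/\lambda)$, so $\|h\|_\phi\le 1/\phi^{-1}(1/u)=\omega_\phi(u)$; this is the indicator-function extremal case. That gives $p\,\omega_\phi(u)$ rather than the power $\psi$. To get the $p$-th power with a concave majorant, I would instead apply the powered comparison directly: convexity of $x\mapsto x^p$ and the triangle inequality give $\|f\|^p\le 2^{p-1}(\|g\|^p+\|f-g\|^p)$. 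That is too lossy, so I will accept a factor $2^p$ and organize the constants so they match. The main obstacle is this sharp perturbation bound. I will rely on Proposition~\ref{prop:luxemburg-L1-modulus} as stated, taking its constant to be $p2^p$ with argument $u/2$.

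Given that bound, concavity of $\psi^{\mathrm{cav}}_{\phi,p}$ and Jensen's inequality, applied twice (over $\theta$ and over the sampling expectation), give
\[
\mathbb E|\cdots|\le p2^pD^p\,\psi^{\mathrm{cav}}_{\phi,p}\!\left(\tfrac12\,\mathbb E\!\int\|\widehat f_\theta-f_\theta\|_{L^1}\mathrm d\sigma\right).
\]
Monotonicity of $\psi^{\mathrm{cav}}$ is also needed here; it holds because $\psi$ is increasing and nonnegative. It remains to bound the expected $L^1$ deviation of the gap profiles. By the triangle inequality in $u$, $\|\widehat f_\theta-f_\theta\|_{L^1}\le D^{-1}\big(\operatorname{W}_1(\Pi^\theta_\#\widehat\mu_n,\Pi^\theta_\#\mu)+\operatorname{W}_1(\Pi^\theta_\#\widehat\nu_m,\Pi^\theta_\#\nu)\big)$. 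In one dimension $\operatorname{W}_1=\int|F_n-F|$. Using $\mathbb E|F_n(x)-F(x)|\le\sqrt{F(1-F)/n}\le 1/(2\sqrt n)$ over an interval of length at most $D$ yields $\mathbb E\operatorname{W}_1\le D/(2\sqrt n)$. Substituting gives the argument $\tfrac12 n^{-1/2}+\tfrac12 m^{-1/2}$, possibly after adjusting where the factor $\tfrac12$ is absorbed relative to the modulus bound.
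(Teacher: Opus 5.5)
Your plan is the paper's proof. You normalize the projected quantile gaps to $[0,1]$, bound each direction by $D^p\,\Omega_{\phi,p}(\|\widehat f_\theta-f_\theta\|_{L^1})$, and apply the upper half of Proposition~\ref{prop:luxemburg-L1-modulus}. You then use concavity and monotonicity of $\psi^{\mathrm{cav}}_{\phi,p}$ with Jensen, and close with the quantile triangle inequality plus $\mathbb{E}\operatorname{W}_1(\Pi^\theta_{\#}\widehat\mu_n,\Pi^\theta_{\#}\mu)\le D/(2\sqrt n)$. Applying Jensen jointly over $\theta$ and the sample, rather than per direction as the paper does, is harmless.

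\textbf{The one fix: the form of the Proposition.} The Proposition gives $\Omega_{\phi,p}(u)\le p2^p\psi^{\mathrm{cav}}_{\phi,p}(u)$, with argument $u$, not $u/2$.
\begin{itemize}
\item The factor $\tfrac12$ in the theorem comes entirely from $\sqrt{F(1-F)}\le\tfrac12$ in your own $\operatorname{W}_1$ estimate. That estimate already yields $\mathbb{E}\|\widehat f_\theta-f_\theta\|_{L^1}\le\tfrac12 n^{-1/2}+\tfrac12 m^{-1/2}$.
\item With your extra $\tfrac12$ inside $\psi^{\mathrm{cav}}_{\phi,p}$, the chain would conclude the stronger bound $p2^pD^p\psi^{\mathrm{cav}}_{\phi,p}(\tfrac14 n^{-1/2}+\tfrac14 m^{-1/2})$.
\item That bound rests on a modulus estimate the Proposition does not supply. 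Concavity and $\psi^{\mathrm{cav}}_{\phi,p}(0)=0$ only convert the Proposition into $\Omega_{\phi,p}(u)\le p2^{p+1}\psi^{\mathrm{cav}}_{\phi,p}(u/2)$.
\end{itemize}
With the correct form, no ``adjustment'' is needed and the stated constant drops out directly.

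\textbf{Your own sketch of the modulus bound.} You were right to discard it, for two reasons.
\begin{itemize}
\item For $0\le f\le 1$, $\|f\|_\phi$ can be as large as $1/\phi^{-1}(1)$, which exceeds $1$ whenever $\phi(1)>1$. So the mean-value constant is $p\,\omega_\phi(1)^{p-1}$ rather than $p$.
\item More importantly, that route only reaches the scale $\omega_\phi(u)$. For $\phi(t)=t^p$ this is $u^{1/p}$ instead of $u$, which loses the parametric rate.
\end{itemize}
The sharp bound in the paper uses the subgradient functional $y=\phi'_+(f/\|f\|_\phi)/D_0$ of Lemma~\ref{lem:luxemburg-linear-functional}, which gives $\|f\|_\phi-\|g\|_\phi\le\int_0^1(f-g)\,y$. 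This is combined with a case split on $\|f\|_\phi$, the pointwise bound $\|f\|_\phi^{p-1}y\le 2^p\psi^{\sharp}_{\phi,p}(u)/u$ on $\{f>0\}$, and the comparison $\psi^{\sharp}_{\phi,p}\le\psi^{\mathrm{cav}}_{\phi,p}$.
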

\begin{remark}[Recovery of SW rates] \label{rem:connection-SW-rates}
When $\phi(t)=t^p$ with $p\ge1$, a direct computation yields
\begin{equation*}
\omega_\phi(u)=u^{1/p},\qquad\psi_{\phi,p}(u)=
\psi_{\phi,p}^{\mathrm{cav}}(u)=u,\qquad\operatorname{SOW}_{\phi,p}=\operatorname{SW}_p.   
\end{equation*}
Assuming $n\leq m$, Theorem~\ref{thm:SOW-one-sample} and
Corollary~\ref{cor:SOW-two-sample} give the rate $n^{-1/(2p)}$ for one-sample empirical convergence and two-sample estimation of $\operatorname{SW}_p$, respectively. These rates are consistent with those established for sliced Wasserstein distances in \cite{nadjahi2020statistical}. Moreover, Theorem~\ref{thm:powerSOW-two-sample} gives the rate $n^{-1/2}$ for the powered functional $\operatorname{SW}_p^p$, matching the parametric rate obtained under compact-support assumptions in \cite{nietert2022statistical}.
\end{remark}
\subsection{{Minimax Optimality}} \label{sec:minimax}
The preceding results provide upper bounds for the empirical plug-in estimators but do not establish their statistical optimality. We therefore study the corresponding minimax risks over $\mathcal{P}(K)$, taking the infimum over all estimators based on the observed samples, following the standard minimax framework of \cite[Chapter~2]{tsybakov2009introduction}. Matching lower bounds then show that the empirical plug-in estimators are minimax-rate optimal up to constant factors.

\begin{theorem}[Minimax optimality]
\label{thm:SOW-minimax}
Let $p\in[1,\infty)$. For one-sample distribution estimation,
$$\inf_{\widetilde{\mu}_n}\sup_{\mu\in\mathcal{P}(K)}
\left[\mathbb{E}\operatorname{SOW}^p_{\phi,p} \left(\widetilde{\mu}_n,\mu
\right)\right]^{1/p}\asymp_{d,p}D\omega_\phi\left(n^{-1/2}\right).
$$
For the two-sample problems, we have
$$\inf_{\widehat{T}}\sup_{\mu,\nu\in\mathcal{P}(K)}
\mathbb{E}\left|\widehat{T}-
\operatorname{SOW}_{\phi,p}(\mu,\nu)\right|\asymp_{d,p}D\left[
\omega_\phi\left(n^{-1/2}\right)+\omega_\phi\left(m^{-1/2}\right)
\right],
$$
and
$$\inf_{\widehat{T}}\sup_{\mu,\nu\in\mathcal{P}(K)}\mathbb{E}
\left|\widehat{T}-\operatorname{SOW}_{\phi,p}^{p}(\mu,\nu)
\right|\asymp_{d,p}D^p\psi_{\phi,p}^{\mathrm{cav}}\left(\frac{1}{2}n^{-1/2}+\frac{1}{2}m^{-1/2}\right).
$$
Here, the first infimum is taken over all estimators based on $n$ observations, while the last two are taken over all estimators based on the two independent samples of sizes $n$ and $m$.
\end{theorem}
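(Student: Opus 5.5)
The upper bounds are already available: the first display follows from Theorem~\ref{thm:SOW-one-sample} by taking $\widetilde{\mu}_n=\widehat{\mu}_n$. The second follows from Corollary~\ref{cor:SOW-two-sample}, and the third from Theorem~\ref{thm:powerSOW-two-sample}, each with the plug-in estimator. So the work is entirely in the lower bounds. For these I will use Le Cam's two-point method \cite[Chapter~2]{tsybakov2009introduction}, with perturbations supported on two points of $K$ at distance $D$. Fix $x_0,x_1\in K$ with $\|x_0-x_1\|=D$. For $u\in[0,1]$ set $P_u\coloneqq(1-u)\delta_{x_0}+u\delta_{x_1}$.

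\textbf{Reduction to one dimension.} Every projection $\Pi^\theta_{\#}P_u$ is a two-point measure on a line with gap $\Delta_\theta=|\theta^\top(x_1-x_0)|$. Its quantile function differs from that of $\Pi^\theta_{\#}P_{u'}$ only on an interval of length $|u-u'|$, and there by $\Delta_\theta$. The identity \eqref{eq:Wphi-Fundamnetalfunction} then gives
\[
\operatorname{SOW}^p_{\phi,p}(P_u,P_{u'})=c_{d,p}D^p\,\omega_\phi(|u-u'|)^p,\qquad c_{d,p}=\int|\theta_1|^p\,\mathrm d\sigma(\theta).
\]
This constant is the source of the $d$-dependence in $\asymp_{d,p}$.

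\textbf{One-sample and distance lower bounds.} Take $u=1/2$ and $u'=1/2+n^{-1/2}/4$. The KL divergence between $P_u^{\otimes n}$ and $P_{u'}^{\otimes n}$ is then $O(1)$, so the two hypotheses cannot be distinguished with probability close to one. Since $\operatorname{SOW}$ is a metric (Theorem~\ref{thm:SOW-metric}), the standard reduction applies: any estimator $\widetilde{\mu}_n$ incurs risk at least a constant times $\operatorname{SOW}_{\phi,p}(P_u,P_{u'})\gtrsim D\omega_\phi(n^{-1/2}/4)$. Concavity of $\omega_\phi$, inherited from convexity of $\phi$ via $\phi^{-1}$, gives $\omega_\phi(cu)\ge c\,\omega_\phi(u)$ for $c\le1$, which removes the constant $1/4$. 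For the distance functional, fix $\nu=\delta_{x_0}$ and compare $\mu=P_{0}$ against $\mu=P_{n^{-1/2}}$. These are again indistinguishable with $n$ samples, and the functional values differ by $c_{d,p}^{1/p}D\,\omega_\phi(n^{-1/2})$. The symmetric construction in $\nu$ gives the $m$-term, and averaging the two lower bounds yields their sum up to a factor $2$.

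\textbf{Powered functional, the main obstacle.} Here the lower bound must match the concave majorant $\psi^{\mathrm{cav}}_{\phi,p}$, not $\psi_{\phi,p}$ itself. A single two-point test only produces a separation of the form $\psi_{\phi,p}(\epsilon)$. I plan to use the identity behind Proposition~\ref{prop:luxemburg-L1-modulus}: $\psi^{\mathrm{cav}}(\epsilon)$ is, up to constants, a convex combination $a\psi(s)+(1-a)\psi(t)$ with $as+(1-a)t=\epsilon$ (Carathéodory in dimension one). The natural construction is a base mass $b$ placed at $x_1$ together with a perturbation $\epsilon$. With $\nu=\delta_{x_0}$ and $\mu\in\{P_b,P_{b+\epsilon}\}$, the functional is $c_{d,p}D^p\psi_{\phi,p}(\cdot)$, so the gap is $c_{d,p}D^p\left(\psi(b+\epsilon)-\psi(b)\right)$. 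I want the choice of $b$ to capture the chord slope of $\psi^{\mathrm{cav}}$. The indistinguishability requirement is $\epsilon^2/(b(1-b))\lesssim 1/n$, i.e.\ $\epsilon\asymp\sqrt{b/n}$ when $b$ is small.

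This is where I expect the difficulty. With $\epsilon$ constrained this way, choosing $b$ to maximize the increment of $\psi$ must reproduce $\psi^{\mathrm{cav}}(n^{-1/2})$. The natural candidate is $b$ equal to the left tangency point of the chord of $\psi^{\mathrm{cav}}$ through $n^{-1/2}$. If the $\sqrt{b/n}$ constraint proves too weak, the fallback is to put the perturbation on $\nu$ rather than $\mu$, and use three-point measures whose projected quantile gaps realize the extremal pair $(f,g)$ from Proposition~\ref{prop:luxemburg-L1-modulus}. In that case the functional gap equals $\Omega_{\phi,p}\asymp\psi^{\mathrm{cav}}$ while the hypotheses stay $n^{-1/2}$-close in total variation. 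Finally, combining the $n$- and $m$-sample constructions and using concavity of $\psi^{\mathrm{cav}}$ converts $\max\{\psi^{\mathrm{cav}}(n^{-1/2}),\psi^{\mathrm{cav}}(m^{-1/2})\}$ into $\psi^{\mathrm{cav}}\left(\frac12n^{-1/2}+\frac12m^{-1/2}\right)$ up to a factor $2$.
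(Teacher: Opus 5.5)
\textbf{Distance lower bound.} Your upper bounds and your one-sample lower bound are fine and match the paper. One small correction there: $\omega_\phi$ need not be concave. For $\phi(t)=t$ on $[0,2]$ with slope $M>1$ afterwards, the left and right derivatives of $\omega_\phi$ at $u=\frac12$ are $1/M<1$. The inequality you actually need, $\omega_\phi(cu)\ge c\,\omega_\phi(u)$ for $c\le1$, follows instead from concavity of $\phi^{-1}$ with $\phi^{-1}(0)=0$ (Lemma~\ref{lem:properties-omega-psi}).

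The two-sample distance bound fails as written. With $\nu=\delta_{x_0}$ and $\mu\in\{P_0,P_{n^{-1/2}}\}$ the hypotheses are \emph{not} indistinguishable. Under $P_0$ the point $x_1$ never appears, while under $P_{n^{-1/2}}$ it appears about $\sqrt n$ times. Indeed $\operatorname{TV}(P_0^{\otimes n},P_{n^{-1/2}}^{\otimes n})=1-(1-n^{-1/2})^n\to1$ and $n\operatorname{KL}(P_0\|P_{n^{-1/2}})\approx\sqrt n$. Near mass $0$ a Bernoulli perturbation is invisible only at scale $1/n$, which would give $\omega_\phi(1/n)$. The fix is to reuse your one-sample pair, as the paper does: take $\nu=\mu_0=P_{1/2}$ and $\mu_1=P_{1/2+1/(4\sqrt n)}$. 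Then the functional takes the values $0$ and $c_{d,p}^{1/p}D\,\omega_\phi(1/(4\sqrt n))$, with $n\operatorname{KL}(\mu_0\|\mu_1)\le 1/3$.

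\textbf{Powered functional.} This is the real gap: you give no construction, and your primary plan cannot work in general. With $\nu=\delta_{x_0}$, the position $b$ on the $\psi_{\phi,p}$-curve is also the Bernoulli parameter of $\mu$, so indistinguishability forces $\epsilon\lesssim\sqrt{b/n}+1/n$.

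A concrete failure: for $\phi(t)=t^q$ with $q>p$, $\psi_{\phi,p}(u)=u^\alpha$ with $\alpha=p/q<1$ is concave. Taking $m=n$, the target is $\psi^{\mathrm{cav}}_{\phi,p}(n^{-1/2})=n^{-\alpha/2}$. But the mean value theorem for $b\ge1/n$ and subadditivity for $b<1/n$ give $\psi_{\phi,p}(b+\epsilon)-\psi_{\phi,p}(b)\lesssim\max\{n^{-\alpha},n^{-1/2}\}$ for every admissible $(b,\epsilon)$. This is polynomially smaller than the target.

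The fallback does not rescue this. A single-sample total variation of order $n^{-1/2}$ does not make $n$-fold products indistinguishable; your distance pair is exactly such a case, and what you need is $\operatorname{KL}$ or squared Hellinger $\lesssim1/n$. Also, $K$ may contain only $x_0$ and $x_1$, so three-point measures need not exist.

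Two ideas are missing.
\begin{enumerate}
\item \emph{Baseline mass on $\nu$.} Take $\nu=\frac34\delta_{x_0}+\frac14\delta_{x_1}$ and $\mu_i=P_{1/4+q_i}$. Then $\operatorname{SOW}^p_{\phi,p}(\mu_i,\nu)=c_{d,p}D^p\psi_{\phi,p}(q_i)$ depends only on $q_i$. Meanwhile the Bernoulli parameters stay in $[\frac14,\frac34]$, so $\operatorname{KL}(\mu_0\|\mu_1)\le\frac{16}{3}(q_1-q_0)^2$.
\item \emph{Localization (Lemma~\ref{lem:parameter-powerSOW}).} One needs $0\le q_0<q_1\le\frac12$ with $q_1-q_0\le h=\frac{1}{4\sqrt n}$ and $\psi_{\phi,p}(q_1)-\psi_{\phi,p}(q_0)\ge2^{-(p+2)}\psi^{\mathrm{cav}}_{\phi,p}(h)$. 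The paper picks $v^\ast$ maximizing $\psi_{\phi,p}(v)/v$ on $[h,\frac12]$; the cutoff at $\frac12$ costs a factor $2^p$ and is what keeps $\frac14+q_i\le\frac34$. It then uses $\psi^\sharp_{\phi,p}\le\psi^{\mathrm{cav}}_{\phi,p}\le2\psi^\sharp_{\phi,p}$ and a pigeonhole argument over a partition of $[0,v^\ast]$ into pieces of length at most $h$.
\end{enumerate}
Your Carath\'eodory remark could be developed into the second item, but only the baseline-mass device makes the resulting pair usable. In the $u^\alpha$ example the lemma returns $q_0=0$ and $q_1=h$, so $\mu_0=\nu$.
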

The proof of Theorem~\ref{thm:SOW-minimax}, based on Le Cam's
two-point method
\cite{lecam1973convergence,le2012asymptotic}, is given in Appendix~\ref{app:proof-minimax}. Consequently, the empirical plug-in estimators studied in the preceding subsection are minimax-rate optimal over $\mathcal{P}(K)$. For fixed $d$, $p$, $\phi$, and $K$, the
sample-size dependence of these rates is tight up to constant factors.
\subsection{Computational Properties}
\label{subsec:computation}
In this subsection, we focus on the $\operatorname{SOW}$ between the empirical measures $\widehat{\mu}_n$ and $\widehat{\nu}_m$. The observed samples are held fixed throughout. We approximate the spherical integral by Monte Carlo sampling and compute the projected $\operatorname{OW}$ distances by bisection when no closed-form expression is available\footnote{See Appendix~\ref{app:closeform}.}.

\textbf{Monte Carlo approximation.}
For $\theta_1,\dots,\theta_L\overset{\mathrm{i.i.d.}}{\sim}\sigma$, we first approximate $\operatorname{SOW}^p_{\phi,p}\left(\widehat{\mu}_n,\widehat{\nu}_m\right)$ by
\begin{equation}\label{eq:MC-SOW}
\widehat{\operatorname{SOW}}_{\phi,p}^{p}(\widehat{\mu}_n,\widehat{\nu}_m,L)
\coloneqq \frac{1}{L}\sum_{l=1}^{L}
\operatorname{W}_{\phi}^{p}\left(\Pi^{\theta_l}_{\#}\widehat{\mu}_n,
\Pi^{\theta_l}_{\#}\widehat{\nu}_m\right).
\end{equation}

\textbf{Bisection for the one-dimensional distances.}
Fix $\theta\in\mathbb{S}^{d-1}$ and write
\begin{equation}\label{eq:projected-quantile-gap}
\Delta_{\theta}(u)\coloneqq
F^{-1}_{\Pi^{\theta}_{\#}\widehat{\mu}_n}(u)-F^{-1}_{\Pi^{\theta}_{\#}\widehat{\nu}_m}(u),
\qquad
\Psi_{\theta}(\lambda)\coloneqq
\int_{0}^{1}\phi\left(\frac{|\Delta_{\theta}(u)|}{\lambda}\right)\mathrm{d}u,
\quad \lambda>0 .
\end{equation}
By the quantile representation \eqref{eq:OW-quantile},
$\operatorname{W}_{\phi}(\Pi^{\theta}_{\#}\widehat{\mu}_n,\Pi^{\theta}_{\#}\widehat{\nu}_m)
=\inf\{\lambda>0:\Psi_{\theta}(\lambda)\leq1\}$. Since $\phi$ is
increasing, $\Psi_{\theta}$ is decreasing in $\lambda$, so the
sublevel set $\{\Psi_{\theta}\leq1\}$ is a half-line and the value of
$\operatorname{W}_{\phi}$ can be located by bisection. By Jensen's inequality and the monotonicity of $\phi$, we obtain the following initial bracket:
\begin{equation}\label{eq:bracket}
L_{\theta}\coloneqq \omega_{\phi}(1)\|\Delta_{\theta}\|_{L^{1}(0,1)}
\leq\operatorname{W}_{\phi}\left(\Pi^\theta_{\#}\widehat{\mu}_n,\Pi^\theta_{\#}\widehat{\nu}_m\right)\le
\omega_{\phi}(1)\max_{u \in (0,1)}\left|\Delta_{\theta}(u)\right|
\coloneqq U_\theta.
\end{equation}

Let $\widehat{\operatorname{W}}_{\phi}
(\Pi^{\theta}_{\#}\widehat{\mu}_n,\Pi^{\theta}_{\#}\widehat{\nu}_m,T)$ denote the upper endpoint of
the bracket produced by $T$ bisection steps started from $[L_{\theta},U_{\theta}]$ (see Algorithm~\ref{alg:SOW}). The resulting
estimator of $\operatorname{SOW}_{\phi,p}^{p}$ is
\begin{equation}\label{eq:MC-bisect-SOW}
\widehat{\operatorname{SOW}}_{\phi,p}^{p}(\widehat{\mu}_n,\widehat{\nu}_m,L,T)
\coloneqq\frac{1}{L}\sum_{l=1}^{L}
\widehat{\operatorname{W}}_{\phi}^{p}
\left(\Pi^{\theta_l}_{\#}\widehat{\mu}_n,\Pi^{\theta_l}_{\#}\widehat{\nu}_m,T\right).
\end{equation}

\begin{algorithm}[!t]
\caption{Computation of $\operatorname{SOW}_{\phi,p}^p\left(\widehat{\mu}_n,\widehat{\nu}_m\right)$}
\label{alg:SOW}
\begin{algorithmic}
\STATE \textbf{Input:} $\widehat{\mu}_n,\widehat{\nu}_m$, Orlicz function $\phi$, $p\geq1$, number of
projections $L$, number of bisection steps $T$.
\STATE Sample $\theta_1,\dots,\theta_L\overset{\mathrm{i.i.d.}}{\sim}\sigma$.
\STATE Project and sort to obtain the quantile gaps
$\Delta_{\theta_l}$, $l=1,\dots,L$.
\STATE Initialize $\ell_l \leftarrow L_{\theta_l}$ and $u_l \leftarrow U_{\theta_l}$ for $l=1,\dots,L$ by using \eqref{eq:bracket}.
\STATE Keep $u_l$ unchanged for directions with $\ell_l=u_l$.
\STATE Perform the following steps only for directions
with $\ell_l<u_l$ at initialization.
\FOR{$t=1$ to $T$}
  \STATE $v_l\leftarrow(\ell_l+u_l)/2$.
  \STATE Evaluate $\Psi_{\theta_l}(v_l)$ as in
  \eqref{eq:projected-quantile-gap}.
  \STATE $u_l\leftarrow v_l$ if $\Psi_{\theta_l}(v_l)\leq1$;
  \quad $\ell_l\leftarrow v_l$ otherwise.
\ENDFOR
\STATE \textbf{Return:} $\tfrac{1}{L}\sum_{l=1}^{L}u_l^{p}$.
\end{algorithmic}
\end{algorithm}
The following proposition bounds the approximation error of Algorithm~\ref{alg:SOW}. Its proof is given in Appendix~\ref{sec:proof-SOW-computation}.
\begin{proposition}\label{prop:SOW-computation}
The approximation returned by Algorithm~\ref{alg:SOW} satisfies
\begin{align*}
&\mathbb{E}_{\theta_1,\dots,\theta_L}\left|
\widehat{\operatorname{SOW}}_{\phi,p}^{p}(\widehat{\mu}_n,\widehat{\nu}_m,L,T)
-\operatorname{SOW}_{\phi,p}^{p}(\widehat{\mu}_n,\widehat{\nu}_m)\right| \\&\le
\frac{p}{2^{T}}\int_{\mathbb{S}^{d-1}}U_{\theta}^{p}\,\mathrm{d}\sigma(\theta)
+
\frac{1}{\sqrt{L}}\,
\operatorname{Var}_{\theta\sim\sigma}
\left[\operatorname{W}_{\phi}^{p}
\left(\Pi^{\theta}_{\#}\widehat{\mu}_n,\Pi^{\theta}_{\#}\widehat{\nu}_m\right)\right]^{1/2}.
\end{align*}
\end{proposition}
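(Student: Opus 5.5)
We split the error by the triangle inequality into a bisection error and a Monte Carlo error. Write $w_\theta\coloneqq\operatorname{W}_{\phi}(\Pi^{\theta}_{\#}\widehat{\mu}_n,\Pi^{\theta}_{\#}\widehat{\nu}_m)$ and $\widehat w_{\theta}\coloneqq\widehat{\operatorname{W}}_{\phi}(\Pi^{\theta}_{\#}\widehat{\mu}_n,\Pi^{\theta}_{\#}\widehat{\nu}_m,T)$. Then
\begin{equation*}
\left|\widehat{\operatorname{SOW}}_{\phi,p}^{p}(\cdot,L,T)-\operatorname{SOW}_{\phi,p}^{p}\right|
\le \frac1L\sum_{l=1}^L\left|\widehat w_{\theta_l}^p-w_{\theta_l}^p\right|
+\left|\frac1L\sum_{l=1}^L w_{\theta_l}^p-\int_{\mathbb S^{d-1}} w_\theta^p\,\mathrm d\sigma(\theta)\right|.
\end{equation*}
We take $\mathbb{E}_{\theta_1,\dots,\theta_L}$ of both terms and bound each one separately.

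\textbf{Bisection term.} Fix $\theta$ and first check the bracket invariant $\ell\le w_\theta\le u$ throughout Algorithm~\ref{alg:SOW}.
\begin{itemize}
\item \emph{Initialization.} By \eqref{eq:bracket}, the invariant holds at the start.
\item \emph{Update step.} $\Psi_\theta$ is decreasing. It is also continuous on $(0,\infty)$, since $\Delta_\theta$ is bounded and $\phi$ is continuous, being convex and finite. Hence $\Psi_\theta(w_\theta)\le1$, and $\Psi_\theta(v)\le1$ holds iff $v\ge w_\theta$. The update rule therefore preserves $\ell\le w_\theta\le u$.
\item \emph{Degenerate direction.} If $L_\theta=U_\theta$, then $u=w_\theta$ exactly.
\end{itemize}
After $T$ steps, $0\le \widehat w_\theta-w_\theta\le (U_\theta-L_\theta)2^{-T}\le U_\theta 2^{-T}$. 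By the mean value theorem, and using $w_\theta\le\widehat w_\theta\le U_\theta$,
\begin{equation*}
\widehat w_\theta^p-w_\theta^p\le p\,\widehat w_\theta^{p-1}(\widehat w_\theta-w_\theta)\le pU_\theta^{p-1}U_\theta2^{-T}=\frac{p}{2^T}U_\theta^p.
\end{equation*}
The $\theta_l$ are distributed according to $\sigma$. Averaging over $l$ and taking expectations gives the first term $\frac{p}{2^T}\int U_\theta^p\,\mathrm d\sigma$.

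\textbf{Monte Carlo term.} The variables $w_{\theta_l}^p$ are i.i.d. with mean $\operatorname{SOW}_{\phi,p}^p(\widehat\mu_n,\widehat\nu_m)$. They are bounded, since $w_\theta\le U_\theta\le\omega_\phi(1)\max_{i,j}\|x_i-y_j\|$, so their variance is finite. By Cauchy--Schwarz (Jensen), $\mathbb E|\bar X-\mathbb E X|\le(\operatorname{Var}\bar X)^{1/2}=L^{-1/2}\operatorname{Var}(w_\theta^p)^{1/2}$. This gives the second term.

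\textbf{Main obstacle.} The only delicate point is justifying $\Psi_\theta(w_\theta)\le1$, i.e.\ that the infimum in \eqref{eq:OW-quantile} is attained. This needs right-continuity of $\Psi_\theta$, which we get from monotone convergence, or dominated convergence since $\Delta_\theta$ is bounded. One must also handle the case $w_\theta=0$, where $\Delta_\theta=0$ a.e.: then $L_\theta=U_\theta=0$, which is covered by the degenerate-direction branch. Everything else is routine.
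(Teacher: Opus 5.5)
Your proof is correct and follows the paper's argument essentially step for step: the same triangle-inequality split, the same bisection bound $(U_\theta-L_\theta)2^{-T}\le U_\theta 2^{-T}$ combined with the mean value theorem on $[0,U_\theta]$, and the same Cauchy--Schwarz variance bound for the Monte Carlo term. You check the bracket invariant more carefully than the paper does, but the attainment of the infimum that you flag as the main obstacle is not actually needed: $\Psi_\theta(v)>1$ already forces $v\le w_\theta$ by monotonicity and the definition of the infimum, and that is all the invariant requires.
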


\textbf{Computational complexity.} For empirical measures with $n$ atoms, $\Delta_{\theta}$ is a step function with $n$ pieces, so each evaluation of $\Psi_{\theta}$ costs $\mathcal{O}(n)$
operations. Algorithm~\ref{alg:SOW} therefore runs in
$\mathcal{O}\left(L\left(dn+n\log n+Tn\right)\right)$ time and $\mathcal{O}(L(d+n))$ memory, the three terms accounting for projection, sorting, and bisection respectively.

\section{Experiments}
\label{sec:exps}

\subsection{Synthetic Studies}
\label{subsec:compare_OW}
We create data from $\mu = \mathcal{N}(0, I_d)$ and $\nu = 0.95\,\mathcal{N}(0, I_d) + 0.025\,\mathcal{N}(2.5\,e_1, I_d) + 0.025\,\mathcal{N}(-2.5\,e_1, I_d)$,
with $d=2$ and $e_1=(1,0)$. We report the computational speed of SOW ($L=100$, $p=2$) and OW using entropic
regularization (entropic coefficient $5$, up to 150 Sinkhorn iterations with $10^{-7}$ marginal error)~\cite{guha2023excess} in Figure~\ref{fig:speed}. Here, we consider
$\phi_1(x)=\frac{x^2+x^4}{2}$, and $\phi_2(x) = \frac{e^{x^2}-1}{e-1}$ as two Orlicz functions. $\phi_1$
additionally admits an exact closed-form per-projection solve for SOW (no bisection at all). SOW with
$\phi_2$ and OW (for both $\phi_1$ and $\phi_2$) are solved by bisection. The results are averaged over
5 runs. The left figure shows that SOW is much more scalable than OW. Here, we choose the
number of bisection iterations $T$ for OW and for SOW's $\phi_2$ such that we have a guaranteed error
of $10^{-9}$ or reaching $100$ iterations (the tolerance
is often reached after $\sim33$--$34$ iterations). In the figure, the computational speed of SOW with $n=10^4$ is even lower than OW with $n=100$. Moreover, the scaling rate is much lower for SOW. The
middle figure confirms that the scaling in the number of projections $L$ for SOW is linear. The right
figure presents the computational time of OW and SOW in terms of the number of bisection iterations
$T$, forced directly up to $T=100$ for SOW's $\phi_2$ and OW's bisection (rather than derived from
the tolerance above). We also observe that SOW's $\phi_2$ has a better scaling rate than OW in terms
of $T$, due to a faster computational time per iteration. SOW's $\phi_1$ curve is flat in $T$ by
construction, since its closed-form solve does not depend on $T$ at all.
\begin{figure}[t]
    \centering
    \begin{tabular}{ccc}       \includegraphics[width=0.285\linewidth]{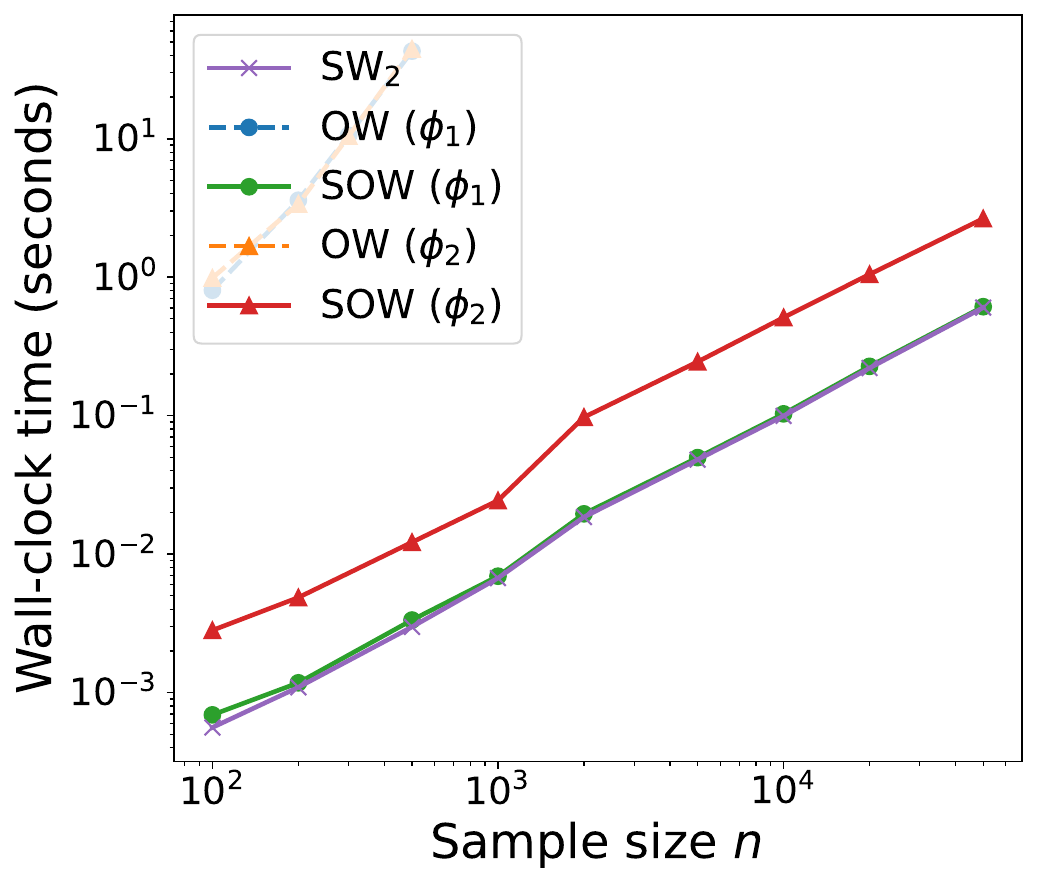}
         & 
         \includegraphics[width=0.285\linewidth]{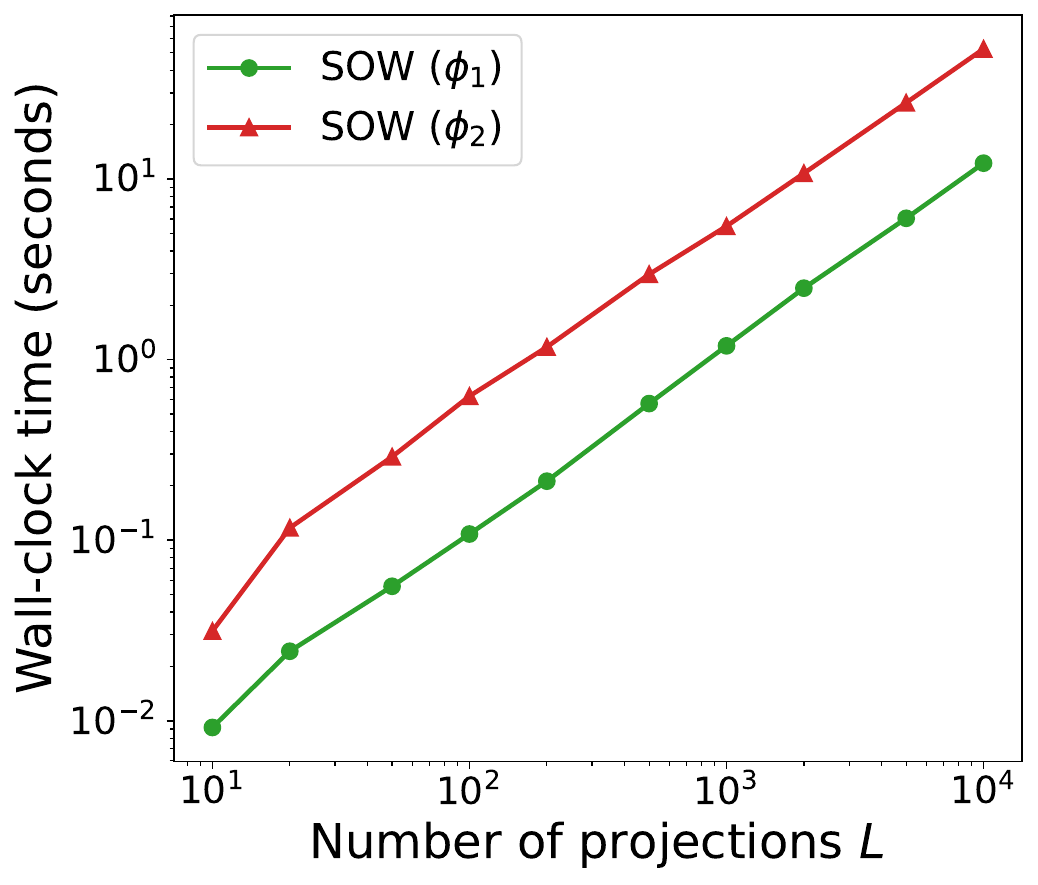}
         &\includegraphics[width=0.285\linewidth]{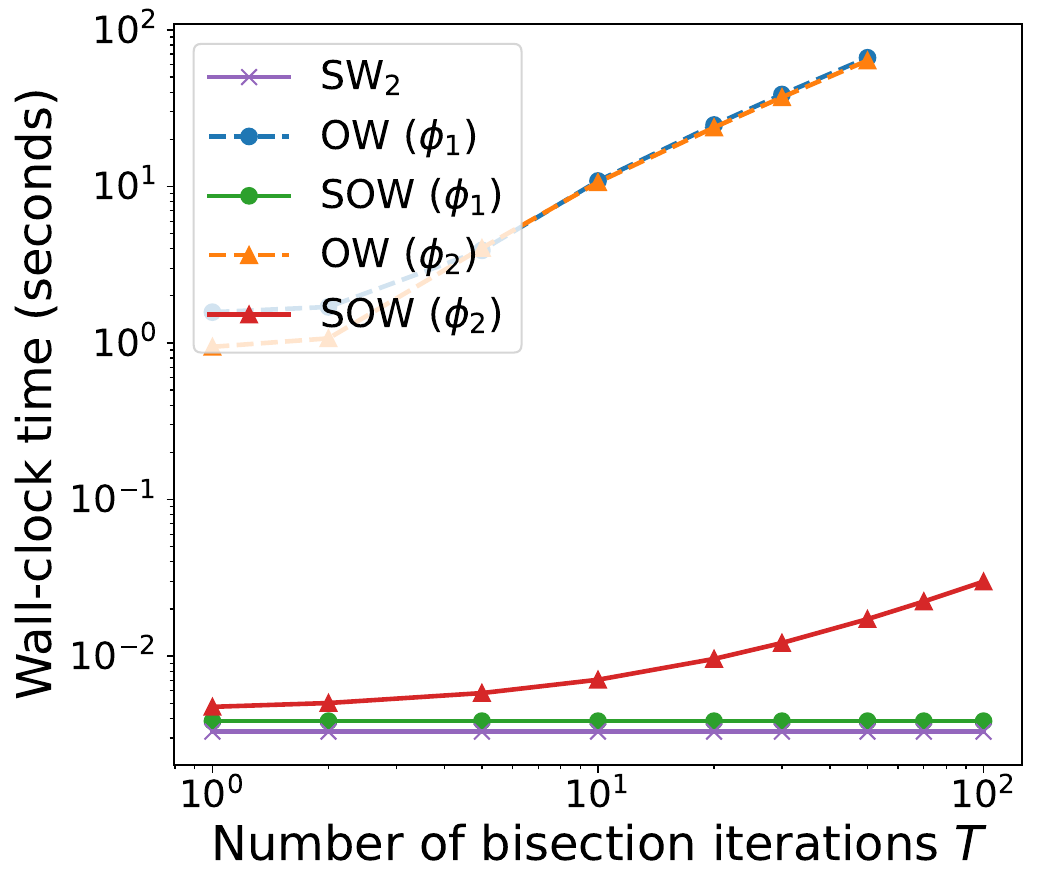}
    \end{tabular}
   \vspace{-1em}    \caption{Computational speed of SOW and OW.}
    \label{fig:speed}
\end{figure}

We further investigate the empirical sample-size, projection, and bisection errors in Figure~\ref{fig:discrimination}. For the left panel, we first generate and fix two empirical reference distributions, each consisting of $10^5$ observations. For each replication, we draw an i.i.d.\ sequence of $50,000$ observations with replacement from each fixed empirical reference distribution and use its prefixes to form nested empirical samples with sizes $n$ ranging from $100$ to $50,000$. We fix $L=100$ and use the same projection directions to evaluate each sample pair and the corresponding reference pair. The reported errors are averaged over $100$ replications. For the middle panel, we fix $n=1000$ and vary $L$ from $5$ to $5000$, using the value at $L=10^4$ on the same data and with the same projection sequence as the numerical reference. For the right panel, we fix $n=1000$ and $L=100$ and vary $T$ from $1$ to $100$ for $\operatorname{SOW}$ with $\phi_2$, with the reference computed using a tolerance of $10^{-15}$. The left panel shows decreasing empirical errors for all three distances, with $\operatorname{SOW}$ using $\phi_2$ exhibiting the slowest decay, consistent with the dependence on the Orlicz function through $\omega_{\phi}$ in Section~\ref{subsec:stats}. The middle panel shows decreasing Monte Carlo projection error as $L$ increases, while the right panel shows that the bisection error decreases rapidly and becomes very small after about $50$ iterations.


\begin{figure}[t]
    \centering
    \begin{tabular}{ccc}
\includegraphics[width=0.285\linewidth]{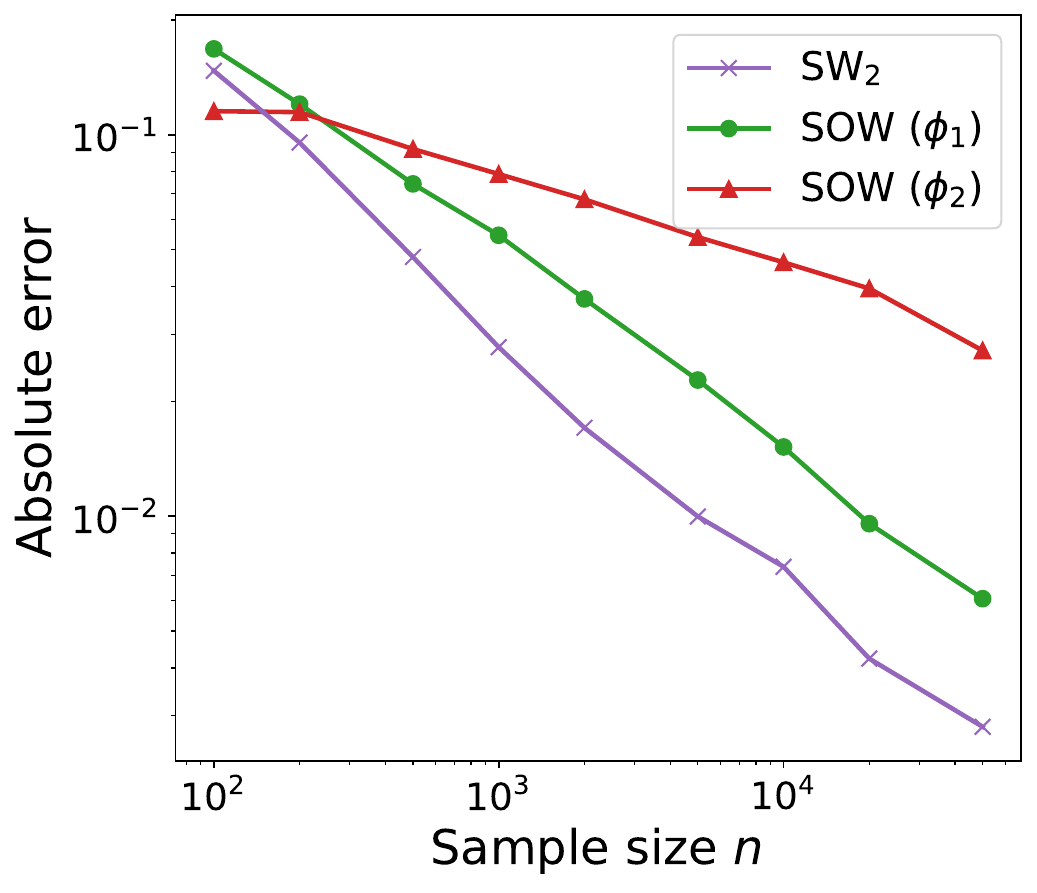}
&\includegraphics[width=0.285\linewidth]{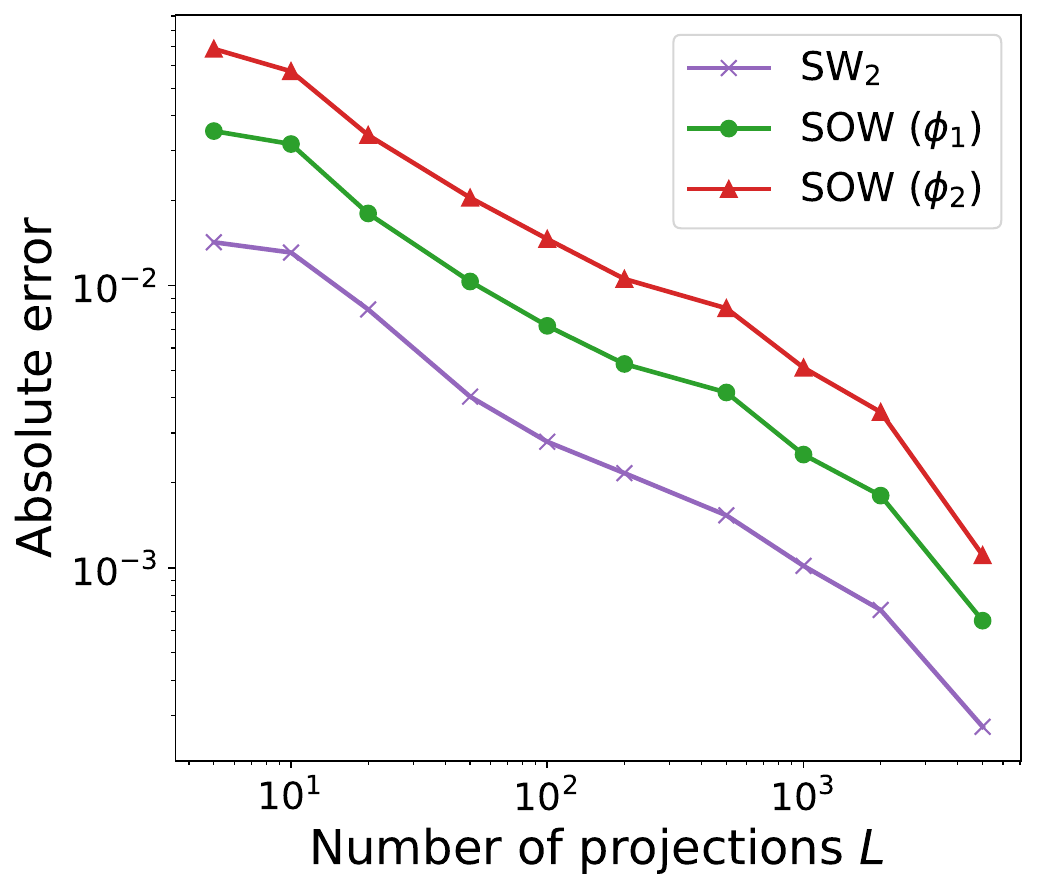}
         &    \includegraphics[width=0.285\linewidth]{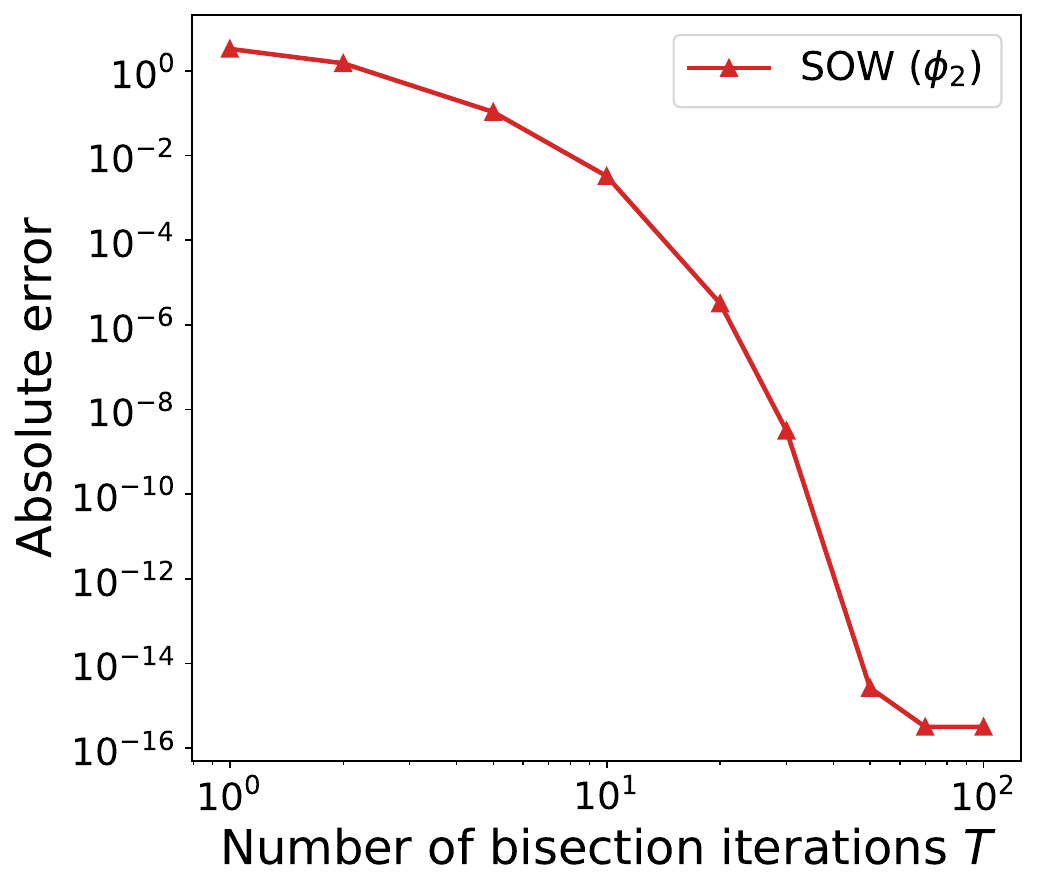}
    \end{tabular}
   \vspace{-1em}
    \caption{Sample and Monte Carlo projection errors of SW and SOW, and bisection error of SOW.}
    \label{fig:discrimination}
\end{figure}
\subsection{Two-sample Testing}
\label{subsec:testing}
We evaluate $\operatorname{SOW}$ for two-sample testing using $\operatorname{SOW}_{\phi,2}\left(\widehat{\mu}_n,\widehat{\nu}_n\right)$ as the test statistic, with $\phi_1$ and $\phi_2$ from Section~\ref{subsec:compare_OW}. We compare against $\operatorname{SW}_2$ \cite{pmlr-v300-tran26c} and MMD with RBF and Laplace kernels \cite{gretton2012kernel,schrab2025practical}, using the median heuristic for bandwidth selection. For $\operatorname{SW}$ and $\operatorname{SOW}$, we use $L=n/2$ shared projection directions within each trial. Empirical power is estimated over $200$ independent trials using permutation test with $199$ permutations at level $\alpha =0.05$, thereby controlling the finite-sample Type I error under the null exchangeability assumption \cite{hemerik2018exact}. In Figure~\ref{fig:test}, we consider three settings. For the synthetic experiments, $\mu=\mathcal{N}\left(0,I_{15}\right)$ and $\nu=0.95\mu+0.025\mathcal{N}\left(2.5e_1,I_{15}\right)+0.025\mathcal{N}\left(-2.5e_1,I_{15}\right)$. For MNIST \cite{lecun1998gradient}, following \cite{wang2021two}, we preprocess the dataset by applying a sigmoid transformation to each image so that all pixel
values lie within $[0,1]$. We then compare the processed digit-6 distribution $\mu_6$ with $0.9\mu_6+0.1\mu_9$. For CIFAR, following \cite{biggs2023mmd}, we compare the uniform distributions over CIFAR-10 \cite{krizhevsky2009learning} and CIFAR-10.1 \cite{recht2019imagenet}. Figure~\ref{fig:test} shows that suitable choices of the Orlicz function can yield higher empirical test power than $\operatorname{SW}_2$ at the same sample size.

\begin{figure}[t]
    \centering
    \begin{tabular}{ccc}
\includegraphics[width=0.285\linewidth]{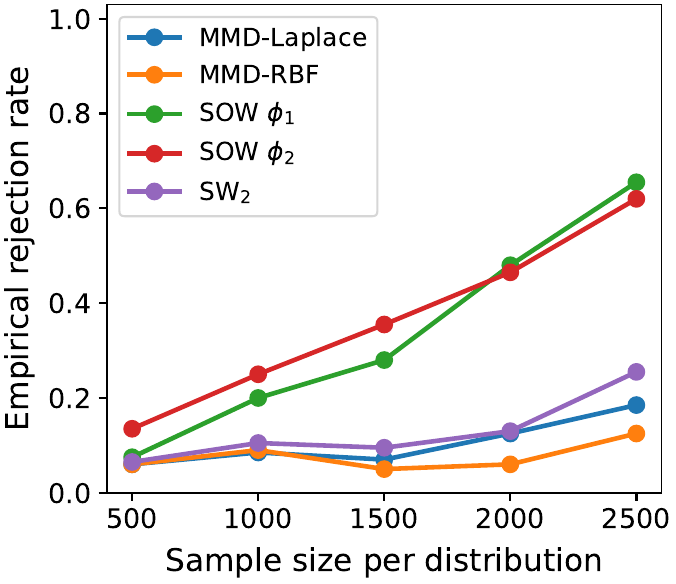}
         & 
\includegraphics[width=0.285\linewidth]{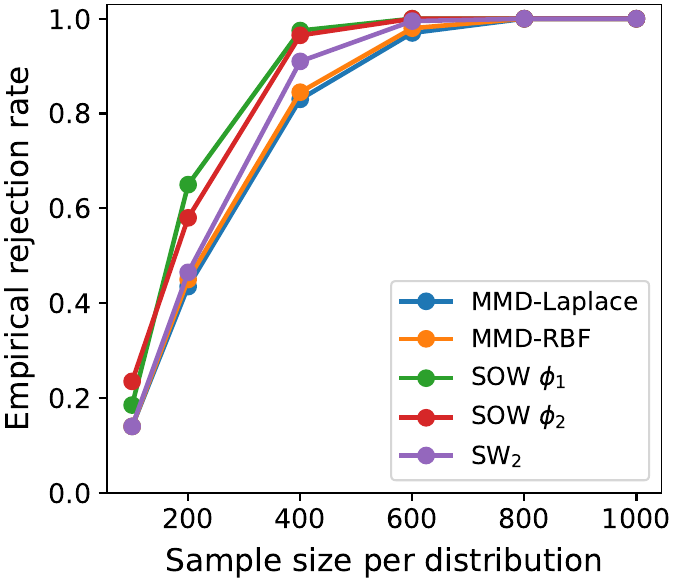}
     &\includegraphics[width=0.285\linewidth]{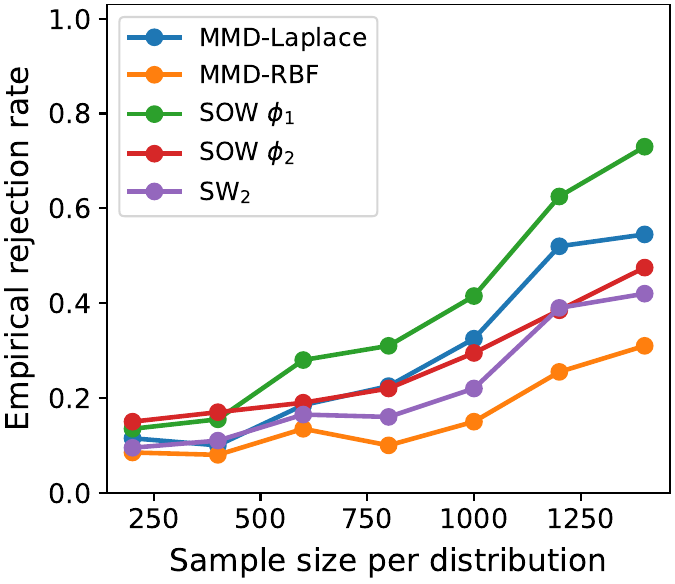}
    \end{tabular}
   \vspace{-1em}
    \caption{Power versus sample-size: synthetic data, MNIST and CIFAR-10 versus CIFAR-10.1.}
    \label{fig:test}
\end{figure}
\subsection{Evaluating Generative Models}
\label{subsec:evaluate_generative_model}


\begin{figure}[!t]
    \centering
\setlength{\belowcaptionskip}{-2pt}
\includegraphics[width=1\linewidth]{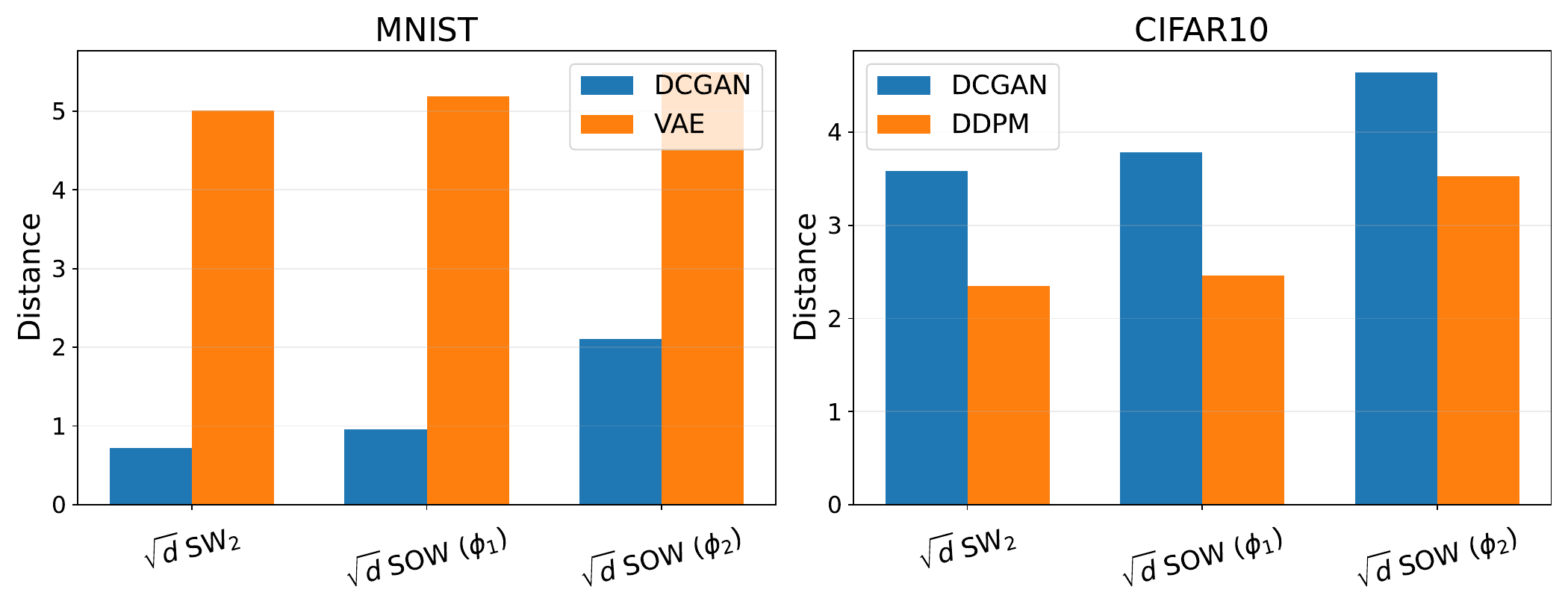}
\vspace{-1em}
    \caption{Evaluating generative models with SW and SOW.}
    \label{fig:gen}
\end{figure}
Evaluating a generative model is to measure a statistical distance between the model distribution and data distribution. The standard FID~\cite{heusel2017gans} uses Wasserstein-2 between fitted Gaussians on Inception features and
compares only their first two moments, which requires estimating a $d\times d$
covariance (hence $50$k samples for Inception features). Authors in~\cite{berthet2026mind} propose to use SW directly on the set of samples and demonstrate that SW can avoid the weaknesses of FID which is failing to separate distributions that match in mean and covariance but differ elsewhere. Since SOW is a family indexed by $\phi$, evaluating a model at several Orlicz functions of increasing growth yields a profile rather than
a single score. Fast-growing $\phi$ inflates the contribution of the quantiles at which the two projected measures are far apart, i.e., the tails and the rare or missing modes. How fast the score grows along the profile therefore says where the mismatch sits, which a single SW value cannot reveal.

In Figure~\ref{fig:gen} we compare a DCGAN~\cite{radford2015unsupervised} against a VAE~\cite{kingma2013auto} on MNIST and
against a DDPM~\cite{ho2020denoising} on CIFAR10, reporting $\sqrt{d}\,\mathrm{SW}_2$,
$\sqrt{d}\,\mathrm{SOW}(\phi_1)$ and $\sqrt{d}\,\mathrm{SOW}(\phi_2)$ with
$\phi_1(x)=\frac{x^2+x^4}{2}$ and $\phi_2(x)=\frac{e^{x^2}-1}{e-1}$. On MNIST the
DCGAN score rises considerably while VAE score increases slightly using $10,000$ samples and $L=10,000$ projections. This demonstrates that VAE's error is a large but
uniformly spread blur that every cost charges equally, whereas the DCGAN
concentrates its error on a small fraction of mass transported far, exactly what $\phi_1$ and $\phi_2$ penalize. We can then conclude that  DCGAN is better mainly where most
of the mass lies and much less so once far transport is charged heavily.  On CIFAR-10, the gaps between DCGAN and DDPM stay approximately constant. This says both models miss the data in the tails by a similar amount, and the DDPM is simply better by the same margin everywhere.

\section{Conclusion}
\label{sec:conclusion}
We introduced $\operatorname{SOW}$, a generalization of $\operatorname{SW}$ based on Orlicz geometry. We established its metric, topological, statistical, and computational properties, including minimax-optimal sample complexities under compact support and developed an approximation algorithm. Experiments show that $\operatorname{SOW}$ scales substantially better than $\operatorname{OW}$ and provide initial evidence that suitable choices of $\phi$ can increase sensitivity to discrepancies involving small amounts of mass transported over large distances. Our goal here is to demonstrate this flexibility rather than optimize $\phi$ for individual tasks. Principled and data-adaptive selection of $\phi$ is an important direction for future work. Another direction is to use $\operatorname{SOW}$ directly as a risk for model estimation, which requires differentiating through the scalar Luxemburg-norm solve and controlling its numerical error. Further extensions include non-uniform slicing distributions~\cite{nguyen2021distributional,nguyen2023energy}, non-linear projections~\cite{kolouri2019generalized,bonet2024sliced}, better Monte Carlo methods~\cite{nguyen2024quasimonte}. Extending the statistical guarantees beyond compact support is also an important direction (see Appendix~\ref{app:beyondboundedsupport} for our first step).
\section*{Acknowledgement}
Portions of this research were conducted with the advanced computing resources provided by Texas A\&M High Performance Research Computing.

\section*{AI use statement}

In this work, we used generative AI tools for editing the paper, for coding, and for brainstorming proof strategies. We take responsibility for the final content of this work,
including text, claims, or artifacts produced with the aid of generative AI.

\vspace{2em}
\appendix
\begin{center}
{\bf{\Large{Appendix to ``Sliced Orlicz-Wasserstein"}}}
\end{center}
\vspace{2em}

\section{Basic Properties of the Orlicz Quantities}
The purpose of this appendix is to collect the elementary properties that will be used later in the proofs. For further background on Orlicz
functions and spaces, we refer the reader to \cite{rao1991theory} and \cite[Section~13.1]{rubshtein2016foundations}.
\subsection{Properties of $\phi$ and $\phi^{-1}$}
\begin{lemma}[Basic properties of the Orlicz function]\label{lem:properties-Nfunctions}
Suppose that $\phi$ is the Orlicz function introduced in Section~\ref{sec:background}. Then $\phi$ is continuous on $[0,+\infty)$ and $\displaystyle \lim_{t \to +\infty}\phi(t)=+\infty$. Consequently, $\phi$ is a bijection from $[0,+\infty)$ onto $[0,+\infty)$. Its inverse $\phi^{-1}: [0,+\infty)\to [0,+\infty)$ is continuous and strictly increasing, with $\phi^{-1}(0)=0$ and $\displaystyle \lim_{s \to +\infty}\phi^{-1}(s)=+\infty$.
\end{lemma}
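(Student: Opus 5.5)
The plan is to work directly from the three defining hypotheses: $\phi$ is convex on $[0,+\infty)$, strictly increasing, and satisfies $\phi(0)=0$. We then pass to the inverse via standard facts about strictly monotone continuous functions on intervals.

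\textbf{Continuity.} A convex function on an interval is continuous at every interior point, so $\phi$ is continuous on $(0,+\infty)$. Only right-continuity at $0$ remains. For $0<t\le 1$, convexity gives $\phi(t)=\phi\bigl((1-t)\cdot 0+t\cdot 1\bigr)\le (1-t)\phi(0)+t\phi(1)=t\phi(1)$. Since $\phi\ge 0$, we get $0\le \phi(t)\le t\phi(1)\to 0=\phi(0)$ as $t\downarrow 0$.

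\textbf{Divergence at infinity.} Fix $t\ge 1$. Convexity applied to $1=(1-1/t)\cdot 0+(1/t)\cdot t$ yields $\phi(1)\le \phi(t)/t$, that is, $\phi(t)\ge t\phi(1)$. Strict monotonicity gives $\phi(1)>\phi(0)=0$. Hence $\phi(t)\to+\infty$.

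\textbf{Bijectivity.} Injectivity follows from strict monotonicity. For surjectivity onto $[0,+\infty)$, we have $\phi(0)=0$, $\phi$ is continuous, and $\phi$ is unbounded above. The intermediate value theorem then shows that every $s\ge 0$ is attained.

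\textbf{Properties of the inverse.} The inverse $\phi^{-1}$ is strictly increasing, since it is the inverse of a strictly increasing bijection, and $\phi^{-1}(0)=0$. It is continuous, because a monotone function from $[0,+\infty)$ onto the interval $[0,+\infty)$ has no jumps: a jump would leave a gap in the range. Finally, $\phi^{-1}(s)\to+\infty$ because $\phi^{-1}$ is increasing and unbounded, its range being all of $[0,+\infty)$.

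The only mildly delicate point is continuity at the endpoint $0$, since convexity by itself only guarantees continuity on the interior. The chord bound above handles it.
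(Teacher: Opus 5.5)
Your proof is correct and follows essentially the same route as the paper. Both arguments use convexity chord bounds for right-continuity at $0$ and for $\phi(t)\ge t\phi(1)$, then the intermediate value theorem and standard monotone-inverse facts; the only cosmetic difference is that you derive continuity of $\phi^{-1}$ and $\phi^{-1}(s)\to+\infty$ from its range being all of $[0,+\infty)$, whereas the paper cites a textbook inverse-function theorem and argues the limit by contradiction.
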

\begin{proof}
Since $\phi$ is finite-valued and convex, it is continuous on $(0,+\infty)$; see, for example, \cite[Theorem~10.1]{rockafellar1970convex}. It remains to verify right-continuity at zero. Fix $a>0$. For every $t\in [0,a]$, convexity and $\phi(0)=0$ give
\begin{equation*}
0 \le \phi(t) = \phi\left[\frac{t}{a}a+\left(1-\frac{t}{a}\right)0\right] \le \frac{t}{a}\phi(a).
\end{equation*}
Therefore, $\phi(t) \to 0=\phi(0)$ as $t \to 0^{+}$. Hence, $\phi$ is continuous on $[0,+\infty).$

Next, since $\phi$ is strictly increasing on $[0,+\infty)$ and $\phi(0)=0$, $\phi(1)>0$. For every $t \ge 1$, the convexity of $\phi$ gives
\begin{equation*}
\phi(1)=\phi\left[\frac{1}{t}t+\left(1-\frac{1}{t}\right)0\right] \le \frac{\phi(t)}{t}.
\end{equation*}
Then, $\phi(t) \ge t\phi(1)$. Therefore, $\phi(t) \ge t \phi(1) \to +\infty$ as $t \to +\infty$. Together with $\phi(0)=0$ and the continuity of $\phi$ on $[0,+\infty)$, this implies that $\phi\left([0,+\infty)\right)=[0,+\infty)$. Since $\phi$ is strictly increasing, it follows that $\phi$ is a bijection from $[0,+\infty)$ onto $[0,+\infty)$.


Since the inverse of a continuous strictly increasing function is continuous and strictly increasing on its image (see, for example, \cite[Theorem~5.6.5]{bartle2000introduction}), $\phi^{-1}$ is continuous and strictly increasing on $[0,+\infty)$. Moreover, $\phi^{-1}(0)=0$ as $\phi(0)=0$. Finally, suppose that $\phi^{-1}(s)$ did not tend to infinity as $s \to +\infty$. Then there would exist $M>0$ and a sequence $\left(s_n\right)_{n\ge 1}$ such that
$s_n\to+\infty$ and $\phi^{-1}(s_n)\le M$ for every $n\ge 1$. Since $\phi$ is increasing,
\begin{equation*}
s_n = \phi\left(\phi^{-1}\left(s_n\right)\right) \le \phi(M),
\end{equation*}
which contradicts $\displaystyle \lim_{n\to+\infty}s_n=+\infty$. Therefore, $\displaystyle \lim_{s \to +\infty}\phi^{-1}(s)=+\infty$.
\end{proof}
\subsection{Properties of $\omega_{\phi}$ and $\psi_{\phi,p}$}
Given an Orlicz function $\phi$ and $p \in [1,+\infty)$, we recall that the fundamental function of the Orlicz space $L^{\phi}$, equipped with the Luxemburg norm, is given by
\begin{equation*}
\omega_{\phi}(u) \coloneqq  \frac{1}{\phi^{-1}(1/u)}, \quad u\in (0,1]
\end{equation*}
and $\omega_{\phi}(0)\coloneqq 0$. Its $p$-th power is given by
\begin{equation*}
\psi_{\phi,p}(u)\coloneqq \omega_{\phi}(u)^p =\frac{1}{\phi^{-1}(1/u)^p}, \quad u\in(0,1], \quad \psi_{\phi,p}(0)=0.
\end{equation*}
Further properties of fundamental functions can be found in \cite[Section~5]{bennett1988interpolation} and \cite[Sections~10.1 and~13.3]{rubshtein2016foundations}. We state below the properties that will be used later. The following result is closely related to \cite[Corollary~5.3]{bennett1988interpolation}.
\begin{lemma}[Basic properties of $\omega_{\phi}$ and $\psi_{\phi,p}$]\label{lem:properties-omega-psi}
The functions $\omega_{\phi}$ and $\psi_{\phi,p}$ are continuous and strictly increasing on $[0,1]$. Moreover, for every $a\ge 1$ and $u\in (0,1]$ such that $au\le 1$,
\begin{equation*}
\omega_{\phi}\left(au\right) \le a \omega_{\phi}(u).
\end{equation*}
\end{lemma}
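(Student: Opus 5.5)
The plan is to derive everything from Lemma~\ref{lem:properties-Nfunctions}, which says that $\phi^{-1}$ is a continuous, strictly increasing bijection of $[0,+\infty)$ with $\phi^{-1}(s)\to+\infty$ as $s\to+\infty$.

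\textbf{Monotonicity and continuity on $(0,1]$.} The map $u\mapsto 1/u$ is continuous and strictly decreasing from $(0,1]$ onto $[1,+\infty)$. Composing with $\phi^{-1}$ gives a continuous, strictly decreasing, positive function $u\mapsto\phi^{-1}(1/u)$. Its reciprocal $\omega_\phi$ is therefore continuous and strictly increasing on $(0,1]$, with $\omega_\phi(u)>0$ there.

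\textbf{Continuity at $0$.} As $u\to0^{+}$ we have $1/u\to+\infty$, so $\phi^{-1}(1/u)\to+\infty$ by Lemma~\ref{lem:properties-Nfunctions}, and hence $\omega_\phi(u)\to0=\omega_\phi(0)$. Since $\omega_\phi(u)>0=\omega_\phi(0)$ for $u>0$, strict monotonicity extends to all of $[0,1]$. The same properties pass to $\psi_{\phi,p}=\omega_\phi^p$, because $t\mapsto t^p$ is continuous and strictly increasing on $[0,\infty)$.

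\textbf{The scaling inequality.} Write $s=1/(au)\ge1$, so that $1/u=as$. The claim $\omega_\phi(au)\le a\,\omega_\phi(u)$ is equivalent to
\[
\phi^{-1}(as)\le a\,\phi^{-1}(s),
\]
which is the statement that $\phi^{-1}$ is subadditive under scaling by $a\ge1$. To prove it, put $t=\phi^{-1}(s)$. Convexity and $\phi(0)=0$ give $\phi(t)=\phi\big(\tfrac1a(at)+(1-\tfrac1a)0\big)\le\tfrac1a\phi(at)$, that is, $\phi(at)\ge a\phi(t)=as$. Applying the increasing map $\phi^{-1}$ yields $at\ge\phi^{-1}(as)$, which is exactly the desired inequality. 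Taking reciprocals then gives the statement.

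\textbf{Main obstacle.} There is no real difficulty here. The only points needing care are the endpoint $u=0$, which is handled by the divergence of $\phi^{-1}$, and the direction of the inequality after inverting $\phi^{-1}$ and taking reciprocals.
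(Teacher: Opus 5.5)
Your proof is correct. The continuity and strict monotonicity parts follow the paper's argument essentially step by step. For the scaling inequality, however, you take a different and more elementary route.

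The paper does not argue from $\phi$ there. It observes that $\omega_\phi$ is the fundamental function of the Orlicz space with the Luxemburg norm, and cites the general fact that fundamental functions of symmetric spaces are quasiconcave. Hence $v\mapsto\omega_\phi(v)/v$ is decreasing on $(0,1]$, and comparing this ratio at $u$ and at $au$ gives $\omega_\phi(au)\le a\,\omega_\phi(u)$.

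You instead prove $\phi^{-1}(as)\le a\,\phi^{-1}(s)$ directly from convexity and $\phi(0)=0$, and carry it through the reciprocal. This is the same chord argument used in Lemma~\ref{lem:properties-Nfunctions} to obtain $\phi(t)\ge t\,\phi(1)$.

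Your version is self-contained and needs no external theory. It also loses nothing: setting $a=v/u$ for $0<u\le v\le 1$ recovers the full quasiconcavity statement that $\omega_\phi(v)/v$ is decreasing. What the paper's citation buys is brevity and the conceptual link between $\omega_\phi$ and the general theory of fundamental functions of rearrangement-invariant spaces. It is not needed for the inequality itself.
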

\begin{proof}
On $(0,1]$, the continuity of $\omega_{\phi}$ follows directly from Lemma~\ref{lem:properties-Nfunctions}. At the origin, Lemma~\ref{lem:properties-Nfunctions} gives
\begin{equation*}
\lim_{u \to 0^+}\phi^{-1}(1/u)=+\infty.
\end{equation*}
Therefore,
\begin{equation*}
\lim_{u \to 0^+} \omega_{\phi}(u) = \lim_{u \to 0^+} \frac{1}{\phi^{-1}(1/u)}=0=\omega_{\phi}(0).
\end{equation*}
Thus, $\omega_{\phi}$ is continuous on $[0,1]$.

Let $0<u_1<u_2 \le 1$. Then 
\begin{equation*}
\frac{1}{u_1} > \frac{1}{u_2}.
\end{equation*}
Since $\phi^{-1}$ is strictly increasing,
\begin{equation*}
\phi^{-1}\left(1/u_1\right) > \phi^{-1}\left(1/u_2\right),
\end{equation*}
and hence
\begin{equation*}
\omega_{\phi}(u_1) < \omega_{\phi}(u_2).
\end{equation*}
Also, $\omega_{\phi}(0)=0<\omega_{\phi}(u)$ for every $u>0$. Hence $\omega_{\phi}$ is strictly increasing on $[0,1]$.

Since $p\ge 1$, the map $r \mapsto r^p$ is continuous and strictly increasing on $[0,+\infty)$. Therefore, $\psi_{\phi,p}$ is continuous and strictly increasing on $[0,1]$.

Finally, by \cite[Proposition~13.3.1]{rubshtein2016foundations}, $\omega_{\phi}$ is the fundamental function of the corresponding Orlicz space. Moreover, by \cite[Theorem~10.4.2]{rubshtein2016foundations}, every fundamental function of a symmetric space is quasiconcave; see \cite[Definition~10.1.1]{rubshtein2016foundations} for the definition of a quasiconcave function. Therefore,
\begin{equation*}
v \mapsto \frac{\omega_{\phi}(v)}{v}
\end{equation*}is decreasing on $(0,1]$. Now, let $a\ge 1$ and $u \in (0,1]$ such that $au \le 1$. Since $u \le au$ and the map $v \mapsto \omega_{\phi}(v)/v$ is decreasing on $(0,1]$, \begin{equation*}
\frac{\omega_{\phi}(au)}{au} \le \frac{\omega_{\phi}(u)}{u}.
\end{equation*}
Hence,
\begin{equation*}
\omega_{\phi}(au)\le a \omega_{\phi}(u).
\end{equation*}
\end{proof}
\subsection{The existence and properties of $\psi^{\mathrm{cav}}_{\phi,p}$}
Recall that $\psi^{\mathrm{cav}}_{\phi,p}$ denotes the least concave majorant of $\psi_{\phi,p}$ on $[0,1]$, i.e., the smallest concave function on $[0,1]$ that dominates $\psi_{\phi,p}$. We summarize below the properties that will be used later. For further properties of $\psi^{\mathrm{cav}}_{\phi,p}$, we refer the reader to \cite{francu2017new}.
\begin{lemma}\label{lem:properties-psi-cav}
The least concave majorant $\psi^{\mathrm{cav}}_{\phi,p}$ exists and is continuous on $[0,1]$. Moreover,
\begin{equation*}
\psi^{\mathrm{cav}}_{\phi,p}(0)=0, \quad \psi^{\mathrm{cav}}_{\phi,p}(1)=\psi_{\phi,p}(1),
\end{equation*}
and $\psi^{\mathrm{cav}}_{\phi,p}$ is strictly increasing on $[0,1]$.
\end{lemma}
\begin{proof}
Let $\mathcal{G}$ denote the family of concave functions on $[0,1]$ dominating $\psi_{\phi,p}$. Since $\psi_{\phi,p}$ is strictly increasing on $[0,1]$, we have
\begin{equation*}
\psi_{\phi,p}(u)\leq \psi_{\phi,p}(1),
\quad u\in[0,1].
\end{equation*}
Hence, the constant function $G(u)\equiv \psi_{\phi,p}(1)$ is a concave majorant of $\psi_{\phi,p}$ on $[0,1]$. Therefore, $\mathcal G\neq\varnothing$. 

For $u\in[0,1]$ define
\begin{equation*}
H(u)\coloneqq \inf_{G\in\mathcal{G}}G(u).
\end{equation*}
Since every $G\in\mathcal G$ dominates $\psi_{\phi,p}$, so does $H$. Besides, since the pointwise infimum of a family of concave functions is concave \cite[Section~3.2.3]{boyd2004convex}, $H$ is concave on $[0,1]$. Thus, $H$ is a concave majorant of $\psi_{\phi,p}$. Furthermore, by construction, $H\leq G$ for every $G\in\mathcal G$. Hence, $H$ is the least concave majorant of $\psi_{\phi,p}$.

Applying \cite[Lemma~1]{francu2017new} (with $F=\psi_{\phi,p}$ and $I=[0,1]$) gives
\begin{equation*}
\psi^{\mathrm{cav}}_{\phi,p} \text{ is continuous on }[0,1], \quad \psi^{\mathrm{cav}}_{\phi,p}(0)=\psi_{\phi,p}(0)=0, \quad\psi^{\mathrm{cav}}_{\phi,p}(1)=\psi_{\phi,p}(1).
\end{equation*}
By Lemma~\ref{lem:properties-omega-psi}, $\psi_{\phi,p}$ is strictly increasing on $[0,1]$, so it attains its maximum uniquely at $1$. Therefore, applying \cite[Lemma~3]{francu2017new} yields that $\psi^{\mathrm{cav}}_{\phi,p}$ is strictly increasing on $(0,1)$. It remains only to include the two endpoints. For $u \in (0,1]$, 
\begin{equation*}
\psi^{\mathrm{cav}}_{\phi,p}(u) \ge \psi_{\phi,p}(u)>0=\psi^{\mathrm{cav}}_{\phi,p}(0).
\end{equation*}
Thus, $\psi^{\mathrm{cav}}_{\phi,p}$ is strictly increasing on $[0,1)$. Now, fix $u\in [0,1)$. Choose $w$ such that $u<w<1$. For every $v\in(w,1)$, the strict monotonicity on $[0,1)$ yields
\begin{equation*}
\psi^{\mathrm{cav}}_{\phi,p}(u)<\psi^{\mathrm{cav}}_{\phi,p}(w)<\psi^{\mathrm{cav}}_{\phi,p}(v).
\end{equation*}
Letting $v$ tend to $1^{-}$ and using continuity of $\psi^{\mathrm{cav}}_{\phi,p}$ at $1$, we obtain
\begin{equation*}
\psi^{\mathrm{cav}}_{\phi,p}(u) <\psi^{\mathrm{cav}}_{\phi,p}(w) \le \lim_{v \to 1^-}\psi^{\mathrm{cav}}_{\phi,p}(v)=\psi^{\mathrm{cav}}_{\phi,p}(1).
\end{equation*}
Thus, $\psi^{\mathrm{cav}}_{\phi,p}$ is strictly increasing on $[0,1]$.
\end{proof}
\subsection{Elementary properties of the Luxemburg norm}\label{sec:properties-Luxemburgnorm}
We record below a few short proofs, derived directly from the definition of the Luxemburg norm, of properties that will be used later. For further properties, see, for example, \cite[Chapter~13]{rubshtein2016foundations}. Recall that, given a measure space $(\Omega,m)$ and an Orlicz function $\phi$, the Luxemburg norm \cite{luxemburg1955banach} is defined by
$$
\|f\|_{\phi}\coloneqq\inf\left\{\lambda>0:
\int_\Omega\phi\left(\frac{|f|}{\lambda}\right)\mathrm{d}m\le 1\right\},
$$
where the underlying measure $m$ is omitted from the notation whenever it is clear from the context.
\begin{lemma}\label{lem:properties-luxemburgnorm}
Let $f$ and $g$ be measurable functions with finite Luxemburg norms.
\begin{enumerate}
    \item If $\left|f\right| \le \left|g\right|$ $m$-almost everywhere, then
    \begin{equation*}
    \|f\|_{\phi} \le \|g\|_{\phi}.
    \end{equation*}
    \item For every scalar $c\in \mathbb{R}$,
    \begin{equation*}
    \|cf\|_{\phi}=\left|c\right|\|f\|_{\phi}.
    \end{equation*}
\end{enumerate}
\end{lemma}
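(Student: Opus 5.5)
Both items can be read off from the set whose infimum defines the Luxemburg norm. For $h$ with $\|h\|_\phi<\infty$, write
\begin{equation*}
\Lambda(h)\coloneqq\left\{\lambda>0:\int_\Omega\phi\left(\frac{|h|}{\lambda}\right)\mathrm{d}m\le1\right\},
\end{equation*}
so that $\|h\|_\phi=\inf\Lambda(h)$. The plan is to compare these sets directly: in the first item we show an inclusion, and in the second a rescaling. Finiteness of the norms ensures that the relevant sets are nonempty, so all the infima are real numbers.

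For the first item, take $\lambda\in\Lambda(g)$. Since $|f|\le|g|$ holds $m$-a.e. and $\phi$ is increasing (Lemma~\ref{lem:properties-Nfunctions}), we have $\phi(|f|/\lambda)\le\phi(|g|/\lambda)$ $m$-a.e. Integrating gives $\int\phi(|f|/\lambda)\,\mathrm{d}m\le1$, so $\lambda\in\Lambda(f)$. Hence $\Lambda(g)\subseteq\Lambda(f)$. An infimum over a larger set is smaller, so $\|f\|_\phi\le\|g\|_\phi$.

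For the second item, we first handle $c=0$. Then $cf=0$, and $\int\phi(0)\,\mathrm{d}m=0\le1$ for every $\lambda>0$, so $\Lambda(0)=(0,\infty)$ and $\|0\|_\phi=0=|c|\,\|f\|_\phi$. For $c\neq0$ we have $|cf|/\lambda=|f|/(\lambda/|c|)$, so $\lambda\in\Lambda(cf)$ exactly when $\lambda/|c|\in\Lambda(f)$. This means $\Lambda(cf)=|c|\,\Lambda(f)$. Because $|c|>0$, multiplication by $|c|$ preserves infima, and we get $\|cf\|_\phi=|c|\,\|f\|_\phi$.

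Neither step is a genuine obstacle. The only places that need care are the degenerate case $c=0$ and the fact that the inequality $|f|\le|g|$ holds only almost everywhere. That suffices, because functions differing on a null set have the same integral.
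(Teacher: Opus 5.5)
Your proof is correct and takes the same route as the paper. Monotonicity comes from the inclusion of admissible scales $\Lambda(g)\subseteq\Lambda(f)$, and homogeneity from the rescaling $\Lambda(cf)=|c|\,\Lambda(f)$ (the paper's substitution $\lambda=|c|\eta$), with $c=0$ treated separately.
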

\begin{proof}
Let $\lambda>0$ be such that 
\begin{equation}\label{eq:forproof-lem:properties-luxemburg norm}
\int_{\Omega} \phi\left(\frac{\left|g\right|}{\lambda}\right)\mathrm{d}m\le 1.
\end{equation}
Since $\phi$ is increasing and $\left|f\right| \le \left|g\right|$ $m$-almost everywhere,
\begin{equation*}
\phi \left(\frac{\left|f\right|}{\lambda}\right) \le \phi\left(\frac{\left|g\right|}{\lambda}\right)
\end{equation*}
$m$-almost everywhere.
Hence, if $\lambda>0$ satisfies \eqref{eq:forproof-lem:properties-luxemburg norm}, then
\begin{equation*}
\int_{\Omega}\phi\left(\frac{\left|f\right|}{\lambda}\right)\mathrm{d}m \le 1.
\end{equation*}
Thus, every admissible $\lambda$ in the definition of $\|g\|_{\phi}$
is also admissible in the definition of $\|f\|_{\phi}$. Taking the infimum yields $\|f\|_{\phi}\le \|g\|_{\phi}$.

For the second assertion, the case $c=0$ is immediate. Assume $c \ne 0$. Then
\begin{align*}
\|cf\|_{\phi}&= \inf \left\{\lambda >0:\int_{\Omega} \phi\left(\frac{\left|c\right|\left|f\right|}{\lambda}\right)\mathrm{d}m\le 1\right\}\\
& = \left|c\right|\inf\left\{\eta>0:\int_{\Omega}\phi\left(\frac{\left|f\right|}{\eta}\right)\mathrm{d}m\le 1\right\}=\left|c\right|\|f\|_{\phi},
\end{align*}
where, in the second equality, we used the substitution $\lambda=|c|\eta$.
\end{proof}
\section{Proofs for Metric and Comparison Properties}
\label{app:proof-SOW-metric-comparison}
We first establish a simple regularity property of the projected
Orlicz--Wasserstein distance that will also be useful in the topological
analysis.

\begin{lemma}\label{lem:directional-lipschitz}
For $\rho\in\mathcal{P}_{\phi}(\mathbb{R}^{d})$, define
\begin{equation}\label{eq:momentorlicz}
m_{\phi}(\rho)\coloneqq \inf\left\{
\lambda>0:\int_{\mathbb{R}^{d}}
\phi\left(\frac{\|x\|}{\lambda}\right)\mathrm{d}\rho(x)\leq 1\right\}.
\end{equation}
Then, for any $\mu,\nu\in\mathcal{P}_{\phi}(\mathbb{R}^{d})$ and
$\theta,\eta\in\mathbb{S}^{d-1}$,
$$
\left|\operatorname{W}_{\phi}(\Pi^\theta_{\#}\mu,\Pi^\theta_{\#}\nu)-\operatorname{W}_{\phi}(\Pi^\eta_{\#}\mu,\Pi^\eta_{\#}\nu)\right|\leq\|\theta-\eta\|\bigl(m_{\phi}(\mu)+m_{\phi}(\nu)\bigr).
$$
\end{lemma}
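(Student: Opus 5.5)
The plan is to compare the two projected one-dimensional OW distances through the triangle inequality for $\operatorname{W}_{\phi}$ on $\mathcal{P}_{\phi}(\mathbb{R})$, together with a direct coupling bound between the projections of a single measure in two directions. First, $\operatorname{W}_{\phi}$ is a metric on $\mathcal{P}_{\phi}(\mathbb{R})$; this is classical \cite{sturm2011generalized}, and in one dimension it also follows from the quantile representation \eqref{eq:OW-quantile} together with Minkowski's inequality for the Luxemburg norm on $L^\phi((0,1),\mathrm{d}u)$. Applying the triangle inequality twice gives
\begin{equation*}
\left|\operatorname{W}_{\phi}(\Pi^\theta_{\#}\mu,\Pi^\theta_{\#}\nu)-\operatorname{W}_{\phi}(\Pi^\eta_{\#}\mu,\Pi^\eta_{\#}\nu)\right|\le \operatorname{W}_{\phi}(\Pi^\theta_{\#}\mu,\Pi^\eta_{\#}\mu)+\operatorname{W}_{\phi}(\Pi^\theta_{\#}\nu,\Pi^\eta_{\#}\nu).
\end{equation*}
So it suffices to show that $\operatorname{W}_{\phi}(\Pi^\theta_{\#}\rho,\Pi^\eta_{\#}\rho)\le \|\theta-\eta\|\,m_{\phi}(\rho)$ for every $\rho\in\mathcal{P}_{\phi}(\mathbb{R}^d)$.

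For this single-measure bound, use the coupling $\pi\coloneqq(\Pi^\theta,\Pi^\eta)_{\#}\rho\in\Pi(\Pi^\theta_{\#}\rho,\Pi^\eta_{\#}\rho)$. Its displacement satisfies
\begin{equation*}
|\theta^\top x-\eta^\top x|\le\|\theta-\eta\|\,\|x\|
\end{equation*}
by Cauchy--Schwarz. Therefore, by the definition of $\operatorname{W}_{\phi}$ as an infimum over couplings, followed by Lemma~\ref{lem:properties-luxemburgnorm} (monotonicity and homogeneity) on the measure space $(\mathbb{R}^d,\rho)$,
\begin{equation*}
\operatorname{W}_{\phi}(\Pi^\theta_{\#}\rho,\Pi^\eta_{\#}\rho)\le \bigl\|\,|(\theta-\eta)^\top x|\,\bigr\|_{\phi,\rho}\le\|\theta-\eta\|\,\bigl\|\,\|x\|\,\bigr\|_{\phi,\rho}=\|\theta-\eta\|\,m_{\phi}(\rho).
\end{equation*}
Two points need care here. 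The first is rewriting the Luxemburg norm of the displacement under $\pi$ as a Luxemburg norm under $\rho$; this is just the change-of-variables formula for pushforwards. The second is the finiteness of $m_\phi(\rho)$. For this, pick $\lambda_0$ with $\int\phi(\|x\|/\lambda_0)\,\mathrm{d}\rho<\infty$. Convexity with $\phi(0)=0$ gives $\phi(t/k)\le\phi(t)/k$ for $k\ge1$, so the integral is at most $1$ at $k\lambda_0$ for large $k$.

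The main obstacle is the triangle inequality for $\operatorname{W}_{\phi}$. If it is not taken from the literature, I would prove it in one dimension via quantiles. If $\lambda_1$ and $\lambda_2$ are admissible for $(\alpha,\beta)$ and $(\beta,\gamma)$, then convexity of $\phi$ gives
\begin{equation*}
\phi\!\left(\frac{|a+b|}{\lambda_1+\lambda_2}\right)\le\frac{\lambda_1}{\lambda_1+\lambda_2}\,\phi\!\left(\frac{|a|}{\lambda_1}\right)+\frac{\lambda_2}{\lambda_1+\lambda_2}\,\phi\!\left(\frac{|b|}{\lambda_2}\right).
\end{equation*}
Taking $a=F_\alpha^{-1}-F_\beta^{-1}$ and $b=F_\beta^{-1}-F_\gamma^{-1}$ and integrating over $u\in(0,1)$ shows that $\lambda_1+\lambda_2$ is admissible for $(\alpha,\gamma)$. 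Taking infima over $\lambda_1,\lambda_2$ then gives the triangle inequality.

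Finally, the endpoint case where an infimum is attained only in the limit needs no extra work: by monotone convergence, the infimum in the Luxemburg definition is itself admissible.
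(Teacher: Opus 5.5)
Your proposal is correct and follows essentially the same route as the paper. You use the triangle inequality for $\operatorname{W}_{\phi}$ to reduce to $\operatorname{W}_{\phi}(\Pi^\theta_{\#}\rho,\Pi^\eta_{\#}\rho)$, then the coupling induced by $(\langle\theta,X\rangle,\langle\eta,X\rangle)$ together with Cauchy--Schwarz and monotonicity and homogeneity of the Luxemburg norm. Your extra details, a quantile-based proof of the one-dimensional triangle inequality and the finiteness of $m_\phi(\rho)$, are correct; the paper simply takes them for granted, citing the metric property of $\operatorname{W}_{\phi}$ from the literature.
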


\begin{proof}
By the triangle inequality for $\operatorname{W}_{\phi}$,
\begin{equation*}
\left|\operatorname{W}_{\phi}(\Pi^\theta_{\#}\mu,\Pi^\theta_{\#}\nu)-\operatorname{W}_{\phi}(\Pi^\eta_{\#}\mu,\Pi^\eta_{\#}\nu)\right|\leq
\operatorname{W}_{\phi}(\Pi^\theta_{\#}\mu,\Pi^\eta_{\#}\mu)+
\operatorname{W}_{\phi}(\Pi^\theta_{\#}\nu,\Pi^\eta_{\#}\nu).
\end{equation*}
Let $X\sim\mu$. Using the coupling induced by the pair
$(\langle\theta,X\rangle,\langle\eta,X\rangle)$ and Lemma~\ref{lem:properties-luxemburgnorm}, we obtain
\begin{equation*}
\operatorname{W}_{\phi}
\left(\Pi^\theta_{\#}\mu,\Pi^\eta_{\#}\mu\right)\le\left\|\langle\theta-\eta,X\rangle
\right\|_{\phi}\le \|\theta-\eta\|
\left\|\|X\|\right\|_{\phi}=\|\theta-\eta\|m_{\phi}(\mu).
\end{equation*}
The same argument for $\nu$ gives
\begin{equation*}
\operatorname{W}_{\phi}(\Pi^\theta_{\#}\nu,\Pi^\eta_{\#}\nu)\leq\|\theta-\eta\|m_{\phi}(\nu),
\end{equation*}
which proves the result.
\end{proof}

\begin{proof}[Proof of Theorem~\ref{thm:SOW-metric} and Proposition~\ref{prop:SOW-OW}]
Since $\operatorname{W}_{\phi}$ is a metric (see, e.g., \cite{guha2023excess}), the metricity assertion in Theorem~\ref{thm:SOW-metric} also follows directly from \cite{nadjahi2020statistical}, which shows that slicing preserves metricity. We nevertheless provide a direct proof here for completeness, together with the comparison inequality in Proposition~\ref{prop:SOW-OW}).

We first establish the comparison inequality in
Proposition~\ref{prop:SOW-OW}, which also proves that $\operatorname{SOW}_{\phi,p}$ is finite on
$\mathcal{P}_{\phi}(\mathbb{R}^{d})$.

For every
$\theta\in\mathbb{S}^{d-1}$,
$$|\langle\theta,x-y\rangle|\leq \|x-y\| \quad \text{for all } x,y \in \mathbb{R}^d.$$
Combining with the monotonicity of $\phi$, we obtain
$$\operatorname{W}_{\phi}(\Pi^\theta_{\#}\mu,\Pi^\theta_{\#}\nu)
\leq\operatorname{W}_{\phi}(\mu,\nu).
$$
Since $\sigma$ is a probability measure,
$$
\operatorname{SOW}_{\phi,p}(\mu,\nu)
\leq\operatorname{W}_{\phi}(\mu,\nu)<\infty.
$$
Non-negativity and symmetry follow directly from the corresponding
properties of $\operatorname{W}_{\phi}$.

For the triangle inequality, let
$\mu,\nu,\rho\in\mathcal{P}_{\phi}(\mathbb{R}^{d})$. For every
$\theta\in\mathbb{S}^{d-1}$,
$$\operatorname{W}_{\phi}(\Pi^\theta_{\#}\mu,\Pi^\theta_{\#}\rho)
\leq\operatorname{W}_{\phi}(\Pi^\theta_{\#}\mu,\Pi^\theta_{\#}\nu)
+\operatorname{W}_{\phi}(\Pi^\theta_{\#}\nu,\Pi^\theta_{\#}\rho).
$$
Applying Minkowski's inequality in
$L^p(\mathbb{S}^{d-1},\sigma)$ yields
$$\operatorname{SOW}_{\phi,p}(\mu,\rho)\leq \operatorname{SOW}_{\phi,p}(\mu,\nu)+\operatorname{SOW}_{\phi,p}(\nu,\rho).
$$

Clearly,
$\mu=\nu$ implies
$\operatorname{SOW}_{\phi,p}(\mu,\nu)=0$. It remains to show that $\operatorname{SOW}_{\phi,p}(\mu,\nu)=0$ implies $\mu=\nu$.

Suppose that $\operatorname{SOW}_{\phi,p}(\mu,\nu)=0$. Then $\operatorname{W}_{\phi}(\Pi^\theta_{\#}\mu,\Pi^\theta_{\#}\nu)=0$
for $\sigma$-almost every $\theta\in\mathbb{S}^{d-1}$.
By Lemma~\ref{lem:directional-lipschitz}, the map
$\theta\mapsto\operatorname{W}_{\phi}(\Pi^\theta_{\#}\mu,\Pi^\theta_{\#}\nu)$
is continuous. Since $\sigma$ has full support on
$\mathbb{S}^{d-1}$, it follows that
$$
\operatorname{W}_{\phi}(\Pi^\theta_{\#}\mu,\Pi^\theta_{\#}\nu)=0
\qquad
\text{for every }\theta\in\mathbb{S}^{d-1}.
$$
Since $\operatorname{W}_{\phi}$ is a metric (see, e.g., \cite[Lemma~3.3]{guha2023excess}),
$$
\Pi^\theta_{\#}\mu=\Pi^\theta_{\#}\nu
\qquad
\text{for every }\theta\in\mathbb{S}^{d-1}.
$$
The Cram\'er--Wold theorem then gives $\mu=\nu$.
\end{proof}

\begin{proof}[Proof of Proposition~\ref{prop:SOW-phi}]
Since $\phi_1(t)\leq\phi_2(t)$ for every $t\geq0$, \cite[Lemma~3.4]{guha2023excess} yields
$$\operatorname{W}_{\phi_1}(\alpha,\beta) \leq \operatorname{W}_{\phi_2}(\alpha,\beta)
$$
for every admissible pair $(\alpha,\beta)$.

Applying this inequality to each projected pair gives
$$\operatorname{W}_{\phi_1}
(\Pi^\theta_{\#}\mu,\Pi^\theta_{\#}\nu)\le\operatorname{W}_{\phi_2}
(\Pi^\theta_{\#}\mu,\Pi^\theta_{\#}\nu)
$$
for every $\theta\in\mathbb{S}^{d-1}$. Therefore,
$$
\operatorname{SOW}_{\phi_1,p}(\mu,\nu)\leq\operatorname{SOW}_{\phi_2,p}(\mu,\nu).
$$
\end{proof}
\section{Proofs and Counterexample for Topological Properties}
\subsection{Omitted Proofs for Topological Properties}\label{app:proof-SOW-topo}

We first establish a comparison between the radial Orlicz moment and the
SOW distance to the Dirac measure at the origin.

\begin{lemma}
\label{lem:moment-SOW}
For $\rho\in\mathcal{P}_{\phi}(\mathbb{R}^{d})$, let
$m_{\phi}(\rho)$ be defined in \eqref{eq:momentorlicz}. Then there exists
a constant $C_{d,p}>0$, depending only on $d$ and $p$, such that
$$
\operatorname{SOW}_{\phi,p}(\rho,\delta_0)\le
m_{\phi}(\rho)\le
C_{d,p}\operatorname{SOW}_{\phi,p}(\rho,\delta_0).
$$
\end{lemma}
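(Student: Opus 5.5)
The lower bound comes from a pointwise comparison over directions. The upper bound comes from writing the Euclidean norm as an average of one-dimensional projections and applying an integral Minkowski inequality for the Luxemburg norm.

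\textbf{Lower bound.} Since $\Pi(\Pi^\theta_{\#}\rho,\delta_0)$ contains only the product coupling, for $X\sim\rho$ we have
$\operatorname{W}_\phi(\Pi^\theta_{\#}\rho,\delta_0)=\|\langle\theta,X\rangle\|_\phi$.
Because $|\langle\theta,X\rangle|\le\|X\|$, Lemma~\ref{lem:properties-luxemburgnorm} gives $\|\langle\theta,X\rangle\|_\phi\le\|\|X\|\|_\phi=m_\phi(\rho)$ for every $\theta$. Raising to the power $p$ and integrating against the probability measure $\sigma$ yields $\operatorname{SOW}_{\phi,p}(\rho,\delta_0)\le m_\phi(\rho)$.

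\textbf{Upper bound, reduction.} Set $\lambda(\theta)\coloneqq\|\langle\theta,X\rangle\|_\phi$. Since $p\ge1$ and $\sigma$ is a probability measure, Jensen's inequality gives
$\int\lambda\,\mathrm d\sigma\le\operatorname{SOW}_{\phi,p}(\rho,\delta_0)$.
It therefore suffices to show $m_\phi(\rho)\le C_d\int\lambda\,\mathrm d\sigma$. By rotation invariance of $\sigma$, every $x\in\mathbb R^d$ satisfies
$$\|x\|=c_d^{-1}\int_{\mathbb S^{d-1}}|\langle\theta,x\rangle|\,\mathrm d\sigma(\theta),\qquad c_d\coloneqq\int_{\mathbb S^{d-1}}|\theta_1|\,\mathrm d\sigma(\theta)>0.$$
Using homogeneity from Lemma~\ref{lem:properties-luxemburgnorm}, the claim reduces to an integral Minkowski inequality:
$$\Bigl\|\int|\langle\theta,X\rangle|\,\mathrm d\sigma(\theta)\Bigr\|_\phi\le\int\lambda(\theta)\,\mathrm d\sigma(\theta)\eqqcolon\Lambda,$$
which would give $C_{d,p}=1/c_d$.

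\textbf{Integral Minkowski inequality (main step).}
\begin{enumerate}
\item \emph{Degenerate cases.} $\Lambda<\infty$ because $\lambda\le m_\phi(\rho)$. If $\Lambda=0$, then $\langle\theta,X\rangle=0$ a.s. for $\sigma$-a.e. $\theta$, so $X=0$ a.s. and both sides vanish.
\item \emph{Each direction meets the constraint.} Let $\Theta_+\coloneqq\{\lambda>0\}$. For $\theta\in\Theta_+$, monotone convergence and continuity of $\phi$ (Lemma~\ref{lem:properties-Nfunctions}) give $\int\phi(|\langle\theta,x\rangle|/\lambda(\theta))\,\mathrm d\rho\le1$. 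For $\theta\notin\Theta_+$ the integrand $|\langle\theta,X\rangle|$ vanishes a.s. and contributes nothing.
\item \emph{Convexity step.} For $\rho$-a.e. $x$, write
$$\frac{1}{\Lambda}\int|\langle\theta,x\rangle|\,\mathrm d\sigma=\int_{\Theta_+}\frac{\lambda(\theta)}{\Lambda}\cdot\frac{|\langle\theta,x\rangle|}{\lambda(\theta)}\,\mathrm d\sigma(\theta).$$
Jensen's inequality for the convex $\phi$ with respect to the probability measure $\lambda\,\mathrm d\sigma/\Lambda$ on $\Theta_+$ bounds $\phi$ of the left-hand side by $\int_{\Theta_+}(\lambda/\Lambda)\,\phi(|\langle\theta,x\rangle|/\lambda)\,\mathrm d\sigma$.
\item \emph{Conclusion.} Integrating over $\rho$, applying Tonelli, and using step 2 gives a total of at most $\int(\lambda/\Lambda)\,\mathrm d\sigma=1$. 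Hence $\Lambda$ is admissible in the Luxemburg definition, which proves the inequality.
\end{enumerate}
The remaining technical points are measurability of $\theta\mapsto\lambda(\theta)$ and the a.e. handling of directions where $\lambda$ vanishes. Measurability follows from the continuity given by Lemma~\ref{lem:directional-lipschitz} applied with $\nu=\delta_0$.
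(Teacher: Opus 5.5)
Your proof is correct, and your upper bound takes a genuinely different route from the paper; the lower bound is the same as the paper's.

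\textbf{The paper's route.} The paper argues by compactness. It sets $M=\max_\theta\|\langle\theta,X\rangle\|_\phi$ and uses $\|X\|\le\sum_j|X_j|$ with the finite triangle inequality to get $m_\phi(\rho)\le dM$. It then invokes Lemma~\ref{lem:directional-lipschitz}, with Lipschitz constant $m_\phi(\rho)\le dM$, to show that $\|\langle\theta,X\rangle\|_\phi\ge M/2$ on the cap of radius $1/(2d)$ around a maximizer. Integrating over that cap yields $C_{d,p}=2d\,\kappa^{-1/p}$, where $\kappa$ is the $\sigma$-measure of the cap. That constant depends on $p$ and blows up super-exponentially in $d$.

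\textbf{Your route.} You pass from $L^p(\sigma)$ to $L^1(\sigma)$ by Jensen and write $\|x\|$ as the spherical average $c_d^{-1}\int|\langle\theta,x\rangle|\,\mathrm d\sigma$. You then prove an integral Minkowski inequality for the Luxemburg norm by Jensen with the weights $\lambda(\theta)/\Lambda$, using that the Luxemburg infimum is attained. Each step checks out:
\begin{itemize}
\item Attainment of the infimum follows from monotone convergence and continuity of $\phi$.
\item The set $\{\lambda=0\}$ is harmless: by Tonelli it contributes nothing $\rho$-a.e. In fact $\sigma(\{\lambda=0\})=0$ as soon as $\Lambda>0$, since that set is the sphere intersected with the orthogonal complement of the span of $\operatorname{supp}\rho$.
\item Joint measurability holds because $\lambda$ is continuous.
\end{itemize}

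\textbf{What each approach buys.} The paper's argument is more elementary: it uses only the finite triangle inequality and the Lipschitz lemma it already has. Yours needs the continuous Minkowski inequality, which you prove directly. In exchange it gives the stronger statement $m_\phi(\rho)\le c_d^{-1}\operatorname{SOW}_{\phi,1}(\rho,\delta_0)$. The constant $c_d^{-1}=\bigl(\int|\theta_1|\,\mathrm d\sigma\bigr)^{-1}\asymp\sqrt d$ is independent of both $p$ and $\phi$. It is also sharp for $p=1$, as testing on Dirac masses $\rho=\delta_x$ shows.
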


\begin{proof}
Let $X\sim\rho$. For every $\theta\in\mathbb{S}^{d-1}$, the monotonicity of the Luxemburg norm in Lemma~\ref{lem:properties-luxemburgnorm} gives
$$\operatorname{W}_{\phi}(\Pi^\theta_{\#}\rho,\delta_0)=
\|\langle\theta,X\rangle\|_{\phi}\le
\|\|X\|\|_{\phi}=m_{\phi}(\rho).
$$
Integrating over $\theta$ gives
$$\operatorname{SOW}_{\phi,p}(\rho,\delta_0)\le m_{\phi}(\rho).
$$

For the reverse inequality, define
$$M \coloneqq \max_{\theta\in\mathbb{S}^{d-1}}
\|\langle\theta,X\rangle\|_{\phi}.
$$
The maximum exists by continuity of the map
$\theta\mapsto\|\langle\theta,X\rangle\|_{\phi}$. Indeed, by Lemma~\ref{lem:directional-lipschitz}, applied with
$\mu=\rho$ and $\nu=\delta_0$, the map
$$
\theta\mapsto \operatorname{W}_{\phi}(\Pi^\theta_{\#}\rho,\delta_0)= \|\langle\theta,X\rangle\|_{\phi}
$$
is continuous on $\mathbb{S}^{d-1}$. Since $\mathbb{S}^{d-1}$ is compact,
the maximum is attained.

Let $e_1,\ldots,e_d$ be the canonical basis of $\mathbb{R}^{d}$.
Since
$$ \|X\|\le \sum_{j=1}^{d}|X_j|,
$$
the triangle inequality for the Luxemburg norm gives
$$m_{\phi}(\rho)=\|\|X\|\|_{\phi}\le\sum_{j=1}^{d}\|X_j\|_{\phi}
\le  dM.
$$

Choose $\theta_0\in\mathbb{S}^{d-1}$ such that $$\|\langle\theta_0,X\rangle\|_{\phi}=M.$$
For every $\theta\in\mathbb{S}^{d-1}$ satisfying
$\|\theta-\theta_0\|\le 1/(2d)$,
\begin{align*}
\|\langle\theta,X\rangle\|_{\phi}&\ge
\|\langle\theta_0,X\rangle\|_{\phi}-
\|\langle\theta-\theta_0,X\rangle\|_{\phi}\\
&\ge
M-\|\theta-\theta_0\|\,m_{\phi}(\rho)\\
&\ge
\frac{M}{2}.
\end{align*}

Let
$$ c_d\coloneqq
\sigma\left(\left\{\theta\in\mathbb{S}^{d-1}:
\|\theta-\theta_0\|\le\frac{1}{2d}\right\}
\right)>0.
$$
By rotation invariance of the uniform measure $\sigma$, $c_d$ does not depend on $\theta_0$. Hence,
$$
\operatorname{SOW}_{\phi,p}^{p}(\rho,\delta_0) \ge
c_d\left(\frac{M}{2}\right)^p.
$$
Therefore,
$$M \le 2c_d^{-1/p} \operatorname{SOW}_{\phi,p}(\rho,\delta_0),
$$
and consequently,
$$m_{\phi}(\rho)\le 2dc_d^{-1/p}\operatorname{SOW}_{\phi,p}(\rho,\delta_0).
$$
This proves the result.
\end{proof}

\begin{proof}[Proof of Proposition~\ref{prop:SOW-maxSOW}]
Define $M_n\coloneqq 
\displaystyle \sup_{\theta\in\mathbb{S}^{d-1}}\operatorname{W}_{\phi}
\left(\Pi^\theta_{\#}\mu_n,
\Pi^\theta_{\#}\mu\right).$ Since $\sigma$ is a probability measure,

$$\operatorname{SOW}_{\phi,p}(\mu_n,\mu) \le M_n.$$
Hence,
$$ M_n\to0
\quad\Longrightarrow\quad \operatorname{SOW}_{\phi,p}(\mu_n,\mu)\to0.
$$

We now prove the converse. Suppose that $\operatorname{SOW}_{\phi,p}(\mu_n,\mu)\to0.$
\begin{align*}
m_{\phi}(\mu_n)\le
C_{d,p}\operatorname{SOW}_{\phi,p}(\mu_n,\delta_0)\le C_{d,p}\left[
\operatorname{SOW}_{\phi,p}(\mu_n,\mu)+\operatorname{SOW}_{\phi,p}(\mu,\delta_0)\right].
\end{align*}
Since
$\operatorname{SOW}_{\phi,p}(\mu_n,\mu)\to0$,
the sequence
$\left(\operatorname{SOW}_{\phi,p}(\mu_n,\mu)\right)_{n\ge1}$
is bounded. Therefore,
$$
\sup_{n\ge1}m_{\phi}(\mu_n)<\infty.
$$

Since $m_{\phi}(\mu)<\infty$, define
\begin{equation*}
L\coloneqq 1+ \sup_{n\ge1}m_{\phi}(\mu_n)+m_{\phi}(\mu) <\infty.
\end{equation*}
Then Lemma~\ref{lem:directional-lipschitz} gives, for every
$\theta,\eta\in\mathbb{S}^{d-1}$ and every $n\ge1$,
\begin{equation*}
\left|\operatorname{W}_{\phi}(\Pi^\theta_{\#}\mu_n,\Pi^\theta_{\#}\mu)-
\operatorname{W}_{\phi}(\Pi^\eta_{\#}\mu_n,\Pi^\eta_{\#}\mu)
\right|\le L\|\theta-\eta\|.
\end{equation*}
Thus, the projected distance functions are uniformly Lipschitz with
respect to the projection direction, with a constant independent of $n$.

Suppose, by contradiction, that $M_n$ does not converge to zero.
Then there exist $\varepsilon>0$ and a subsequence
$(n_k)_{k\ge1}$ such that
$$M_{n_k}\ge\varepsilon \qquad \text{for all }k\ge1.
$$
For each $k$, the map $\theta\mapsto
\operatorname{W}_{\phi}(\Pi^\theta_{\#}\mu_{n_k},\Pi^\theta_{\#}\mu)$ is continuous by Lemma~\ref{lem:directional-lipschitz}.
Since $\mathbb{S}^{d-1}$ is compact, there exists
$\theta_k\in\mathbb{S}^{d-1}$ such that
\begin{equation*}
\operatorname{W}_{\phi}\left( \Pi^{\theta_k}_{\#}\mu_{n_k},
\Pi^{\theta_k}_{\#}\mu \right) = M_{n_k}.
\end{equation*}
Set $$
r\coloneqq \min\left\{ 1,\frac{\varepsilon}{2L} \right\}.
$$
For every $\theta\in\mathbb{S}^{d-1}$ satisfying
$\|\theta-\theta_k\|\le r$, we have
\begin{align*}
\operatorname{W}_{\phi}\left(\Pi^\theta_{\#}\mu_{n_k},\Pi^\theta_{\#}\mu\right)
&\ge\operatorname{W}_{\phi}\left(
\Pi^{\theta_k}_{\#}\mu_{n_k},\Pi^{\theta_k}_{\#}\mu\right)-L\|\theta-\theta_k\|\\
&\ge\varepsilon-Lr\\
&\ge\frac{\varepsilon}{2}.
\end{align*}
By rotation invariance of $\sigma$, the spherical cap
$$\left\{\theta\in\mathbb{S}^{d-1}:
\|\theta-\theta_k\|\le r\right\}
$$
has measure $c_{d,r}>0$, where $c_{d,r}$ does not depend on $k$.
Therefore,
\begin{align*}
\operatorname{SOW}_{\phi,p}^{p}(\mu_{n_k},\mu)
&=
\int_{\mathbb{S}^{d-1}}\operatorname{W}_{\phi}^{p}
\left(\Pi^\theta_{\#}\mu_{n_k},\Pi^\theta_{\#}\mu\right)\mathrm{d}\sigma(\theta)
\\
&\ge c_{d,r}\left(\frac{\varepsilon}{2}\right)^p.
\end{align*}
The right-hand side is strictly positive and independent of $k$,
which contradicts
$$
\operatorname{SOW}_{\phi,p}(\mu_{n_k},\mu)\xrightarrow[]{k\to+\infty}0.$$
Hence,
$$
M_n\xrightarrow[]{n\to+\infty}0,
$$
which completes the proof.
\end{proof}
We next prove Corollary~\ref{cor:SOW-weak}. The main ingredient is the
following comparison between the Orlicz--Wasserstein distance and the
$1$-Wasserstein distance.
\begin{lemma}
\label{lem:W1-Wphi}
For any $\alpha,\beta\in\mathcal{P}_{\phi}(\mathbb{R}^{d})$,
$$\operatorname{W}_1(\alpha,\beta)\le\phi^{-1}(1)\operatorname{W}_{\phi}(\alpha,\beta).
$$
\end{lemma}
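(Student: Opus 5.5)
Fix $\alpha,\beta\in\mathcal{P}_{\phi}(\mathbb{R}^d)$ and compare the two costs coupling by coupling via Jensen's inequality. Take any coupling $\pi\in\Pi(\alpha,\beta)$ and any $\lambda>0$ admissible for $\pi$, meaning
$$\int\phi\left(\frac{\|x-y\|}{\lambda}\right)\mathrm{d}\pi(x,y)\le 1.$$
Since $\pi$ is a probability measure and $\phi$ is convex, Jensen gives
$$\phi\left(\frac{1}{\lambda}\int\|x-y\|\,\mathrm{d}\pi\right)\le\int\phi\left(\frac{\|x-y\|}{\lambda}\right)\mathrm{d}\pi\le 1.$$

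By Lemma~\ref{lem:properties-Nfunctions}, $\phi^{-1}$ is a well-defined strictly increasing bijection of $[0,\infty)$. Applying it to the previous inequality yields
$$\int\|x-y\|\,\mathrm{d}\pi\le\lambda\,\phi^{-1}(1).$$
One point needs checking: the left integral must be finite so that Jensen applies to a finite quantity. This holds because $\phi(t)\ge t\phi(1)$ for $t\ge1$, so $\|x-y\|/\lambda\le 1+\phi(\|x-y\|/\lambda)/\phi(1)$, which is $\pi$-integrable.

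Now optimize. Taking the infimum over admissible $\lambda$ gives
$$\operatorname{W}_1(\alpha,\beta)\le\int\|x-y\|\,\mathrm{d}\pi\le\phi^{-1}(1)\,\|\,\|x-y\|\,\|_{\phi,\pi}.$$
Taking the infimum over $\pi\in\Pi(\alpha,\beta)$ then gives the claim. For any coupling with finite Luxemburg norm the admissible set is nonempty; couplings with infinite norm contribute nothing to the infimum. In particular $\alpha,\beta\in\mathcal{P}_1$, so $\operatorname{W}_1$ is finite whenever $\operatorname{W}_\phi$ is.

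No step is a real obstacle. The only care needed is the integrability check and the fact that $\phi^{-1}(1)$ is well defined, both of which come from Lemma~\ref{lem:properties-Nfunctions}.
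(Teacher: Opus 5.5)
Your proposal is correct and follows the paper's proof: Jensen's inequality for a fixed coupling and admissible $\lambda$, inversion via the strictly increasing $\phi^{-1}$, then infima over $\lambda$ and over $\pi\in\Pi(\alpha,\beta)$. Your integrability check, $t\le 1+\phi(t)/\phi(1)$, is a small piece of extra rigor that the paper leaves implicit.
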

\begin{proof}
Let $\pi\in\Pi(\alpha,\beta)$ and let $\lambda>0$ satisfy
$$\int_{\mathbb{R}^{d}\times\mathbb{R}^{d}}\phi\left(\frac{\|x-y\|}{\lambda}\right)\mathrm{d}\pi(x,y)\le 1.
$$
Since $\pi$ is a probability measure, Jensen's inequality gives
$$
\phi\left(\frac{1}{\lambda}\int_{\mathbb{R}^{d}\times\mathbb{R}^{d}}
\|x-y\|\mathrm{d}\pi(x,y)\right) \le 1.
$$
Since $\phi$ is strictly increasing,
$$
\int_{\mathbb{R}^{d}\times\mathbb{R}^{d}}\|x-y\|
\mathrm{d}\pi(x,y)\le \phi^{-1}(1)\lambda.
$$
Taking the infimum first over admissible $\lambda$ and then over
$\pi\in\Pi(\alpha,\beta)$ yields
\begin{equation*}
\operatorname{W}_1(\alpha,\beta)\le\phi^{-1}(1)\operatorname{W}_{\phi}(\alpha,\beta).
\end{equation*}
\end{proof}
\begin{proof}[Proof of Corollary~\ref{cor:SOW-weak}]
Assume that $\operatorname{SOW}_{\phi,p}(\mu_n,\mu)\to0.$ By Proposition~\ref{prop:SOW-maxSOW},
\begin{equation*}
\sup_{\theta\in\mathbb{S}^{d-1}}\operatorname{W}_{\phi}\left(
\Pi^\theta_{\#}\mu_n,\Pi^\theta_{\#}\mu\right)\to0.
\end{equation*}
Hence, for every $\theta\in\mathbb{S}^{d-1}$,
\begin{equation*}
\operatorname{W}_{\phi}\left(\Pi^\theta_{\#}\mu_n,
\Pi^\theta_{\#}\mu\right)\to0.
\end{equation*}
By Lemma~\ref{lem:W1-Wphi},
\begin{equation*}
\operatorname{W}_1\left(\Pi^\theta_{\#}\mu_n,
\Pi^\theta_{\#}\mu\right)\to0.
\end{equation*}
Hence, by \cite[Theorem~6.9]{villani2008optimal},
\begin{equation*}
\Pi^\theta_{\#}\mu_n\rightharpoonup\Pi^\theta_{\#}\mu
\qquad\text{for every }\theta\in\mathbb{S}^{d-1}.
\end{equation*}
The Cram\'er--Wold theorem
\cite[Theorem~3.10.6]{durrett2019probability}
then implies
\begin{equation*}
\mu_n\rightharpoonup\mu.
\end{equation*}
\end{proof}
We now turn to Proposition~\ref{prop:SOW-compact-topology}. The key
ingredient is the following comparison between the Orlicz--Wasserstein
distance and the $1$-Wasserstein distance on bounded supports.
\begin{lemma}\label{lem:Wphi-W1-compact}
Let $D\in(0,\infty)$, and let $\alpha$ and $\beta$ be probability measures supported on a common interval of length at most $D$. Then
$$
\operatorname{W}_{\phi}(\alpha,\beta)\le D\,\omega_{\phi}\left(
\frac{\operatorname{W}_1(\alpha,\beta)}{D}\right),
$$
where $\omega_{\phi}$ is introduced in \eqref{eq:fundamentalfunction}.
\end{lemma}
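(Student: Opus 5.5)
We work in dimension one through the quantile representation \eqref{eq:OW-quantile}. Set $\Delta(u)\coloneqq|F_\alpha^{-1}(u)-F_\beta^{-1}(u)|$. Both measures are supported on a common interval of length at most $D$, so both quantile functions take values in that interval and $0\le\Delta\le D$ on $(0,1)$. Moreover, $\|\Delta\|_{L^1(0,1)}=\operatorname{W}_1(\alpha,\beta)$ by the quantile formula for $\operatorname{W}_1$. Write $w\coloneqq\operatorname{W}_1(\alpha,\beta)/D\in[0,1]$. The case $w=0$ is trivial, since then $\Delta=0$ a.e. and $\operatorname{W}_\phi(\alpha,\beta)=0$, so we assume $w>0$.

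The claim reduces to showing that $\lambda\coloneqq D\,\omega_\phi(w)=D/\phi^{-1}(1/w)$ is admissible, i.e.
\begin{equation*}
\int_0^1\phi\left(\frac{\Delta(u)}{\lambda}\right)\mathrm du\le1 .
\end{equation*}
Convexity of $\phi$ together with $\phi(0)=0$ gives the chord bound $\phi(t)\le \frac{t}{a}\phi(a)$ for $0\le t\le a$, as in the proof of Lemma~\ref{lem:properties-Nfunctions}. We apply it with $a=D/\lambda=\phi^{-1}(1/w)$ and $t=\Delta(u)/\lambda\in[0,a]$. This yields
\begin{equation*}
\phi\left(\frac{\Delta(u)}{\lambda}\right)\le\frac{\Delta(u)}{D}\,\phi\bigl(\phi^{-1}(1/w)\bigr)=\frac{\Delta(u)}{Dw}.
\end{equation*}
Integrating over $u\in(0,1)$ gives $\|\Delta\|_{L^1}/(Dw)=1$. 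Hence $\lambda$ is admissible, and therefore $\operatorname{W}_\phi(\alpha,\beta)\le D\,\omega_\phi(w)$.

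The only point needing care is the claim that the quantile functions stay within the common interval and that the $L^1$ quantile identity for $\operatorname{W}_1$ holds. Both are standard consequences of the definition of $F^{-1}$ for measures with bounded support, together with \cite[Theorem~2.10]{bobkov2019one}, so we expect no real obstacle. Conceptually, the argument says the worst case is the two-point profile of \eqref{eq:Wphi-Fundamnetalfunction}: a displacement of size $D$ on a mass fraction $w$.
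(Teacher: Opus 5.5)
Your proposal is correct and follows the paper's proof: you make the same choice of scale $\lambda=D/\phi^{-1}\bigl(D/\operatorname{W}_1(\alpha,\beta)\bigr)$, apply the same chord bound from convexity and $\phi(0)=0$, and integrate against a $\operatorname{W}_1$-optimal coupling. The only difference is cosmetic. You make that coupling explicit as the quantile coupling via \eqref{eq:OW-quantile}, whereas the paper takes an abstract $\operatorname{W}_1$-optimal coupling $\pi^\star$; in one dimension these coincide.
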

\begin{proof}
If $\operatorname{W}_1(\alpha,\beta)=0$, then $\alpha=\beta$ and the result is immediate.
Hence, assume that $\operatorname{W}_1(\alpha,\beta)>0$.

Let $\pi^\star\in\Pi(\alpha,\beta)$ be an optimal coupling for
$\operatorname{W}_1(\alpha,\beta)$. Since $\alpha$ and $\beta$ are supported on a common
interval of length at most $D$,
\begin{equation*}
|x-y|\le D\qquad \pi^\star\text{-almost surely}.
\end{equation*}
Since $0\le z\le D$, we have $z/D\in[0,1]$.
By the convexity of $\phi$ and $\phi(0)=0$,
\begin{equation*}
\phi\left(\frac{z}{t}\right)=\phi\left(\frac{z}{D}\frac{D}{t}
+\left(1-\frac{z}{D}\right)0
\right)\le \frac{z}{D}\phi\left(\frac{D}{t}\right).
\end{equation*}
Therefore,
\begin{align*}
\int_{\mathbb{R}\times\mathbb{R}}
\phi\left(\frac{|x-y|}{t}\right)
\mathrm{d}\pi^\star(x,y)
&\le\frac{\phi(D/t)}{D}
\int_{\mathbb{R}\times\mathbb{R}}|x-y|\,\mathrm{d}\pi^\star(x,y)
\\
&=\frac{\phi(D/t)}{D}\operatorname{W}_1(\alpha,\beta).
\end{align*}
Choosing
$$t=
\frac{D}{\phi^{-1}\left(D/\operatorname{W}_1(\alpha,\beta)\right)
},$$
we obtain
$$
\int_{\mathbb{R}\times\mathbb{R}}\phi\left(\frac{|x-y|}{t}\right)
\mathrm{d}\pi^\star(x,y) \le1.
$$
Hence, by the definition of $\operatorname{W}_\phi$,
\begin{equation*}
\operatorname{W}_\phi(\alpha,\beta)\le
\frac{D}{\phi^{-1}\left(D/\operatorname{W}_1(\alpha,\beta)\right)
}= D\omega_\phi\left(\frac{\operatorname{W}_1(\alpha,\beta)}{D}
\right).
\end{equation*}
\end{proof}
\begin{proof}[Proof of Proposition~\ref{prop:SOW-compact-topology}]
The implication
$$\operatorname{SOW}_{\phi,p}(\mu_n,\mu)\to0
\quad\Longrightarrow\quad
\mu_n\rightharpoonup\mu$$
follows from Corollary~\ref{cor:SOW-weak}. We only need to prove the
reverse implication.

Assume that 
$$\mu_n\rightharpoonup\mu,$$
where $(\mu_n)_{n\ge1}\subset\mathcal{P}(K)$ and $\mu\in\mathcal{P}(K)$ for some compact set $K\subset\mathbb{R}^{d}$. Let $$D\coloneqq \operatorname{diam}(K).
$$
If $D=0$, then $K$ is a singleton, so $\mu_n=\mu$ for every $n$ and
the conclusion is immediate. Hence, we assume that $D>0$. Fix $\theta\in\mathbb{S}^{d-1}$. Since $\Pi^\theta$ is continuous,
the continuous mapping theorem (see, e.g., \cite[Lemma~4.3]{kallenberg1997foundations}) gives
\begin{equation*}
\Pi^\theta_{\#}\mu_n  \rightharpoonup \Pi^\theta_{\#}\mu.
\end{equation*}
Moreover, all projected measures are supported on a common compact interval
of length at most $D$. Therefore, by
\cite[Theorem~6.9]{villani2008optimal},
$$\operatorname{W}_1\left(\Pi^\theta_{\#}\mu_n, \Pi^\theta_{\#}\mu \right) \to0.$$
Applying Lemma~\ref{lem:Wphi-W1-compact}, we obtain
$$\operatorname{W}_{\phi}\left(\Pi^\theta_{\#}\mu_n,\Pi^\theta_{\#}\mu\right)
\le D\omega_{\phi}\left(
\frac{\operatorname{W}_1(\Pi^\theta_{\#}\mu_n,\Pi^\theta_{\#}\mu)}{D}\right).
$$
By Lemma~\ref{lem:properties-omega-psi}, $\omega_{\phi}(u)\to0$ as $u\to0$. Therefore, 
$$\operatorname{W}_{\phi}\left(\Pi^\theta_{\#}\mu_n,
\Pi^\theta_{\#}\mu\right) \to0\text{ for every }\theta\in\mathbb{S}^{d-1}.$$
Furthermore, since all projected measures are supported on an interval of
length at most $D$,
\begin{equation*}
\operatorname{W}_{\phi}\left(\Pi^\theta_{\#}\mu_n,
\Pi^\theta_{\#}\mu\right) \le \frac{D}{\phi^{-1}(1)},
\end{equation*}
for every $n\ge1$ and every $\theta\in\mathbb{S}^{d-1}$.
Hence,
\begin{equation*}
\operatorname{W}_{\phi}^{p}\left(\Pi^\theta_{\#}\mu_n,\Pi^\theta_{\#}\mu\right) \le
\left(\frac{D}{\phi^{-1}(1)}
\right)^p.
\end{equation*}
Since $\sigma$ is a probability measure, the right-hand side is integrable.
Therefore, by the dominated convergence theorem (see, e.g., \cite[Theorem~1.21]{kallenberg1997foundations}),
\begin{align*}
\operatorname{SOW}_{\phi,p}^{p}(\mu_n,\mu)=\int_{\mathbb{S}^{d-1}}\operatorname{W}_{\phi}^{p}\left(\Pi^\theta_{\#}\mu_n,
\Pi^\theta_{\#}\mu\right)\mathrm{d}\sigma(\theta)
\to0.
\end{align*}
Hence, $\operatorname{SOW}_{\phi,p}(\mu_n,\mu)\to0$.
This completes the proof.
\end{proof}
\subsection{Counterexample to the Converse of Corollary~\ref{cor:SOW-weak}}
\label{app:SOW-weak-counterexample}
We provide a counterexample showing that weak convergence does not, in
general, imply convergence in $\operatorname{SOW}_{\phi,p}$, even in
dimension one.

Let
$$\mu=\delta_0,\qquad\mu_n=
\left(1-\frac{1}{n}\right)\delta_0+
\frac{1}{n}\delta_{\phi^{-1}(n)},
\qquad n\ge1.
$$
We first show that $\mu_n\rightharpoonup\mu.$
Indeed, for every bounded continuous function $f:\mathbb{R}\to\mathbb{R}$,
$$ \int_{\mathbb{R}} f\,\mathrm{d}\mu_n =
\left(1-\frac{1}{n}\right)f(0)+\frac{1}{n}f\bigl(\phi^{-1}(n)\bigr),
$$
while
$$\int_{\mathbb{R}} f\,\mathrm{d}\mu=f(0).
$$
Therefore,
\begin{align*}
\left|\int_{\mathbb{R}} f\,\mathrm{d}\mu_n-\int_{\mathbb{R}} f\,\mathrm{d}\mu\right|
&=\frac{1}{n}\left|f\bigl(\phi^{-1}(n)\bigr)-f(0)\right|
\\
&\le\frac{2\|f\|_{\infty}}{n}\longrightarrow 0.
\end{align*}
Hence, $\mu_n\rightharpoonup\mu$.

On the other hand,
\begin{align*}
\operatorname{W}_\phi(\mu_n,\mu)
&=\inf\left\{\lambda>0: \frac{1}{n}\phi\left(
\frac{\phi^{-1}(n)}{\lambda}\right)\le1\right\}
\\
&=\inf\left\{\lambda>0:
\phi\left(\frac{\phi^{-1}(n)}{\lambda}
\right)\le n\right\}.
\end{align*}
Since $\phi$ is strictly increasing,
$$\phi\left(\frac{\phi^{-1}(n)}{\lambda}\right)\le n
=\phi\bigl(\phi^{-1}(n)\bigr)
$$
if and only if
$$\frac{\phi^{-1}(n)}{\lambda} \le \phi^{-1}(n),
$$
or equivalently, $\lambda\ge1$.
Thus,
$$\operatorname{W}_\phi(\mu_n,\mu)= 1 \text{ for every }n\ge1.
$$

Finally, when $d=1$, $\mathbb{S}^{0}=\{-1,+1\}$. For either projection direction $\theta\in\{-1,+1\}$,
$$\operatorname{W}_\phi \left(\Pi^\theta_{\#}\mu_n,
\Pi^\theta_{\#}\mu\right)=1.
$$
Therefore,
$$\operatorname{SOW}_{\phi,p}^{p}(\mu_n,\mu)=\frac{1}{2}\left(1^p+1^p\right)=1,$$
and hence
$$\operatorname{SOW}_{\phi,p}(\mu_n,\mu)=1 \text{ for every }n\ge1.$$

Thus, although $\mu_n\rightharpoonup\mu$, we have
$\operatorname{SOW}_{\phi,p}(\mu_n,\mu)\not\to0.$
This shows that the converse of
Corollary~\ref{cor:SOW-weak} does not hold in general.
\section{Proofs for Statistical Properties} \label{app:proof-SOW-statistics}

We first establish a link between the one-dimensional
Orlicz--Wasserstein distance and the uniform distance between the
corresponding cumulative distribution functions.

\begin{lemma}\label{lem:Wphi-CDF}
Let $D>0$, and let $\alpha$ and $\beta$ be probability measures on
$\mathbb{R}$ supported on a common interval of length at most $D$.
Let $F_\alpha$ and $F_\beta$ denote their respective cumulative
distribution functions. Then
$$\operatorname{W}_\phi(\alpha,\beta)\le
D\omega_\phi\left(\|F_\alpha-F_\beta\|_\infty\right),
$$
where
$$\|F_\alpha-F_\beta\|_\infty\coloneqq \sup_{x\in\mathbb{R}}|F_\alpha(x)-F_\beta(x)|.
$$
\end{lemma}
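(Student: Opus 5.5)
The plan is to reduce to Lemma~\ref{lem:Wphi-W1-compact} via the one-dimensional identity $\operatorname{W}_1(\alpha,\beta)=\int_{\mathbb R}|F_\alpha(x)-F_\beta(x)|\,\mathrm{d}x$. Let $[a,a+D]$ be a common interval containing the supports of $\alpha$ and $\beta$, and set $\varepsilon\coloneqq\|F_\alpha-F_\beta\|_\infty$. If $\varepsilon=0$, then $\alpha=\beta$ and the bound is trivial. Assume therefore $\varepsilon>0$.

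\textbf{Step 1.} Both CDFs vanish below $a$ and equal $1$ from $a+D$ on. Hence the integrand $|F_\alpha-F_\beta|$ vanishes outside $[a,a+D)$ and is bounded by $\varepsilon$ there, which gives
$$\operatorname{W}_1(\alpha,\beta)=\int_a^{a+D}|F_\alpha(x)-F_\beta(x)|\,\mathrm{d}x\le D\varepsilon.$$
The CDF formula for $\operatorname{W}_1$ is the standard one (e.g.\ \cite[Theorem~2.9]{bobkov2019one}). It can also be recovered from the quantile representation already quoted with $p=1$, by a Fubini/area argument.

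\textbf{Step 2.} Lemma~\ref{lem:Wphi-W1-compact} gives $\operatorname{W}_\phi(\alpha,\beta)\le D\,\omega_\phi\bigl(\operatorname{W}_1(\alpha,\beta)/D\bigr)$. By Step~1 the argument satisfies $\operatorname{W}_1/D\le\varepsilon\le1$. Since $\omega_\phi$ is increasing on $[0,1]$ by Lemma~\ref{lem:properties-omega-psi}, we obtain $\operatorname{W}_\phi(\alpha,\beta)\le D\,\omega_\phi(\varepsilon)$, which is the claim.

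There is no real obstacle here. The only points needing care are the endpoint conventions for the support: whether the interval is closed, and that $F_\alpha=F_\beta=1$ at the right endpoint, so the integrand vanishes there. One should also check that Lemma~\ref{lem:Wphi-W1-compact} is applied only in the case $\operatorname{W}_1>0$, which that lemma already handles.
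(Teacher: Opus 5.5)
Your proposal is correct and follows essentially the same route as the paper's proof: apply Lemma~\ref{lem:Wphi-W1-compact}, bound $\operatorname{W}_1(\alpha,\beta)=\int_{\mathbb R}|F_\alpha-F_\beta|\,\mathrm{d}x\le D\|F_\alpha-F_\beta\|_\infty$ using the common support interval, and conclude by monotonicity of $\omega_\phi$. Your additional checks (the endpoint conventions, and that $\operatorname{W}_1/D\le\varepsilon\le 1$ lies in the domain of $\omega_\phi$) are harmless refinements of the same argument.
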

\begin{proof}
By Lemma~\ref{lem:Wphi-W1-compact},
\begin{equation*}
\operatorname{W}_\phi(\alpha,\beta)\le D\omega_\phi\left(\frac{\operatorname{W}_1(\alpha,\beta)}{D}
\right).
\end{equation*}
On the other hand, the one-dimensional CDF representation of $\operatorname{W}_1$
\cite[Theorem~2.9]{bobkov2019one} gives
\begin{equation*}
\operatorname{W}_1(\alpha,\beta)=\int_{\mathbb{R}}|F_\alpha(x)-F_\beta(x)|\mathrm{d}x.
\end{equation*}
Since $\alpha$ and $\beta$ are supported on a common interval of length
at most $D$,
\begin{equation*}
\operatorname{W}_1(\alpha,\beta)\le D \|F_\alpha-F_\beta\|_\infty.
\end{equation*}
Since $\omega_\phi$ is increasing by Lemma~\ref{lem:properties-omega-psi}, we conclude that
\begin{equation*}
\operatorname{W}_\phi(\alpha,\beta)\le D\omega_\phi \left( \|F_\alpha-F_\beta\|_\infty
\right).
\end{equation*}
\end{proof}
We next derive a moment bound for the fundamental function $\omega_\phi$ under a sub-Gaussian tail condition.
\begin{lemma}\label{lem:omega-DKW}
Let $n \in \mathbb{N}$ with $n\ge 1$ and let $0\le\Delta_n\le1$ be a random variable satisfying
$$ \mathbb{P}(\Delta_n>u) \le
2\exp(-2nu^2), \qquad u\ge0.
$$
Then, for every $p\in[1,\infty)$, there exists a constant $C_p>0$,
depending only on $p$, such that
$$\left[\mathbb{E}\omega_\phi(\Delta_n)^p
\right]^{1/p}\le C_p\,\omega_\phi(n^{-1/2}).
$$
\end{lemma}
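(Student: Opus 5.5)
We need to bound $\mathbb{E}\,\omega_\phi(\Delta_n)^p$ by a constant multiple of $\omega_\phi(n^{-1/2})^p$. The plan is to split the range of $\Delta_n$ at the natural scale $n^{-1/2}$ and use the sub-linear growth of $\omega_\phi$ from Lemma~\ref{lem:properties-omega-psi}. That lemma gives $\omega_\phi(au)\le a\,\omega_\phi(u)$ for $a\ge1$ and $au\le1$, and it tells us $\omega_\phi$ is increasing. Write $u_n\coloneqq n^{-1/2}\in(0,1]$. For any $x\in[0,1]$ we then have the pointwise bound
\begin{equation*}
\omega_\phi(x)\le \max\left\{1,\frac{x}{u_n}\right\}\omega_\phi(u_n).
\end{equation*}
If $x\le u_n$, this follows from monotonicity. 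If $u_n<x\le1$, apply the lemma with $a=x/u_n\ge1$ and $au_n=x\le1$.

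Raising to the $p$-th power and taking expectations gives
\begin{equation*}
\mathbb{E}\,\omega_\phi(\Delta_n)^p\le \omega_\phi(u_n)^p\,\mathbb{E}\max\left\{1,\sqrt{n}\Delta_n\right\}^p\le \omega_\phi(u_n)^p\left(1+n^{p/2}\,\mathbb{E}\Delta_n^p\right).
\end{equation*}
The problem is therefore reduced to a standard moment bound for a sub-Gaussian variable. By the layer-cake formula and the tail assumption,
\begin{equation*}
\mathbb{E}\Delta_n^p=\int_0^\infty p u^{p-1}\,\mathbb{P}(\Delta_n>u)\,\mathrm{d}u\le 2p\int_0^\infty u^{p-1}e^{-2nu^2}\,\mathrm{d}u .
\end{equation*}
Substituting $s=2nu^2$, this becomes $p\,(2n)^{-p/2}\,\Gamma(p/2)$. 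Hence $n^{p/2}\,\mathbb{E}\Delta_n^p\le p\,2^{-p/2}\,\Gamma(p/2)$, a constant depending only on $p$.

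Combining the two displays yields the claim with
\begin{equation*}
C_p=\left(1+p\,2^{-p/2}\,\Gamma(p/2)\right)^{1/p}.
\end{equation*}
The only delicate point is the pointwise domination in the first step. Concavity of $\omega_\phi$ is not available in general, so we must rely on quasiconcavity, i.e., the decreasing ratio $\omega_\phi(v)/v$ in Lemma~\ref{lem:properties-omega-psi}. That lemma requires $au\le1$, which is exactly where the hypothesis $0\le\Delta_n\le1$ enters. The value $\Delta_n=0$ is harmless because $\omega_\phi(0)=0$. The remaining steps are routine.
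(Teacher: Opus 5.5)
Your proof is correct and uses the same two ingredients as the paper: the quasiconcavity bound $\omega_\phi(au)\le a\,\omega_\phi(u)$ from Lemma~\ref{lem:properties-omega-psi}, and the sub-Gaussian tail of $\Delta_n$. The paper instead partitions the range of $\Delta_n$ dyadically into the events $\{2^k n^{-1/2}<\Delta_n\le \min\{2^{k+1}n^{-1/2},1\}\}$, treats the case $2^{k+1}n^{-1/2}>1$ separately, and sums the series $\sum_k 2^{p(k+1)}e^{-2\cdot 4^k}$. You use the single envelope $\omega_\phi(x)\le \max\{1,\sqrt{n}\,x\}\,\omega_\phi(n^{-1/2})$ on $[0,1]$ and reduce to $\mathbb{E}\Delta_n^p$ via the layer-cake formula. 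This continuous version needs no case split, since $\Delta_n\le 1$ is absorbed into the envelope, and it gives the explicit constant $C_p=\bigl(1+p\,2^{-p/2}\Gamma(p/2)\bigr)^{1/p}$.
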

\begin{proof}
We use a dyadic decomposition. Define
$$
E_\ast\coloneqq  \left\{ \Delta_n\le n^{-1/2}
\right\},
$$
and, for $k\ge0$,
$$ E_k\coloneqq  \left\{
2^k n^{-1/2} < \Delta_n \le \min\left\{
2^{k+1}n^{-1/2},1 \right\}
\right\}.
$$
On $E_\ast$, since $\omega_{\phi}$ is increasing by Lemma~\ref{lem:properties-omega-psi},
$$
\omega_\phi(\Delta_n) \le \omega_\phi(n^{-1/2}),
$$
and therefore
\begin{equation*}
\mathbb{E} \left[ \omega_\phi(\Delta_n)^p
\mathbbm{1}_{E_\ast} \right] \le
\omega_\phi(n^{-1/2})^p.
\end{equation*}
Fix $k\ge0$. On $E_k$, we claim that
\begin{equation*}
\omega_\phi(\Delta_n) \le 2^{k+1}\omega_\phi(n^{-1/2}).
\end{equation*}
Indeed, if $2^{k+1}n^{-1/2}\le 1$, then this follows directly from Lemma~\ref{lem:properties-omega-psi}. We now assume that $2^{k+1}n^{-1/2}>1$. Applying Lemma~\ref{lem:properties-omega-psi} (with $a=n^{1/2}$ and $u=n^{-1/2}$), we obtain
\begin{equation*}
\omega_\phi(\Delta_n) \le \omega_\phi(1)= \omega_\phi \left( n^{1/2}n^{-1/2}\right) \le n^{1/2}\omega_\phi(n^{-1/2})< 2^{k+1}\omega_\phi(n^{-1/2}).
\end{equation*}
Hence,
\begin{align*}
\mathbb{E} \left[ \omega_\phi(\Delta_n)^p
\mathbbm{1}_{E_k} \right]&\le 2^{p(k+1)}
\omega_\phi(n^{-1/2})^p \mathbb{P}
\left( \Delta_n>2^k n^{-1/2} \right)\\
&\le 2^{p(k+1)+1} \omega_\phi(n^{-1/2})^p \exp(-2\cdot4^k).
\end{align*}
Since $E_\ast,E_0,E_1,\ldots$ form a partition of the sample space,
\begin{align*}
\mathbb{E}\omega_\phi(\Delta_n)^p&\le \omega_\phi(n^{-1/2})^p\left[ 1+ 2\sum_{k\ge0}
2^{p(k+1)} \exp(-2\cdot4^k)\right].
\end{align*}
The series is finite and depends only on $p$. Taking the $p$-th root
completes the proof.
\end{proof}
We are now ready to prove the one-sample empirical convergence bound.
\begin{proof}[Proof of Theorem~\ref{thm:SOW-one-sample}]
Fix $\theta\in\mathbb{S}^{d-1}$. Since $\mu$ is supported on a set of diameter at most $D$, the projected measure $\Pi^\theta_{\#}\mu$ is supported on an interval of length at most $D$.

Let $F_\theta$ denote the CDF of $\Pi^\theta_{\#}\mu$, and let
$\widehat{F}_{n,\theta}$ denote the empirical CDF associated with
$\Pi^\theta_{\#}\widehat{\mu}_n$. By the
Dvoretzky--Kiefer--Wolfowitz inequality 
\cite{massart1990tight},
\begin{equation*}
\mathbb{P} \left(\|\widehat{F}_{n,\theta}-F_\theta\|_\infty>u\right) \le 2\exp(-2nu^2),\quad u\ge0.
\end{equation*}
Since $\|\widehat{F}_{n,\theta}-F_\theta\|_\infty\in[0,1]$, Lemma~\ref{lem:omega-DKW} yields
\begin{equation*}
\left[ \mathbb{E} \omega_\phi
\left( \|\widehat{F}_{n,\theta}-F_\theta\|_\infty\right)^p \right]^{1/p}
\le C_p\omega_\phi(n^{-1/2}).
\end{equation*}
Applying Lemma~\ref{lem:Wphi-CDF}, we obtain
$$\mathbb{E}\operatorname{W}_\phi^p
\left(\Pi^\theta_{\#}\widehat{\mu}_n,
\Pi^\theta_{\#}\mu \right)\le C_p^pD^p \omega_\phi(n^{-1/2})^p.
$$
Integrating over $\theta$ and using Tonelli's theorem,
\begin{align*}
\mathbb{E} \operatorname{SOW}_{\phi,p}^p
\left(\widehat{\mu}_n,\mu
\right)
&=\int_{\mathbb{S}^{d-1}}\mathbb{E} \operatorname{W}_\phi^p \left( \Pi^\theta_{\#}\widehat{\mu}_n,\Pi^\theta_{\#}\mu
\right)\mathrm{d}\sigma(\theta)
\\&\le
C_p^pD^p\omega_\phi(n^{-1/2})^p.
\end{align*}
Taking the $p$-th root gives
$$\left[\mathbb{E}\operatorname{SOW}_{\phi,p}^p
\left(\widehat{\mu}_n,\mu\right)
\right]^{1/p}\le C_pD\,\omega_\phi(n^{-1/2}),
$$
which proves the theorem.
\end{proof}
We finally prove the two-sample bound for the SOW distance.
\begin{proof}[Proof of Corollary~\ref{cor:SOW-two-sample}]
Once we have established Theorem~\ref{thm:SOW-one-sample}, this corollary can be deduced directly by applying \cite[Theorem~5]{nadjahi2020statistical}. However, we provide the proof for completeness.

By Jensen's inequality and
Theorem~\ref{thm:SOW-one-sample},
$$
\mathbb{E} \operatorname{SOW}_{\phi,p}
\left(\widehat{\mu}_n,\mu
\right)\le\left[
\mathbb{E}\operatorname{SOW}_{\phi,p}^p
\left(\widehat{\mu}_n,\mu
\right)\right]^{1/p}
\le C_pD \omega_\phi(n^{-1/2}).
$$
Similarly,
$$
\mathbb{E}\operatorname{SOW}_{\phi,p} \left(
\widehat{\nu}_m,\nu \right) \le C_pD\omega_\phi(m^{-1/2}).
$$
Since $\operatorname{SOW}_{\phi,p}$ is a metric,
$$ \left| \operatorname{SOW}_{\phi,p}
\left( \widehat{\mu}_n,\widehat{\nu}_m
\right)-\operatorname{SOW}_{\phi,p}(\mu,\nu)
\right|\le\operatorname{SOW}_{\phi,p}
\left(\widehat{\mu}_n,\mu
\right)+\operatorname{SOW}_{\phi,p}
\left(\widehat{\nu}_m,\nu
\right).
$$
Taking expectations and combining the preceding bounds gives
$$ \mathbb{E} \left|
\operatorname{SOW}_{\phi,p}\left(
\widehat{\mu}_n,\widehat{\nu}_m\right)-
\operatorname{SOW}_{\phi,p}(\mu,\nu)\right|
\le C_pD \left[ \omega_\phi(n^{-1/2})
+\omega_\phi(m^{-1/2})
\right].
$$
This completes the proof.
\end{proof}

\section{Proof of Theorem~\ref{thm:powerSOW-two-sample}}\label{app:proof-SOW-power-twosample}
We first establish several auxiliary results for the powered Luxemburg
functional. These results will be used to control its variation under
$L^1$ perturbations.

We begin by defining
\begin{equation}\label{eq:psisharp}
\psi_{\phi,p}^{\sharp}(u)\coloneqq  u\sup_{u\le v\le1}\frac{\psi_{\phi,p}(v)}{v}, \qquad u\in(0,1],
\end{equation}
with $\psi_{\phi,p}^{\sharp}(0)\coloneqq 0$, where
$\psi_{\phi,p}$ is defined in \eqref{eq:psiphip}.

We record some elementary properties of
$\psi_{\phi,p}^{\sharp}$.
\begin{lemma}\label{lem:psi-sharp-properties}
The function $\psi_{\phi,p}^{\sharp}$ is increasing on $[0,1]$,
the map
$$u\longmapsto
\frac{\psi_{\phi,p}^{\sharp}(u)}{u}$$
is decreasing on $(0,1]$, and
$$ \psi_{\phi,p}^{\sharp}(u) \ge \psi_{\phi,p}(u),\quad u\in[0,1].$$
\end{lemma}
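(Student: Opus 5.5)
We prove the three assertions of Lemma~\ref{lem:psi-sharp-properties} directly from the definition \eqref{eq:psisharp}. Throughout we write
\[
S(u)\coloneqq \sup_{u\le v\le 1}\frac{\psi_{\phi,p}(v)}{v},\qquad u\in(0,1],
\]
so that $\psi^{\sharp}_{\phi,p}(u)=uS(u)$.

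\emph{Finiteness of $S$.} For $u\in(0,1]$, $\psi_{\phi,p}$ is continuous on $[u,1]$ and $v\ge u>0$ there, by Lemma~\ref{lem:properties-omega-psi}. Hence $S(u)\le \psi_{\phi,p}(1)/u<\infty$.

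\emph{$\psi^\sharp_{\phi,p}(u)/u$ is decreasing.} This ratio equals $S(u)$. As $u$ increases, the interval $[u,1]$ shrinks, so the supremum can only decrease. Hence $S$ is decreasing on $(0,1]$.

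\emph{Domination.} Taking $v=u$ in the supremum gives $S(u)\ge \psi_{\phi,p}(u)/u$, hence $\psi^\sharp_{\phi,p}(u)\ge\psi_{\phi,p}(u)$ for $u\in(0,1]$. At $u=0$ both sides vanish.

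\emph{Monotonicity.} This is the only step requiring some care, since $uS(u)$ is a product of an increasing factor and a decreasing factor. Fix $0<u_1<u_2\le 1$ and split the supremum defining $S(u_1)$ into the ranges $[u_1,u_2]$ and $[u_2,1]$:
\[
u_1S(u_1)=\max\Bigl\{\sup_{u_1\le v\le u_2}\frac{u_1\,\psi_{\phi,p}(v)}{v},\;u_1S(u_2)\Bigr\}.
\]
We bound each term by $u_2S(u_2)$.
\begin{itemize}
\item For $v\in[u_1,u_2]$, we have $u_1/v\le 1$, so $u_1\psi_{\phi,p}(v)/v\le\psi_{\phi,p}(v)$. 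Since $\psi_{\phi,p}$ is increasing (Lemma~\ref{lem:properties-omega-psi}), $\psi_{\phi,p}(v)\le\psi_{\phi,p}(u_2)$. Taking $v=u_2$ in the definition of $S(u_2)$ gives $\psi_{\phi,p}(u_2)\le u_2S(u_2)$.
\item The second term satisfies $u_1S(u_2)\le u_2S(u_2)$ directly.
\end{itemize}
Hence $\psi^\sharp_{\phi,p}(u_1)\le\psi^\sharp_{\phi,p}(u_2)$.

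\emph{Including $u=0$.} Since $\psi^\sharp_{\phi,p}\ge 0=\psi^\sharp_{\phi,p}(0)$, monotonicity extends to the endpoint. Therefore $\psi^\sharp_{\phi,p}$ is increasing on $[0,1]$, which completes the argument.
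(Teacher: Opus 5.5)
Your proof is correct and follows the paper's argument essentially step for step. The ratio is decreasing because the range of the supremum shrinks, domination comes from taking $v=u$, and monotonicity comes from splitting the supremum over $[u_1,u_2]$ and $[u_2,1]$ and bounding each piece by $\psi^{\sharp}_{\phi,p}(u_2)$; your explicit finiteness check for $S(u)$ is a harmless addition that the paper leaves implicit.
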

\begin{proof}
For $u=0$, both sides are equal to zero. For $u\in(0,1]$,
$$\frac{\psi_{\phi,p}^{\sharp}(u)}{u} = \sup_{u\le v\le1} \frac{\psi_{\phi,p}(v)}{v},
$$
which is decreasing on $(0,1]$. Moreover, choosing $v=u$ in
the supremum gives
$$
\psi_{\phi,p}^{\sharp}(u) \ge \psi_{\phi,p}(u).
$$
It remains to prove that $\psi_{\phi,p}^{\sharp}$ is increasing.
Let $0<u_1<u_2\le1$. For $v\in[u_1,u_2]$, since $\psi_{\phi,p}$ is increasing by Lemma~\ref{lem:properties-omega-psi},
\begin{equation*}
\frac{u_1}{v}\psi_{\phi,p}(v) \le \psi_{\phi,p}(v) \le \psi_{\phi,p}(u_2) \le \psi_{\phi,p}^{\sharp}(u_2).
\end{equation*}
For $v\in[u_2,1]$,
$$ \frac{u_1}{v}\psi_{\phi,p}(v) \le
\frac{u_2}{v}\psi_{\phi,p}(v) \le \psi_{\phi,p}^{\sharp}(u_2).
$$
Taking the supremum over $v\in[u_1,1]$ yields
$$\psi_{\phi,p}^{\sharp}(u_1) \le \psi_{\phi,p}^{\sharp}(u_2).
$$
The case $u_1=0$ follows from
$\psi_{\phi,p}^{\sharp}(0)=0$.
\end{proof}
To connect $\psi_{\phi,p}^{\sharp}$ with the modulus appearing in Theorem~\ref{thm:powerSOW-two-sample}, we first show that $\psi_{\phi,p}$ and $\psi_{\phi,p}^{\sharp}$ have the same least concave majorant on $[0,1]$.
\begin{lemma}\label{lem:same-leastconcavemajorant}
The functions $\psi_{\phi,p}$ defined in \eqref{eq:psiphip} and
$\psi_{\phi,p}^{\sharp}$ defined in \eqref{eq:psisharp} have the same
least concave majorant.
\end{lemma}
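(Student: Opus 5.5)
Write $\psi=\psi_{\phi,p}$ and $\psi^\sharp=\psi_{\phi,p}^\sharp$ for brevity. The plan is to show two inclusions between the families of concave majorants. Every concave majorant of $\psi^\sharp$ is a concave majorant of $\psi$, and conversely every concave majorant of $\psi$ dominates $\psi^\sharp$. Once both hold, the two families $\mathcal{G}$ coincide, and so do their pointwise infima. Those infima are exactly the least concave majorants, by the construction in Lemma~\ref{lem:properties-psi-cav}; that construction applies verbatim to $\psi^\sharp$, since $\psi^\sharp$ is bounded by $\psi^\sharp(1)=\psi(1)$.

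The first inclusion is immediate. Lemma~\ref{lem:psi-sharp-properties} gives $\psi^\sharp\ge\psi$ on $[0,1]$, so any $G$ dominating $\psi^\sharp$ also dominates $\psi$.

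For the second inclusion, fix a concave $G$ on $[0,1]$ with $G\ge\psi$. We first note $G\ge 0$: we have $G(0)\ge\psi(0)=0$ and $G(1)\ge\psi(1)>0$, so concavity gives $G\ge0$ on $[0,1]$. Now fix $u\in(0,1]$ and $v\in[u,1]$. Write $u$ as the convex combination $u=\tfrac{u}{v}\,v+(1-\tfrac{u}{v})\,0$. Concavity then yields
\[
G(u)\ge \tfrac{u}{v}G(v)+\left(1-\tfrac{u}{v}\right)G(0)\ge \tfrac{u}{v}G(v)\ge \tfrac{u}{v}\psi(v),
\]
where the middle inequality uses $G(0)\ge0$. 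Taking the supremum over $v\in[u,1]$ gives $G(u)\ge u\sup_{u\le v\le 1}\psi(v)/v=\psi^\sharp(u)$. At $u=0$ we have $G(0)\ge0=\psi^\sharp(0)$. Hence $G$ is a concave majorant of $\psi^\sharp$.

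The only delicate point is the sign of $G(0)$, which is needed to discard the term $(1-u/v)G(0)$. This is exactly why we use $G(0)\ge\psi(0)=0$. No continuity of $G$ at the endpoints is needed, because we work only with concavity and the pointwise domination.
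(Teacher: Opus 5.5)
Your proof is correct and follows the paper's argument essentially verbatim. The domination $\psi_{\phi,p}^{\sharp}\ge\psi_{\phi,p}$ gives one inclusion of the families of concave majorants, and the convex combination $u=\tfrac{u}{v}v+(1-\tfrac{u}{v})\,0$ together with $G(0)\ge\psi_{\phi,p}(0)=0$ gives the other. Your extra remarks (that $G\ge0$ on all of $[0,1]$, and that the least concave majorant of $\psi_{\phi,p}^{\sharp}$ exists because $\psi_{\phi,p}^{\sharp}\le\psi_{\phi,p}(1)$) are correct but unnecessary: once the two families coincide, their pointwise infima are automatically the same.
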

\begin{proof}
By Lemma~\ref{lem:psi-sharp-properties}, $\psi_{\phi,p} \le \psi^{\sharp}_{\phi,p}$ on $[0,1]$. Hence, every concave function dominating $\psi_{\phi,p}^{\sharp}$
also dominates $\psi_{\phi,p}$.

Let $g$ be a concave function on $[0,1]$ such that
\begin{equation*}
g(u)\ge \psi_{\phi,p}(u), \quad u\in[0,1].
\end{equation*}
Since $\psi_{\phi,p}(0)=0$, we have $g(0)\ge 0$. For any
$0<u\le v\le 1$, we write 
\begin{equation*}
u=\frac{u}{v}v+\left(1-\frac{u}{v}\right)0.
\end{equation*}
By the concavity of $g$,
\begin{equation*}
g(u)\ge\frac{u}{v}g(v)
+\left(1-\frac{u}{v}\right)g(0)\ge
\frac{u}{v}\psi_{\phi,p}(v).
\end{equation*}
Taking the supremum over $v\in[u,1]$ gives
\begin{equation*}
g(u)\ge u\sup_{u\le v\le 1}
\frac{\psi_{\phi,p}(v)}{v}=
\psi_{\phi,p}^{\sharp}(u).
\end{equation*}
At $u=0$, we also have
\begin{equation*}
g(0)\ge 0=\psi_{\phi,p}^{\sharp}(0).
\end{equation*}
Consequently, $g(u) \ge \psi^{\sharp}_{\phi,p}\left(u\right) \forall u \in [0,1].$ Therefore, every concave function dominating $\psi_{\phi,p}$ also dominates $\psi_{\phi,p}^{\sharp}$.

As a result, $\psi_{\phi,p}$ and $\psi_{\phi,p}^{\sharp}$ are dominated by
exactly the same concave functions, and consequently have the same
least concave majorant.
\end{proof}
We now relate $\psi_{\phi,p}^{\sharp}$ to
$\psi_{\phi,p}^{\mathrm{cav}}$ used in
Theorem~\ref{thm:powerSOW-two-sample}. By Lemma~\ref{lem:same-leastconcavemajorant},
$\psi_{\phi,p}^{\mathrm{cav}}$ is also the least concave majorant of
$\psi_{\phi,p}^{\sharp}$. Moreover, by Lemma~\ref{lem:psi-sharp-properties},
$\psi_{\phi,p}^{\sharp}$ is quasiconcave 
; see \cite[Definition~5.6]{bennett1988interpolation}.
Therefore, \cite[Proposition~5.10]{bennett1988interpolation} yields
\begin{equation}\label{eq:psisharp-concaveenvelope}
\psi_{\phi,p}^{\sharp}(u)\le
\psi_{\phi,p}^{\mathrm{cav}}(u)\le
2\psi_{\phi,p}^{\sharp}(u),\qquad u\in[0,1].
\end{equation}

We next record a useful property of the Luxemburg norm; see, e.g.,
\cite[Proposition~6, page 77]{rao1991theory}. We state the version needed in our setting and provide its proof. To be more precise, we note that, for the measurable profiles in this appendix, ``nonzero'' means nonzero on a set of positive Lebesgue measure. The notation $f \equiv0$ means that $f=0$ Lebesgue-almost everywhere.
\begin{lemma}\label{lem:luxemburg-property-for-proof}
Let $0\le h\le1$ be a nonzero measurable function and let $b=\|h\|_\phi$. Then
$$\int_0^1 \phi\left(\frac{h(s)}{b}\right)\mathrm{d}s =1.
$$
\end{lemma}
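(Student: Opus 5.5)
Define $J(\lambda)\coloneqq \int_0^1 \phi\left(h(s)/\lambda\right)\mathrm{d}s$ for $\lambda>0$. The plan is to show that $b\in(0,\infty)$, that $J$ is finite and continuous on $(0,\infty)$, and then squeeze $J(b)$ between $1$ from above and $1$ from below.

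\textbf{Finiteness and positivity of $b$.} Since $0\le h\le 1$, for every $\lambda>0$ we have $0\le \phi(h/\lambda)\le \phi(1/\lambda)<\infty$. Hence $J(\lambda)\le\phi(1/\lambda)$ is finite. Because $\phi(1/\lambda)\to0$ as $\lambda\to\infty$ (Lemma~\ref{lem:properties-Nfunctions}), the admissible set $\{\lambda: J(\lambda)\le 1\}$ is nonempty, so $b<\infty$. For positivity, $h$ is nonzero on a set of positive measure, so there are $\varepsilon>0$ and a set $A$ with $|A|>0$ on which $h\ge\varepsilon$. Then $J(\lambda)\ge |A|\,\phi(\varepsilon/\lambda)\to\infty$ as $\lambda\to0^+$, since $\phi(t)\ge t\phi(1)$ for $t\ge 1$ (proof of Lemma~\ref{lem:properties-Nfunctions}). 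Thus $J>1$ near $0$ and $b>0$.

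\textbf{Continuity of $J$.} For $\lambda_k\to\lambda>0$, the integrands $\phi(h/\lambda_k)$ converge pointwise to $\phi(h/\lambda)$ by continuity of $\phi$. Once $\lambda_k\ge\lambda/2$, they are dominated by the constant $\phi(2/\lambda)$. Dominated convergence then gives $J(\lambda_k)\to J(\lambda)$. Moreover, $J$ is decreasing because $\phi$ is increasing.

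\textbf{Conclusion.} Take admissible $\lambda_k\downarrow b$, so that $J(\lambda_k)\le1$. Continuity gives $J(b)\le 1$. For the reverse inequality, every $\lambda\in(0,b)$ is inadmissible by the definition of the infimum together with monotonicity: if $J(\lambda)\le1$ held for some $\lambda<b$, this would contradict $b$ being the infimum. Hence $J(\lambda)>1$ for all $\lambda<b$. Letting $\lambda\uparrow b$ and using continuity yields $J(b)\ge1$, and therefore $J(b)=1$.

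The only delicate step is the domination used for continuity. It works here because $h$ is bounded; in a general Orlicz setting one would need $J$ to be finite near $b$ instead.
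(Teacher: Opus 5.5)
Your proof is correct and follows essentially the same route as the paper. Both arguments study $\lambda\mapsto\int_0^1\phi(h/\lambda)\,\mathrm{d}s$, use that it is decreasing and continuous on $(0,\infty)$ (which follows from $0\le h\le 1$), and squeeze its value at $b$ between the inadmissible scales $\lambda<b$ and admissible scales approaching $b$ from above. Your version is slightly more complete: you verify $0<b<\infty$ explicitly, which is where the nonzeroness of $h$ enters, and you justify continuity by dominated convergence, whereas the paper leaves both points implicit.
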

\begin{proof}
Define
\begin{equation*}
G(\lambda)\coloneqq  \int_0^1 \phi\left(\frac{h(s)}{\lambda}\right)\mathrm{d}s, \qquad \lambda>0.
\end{equation*}
Since $0\le h\le1$ and $\phi$ is continuous on $[0,\infty)$, the map $G$ is continuous on $(0,\infty)$. Besides, since $\phi$ is
increasing, $G$ is decreasing. Moreover, by the definition of the Luxemburg norm,
\begin{equation*}
b = \inf\{\lambda>0:G(\lambda)\le1\}.
\end{equation*}
Hence, $G(\lambda)>1$ for every $\lambda<b$. If $\lambda>b$, by the definition of the infimum, there exists
$\lambda'\in[b,\lambda)$ such that $G(\lambda')\le1$. Since $G$ is
decreasing, $G(\lambda)\le G(\lambda')\le 1$. Moreover, since $G$ is continuous on $(0,\infty)$, we deduce $G(b)=1$.
\end{proof}
We now construct a linear functional associated with the Luxemburg norm
that will be useful for deriving the upper bound below.
\begin{lemma}\label{lem:luxemburg-linear-functional}
Let $0\le h\le1$ be a nonzero measurable function. For $s \in [0,1]$, we define
$$z(s)\coloneqq \frac{h(s)}{\|h\|_{\phi}},\qquad\xi(s)\coloneqq \phi'_+\bigl(z(s)\bigr),
$$
and
$$D_0\coloneqq \int_0^1 z(s)\xi(s)\mathrm{d}s, \qquad y(s)\coloneqq \frac{\xi(s)}{D_0},
$$
where $\phi'_+$ denotes the right derivative of $\phi$. Then
$1 \le D_0<+\infty$, and for every $0\le r\le1$,
$$\int_0^1 r(s)y(s)\mathrm{d}s \le\|r\|_\phi.
$$
Moreover,
$$\int_0^1 h(s)y(s)\mathrm{d}s=\|h\|_\phi.
$$
\end{lemma}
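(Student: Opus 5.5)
The plan is to treat $y$ as a normalized subgradient of the convex modular $\rho(f)\coloneqq\int_0^1\phi(f)\,\mathrm{d}s$ at the point $z$, where $\rho(z)=1$ by Lemma~\ref{lem:luxemburg-property-for-proof}. The basic tool is the subgradient inequality for the convex function $\phi$:
\begin{equation*}
\phi(t)\ge\phi(a)+\phi'_+(a)(t-a),\qquad t,a\ge0.
\end{equation*}
The right derivative exists and is finite on $[0,\infty)$ because $\phi$ is convex and finite-valued; at $a=0$ it is finite since $\phi(t)/t$ is increasing and bounded near $0$.

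\emph{Bounds on $D_0$.} For the upper bound, observe that $0\le h\le1$ gives $0\le z\le 1/\|h\|_\phi$, a bounded range. Since $\phi'_+$ is increasing, $\xi\le\phi'_+(1/\|h\|_\phi)<\infty$, so $D_0<\infty$. Here $\|h\|_\phi>0$ because $h$ is nonzero on a set of positive measure (otherwise $\rho(h/\lambda)\to0$ would fail). For the lower bound, apply the subgradient inequality with $t=0$, $a=z(s)$, which gives $z\phi'_+(z)\ge\phi(z)$. Integrating and using Lemma~\ref{lem:luxemburg-property-for-proof},
\begin{equation*}
D_0\ge\int_0^1\phi(z)\,\mathrm{d}s=1.
\end{equation*}

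\emph{Identity and inequality.} The identity is immediate:
\begin{equation*}
\int_0^1 hy=\|h\|_\phi\int_0^1 zy=\|h\|_\phi\,\frac{D_0}{D_0}=\|h\|_\phi.
\end{equation*}
For the inequality, fix $0\le r\le1$. If $r\equiv0$ it is trivial. Otherwise set $\lambda=\|r\|_\phi>0$; by Lemma~\ref{lem:luxemburg-property-for-proof}, $\int_0^1\phi(r/\lambda)=1$. Apply the subgradient inequality with $t=r(s)/\lambda$ and $a=z(s)$, then integrate. All terms are integrable because $\xi$ is bounded and $r$, $z$ are bounded. This yields
\begin{equation*}
1\ge 1+\int_0^1\xi\,\frac{r}{\lambda}\,\mathrm{d}s-D_0,
\end{equation*}
that is, $\int_0^1 r\xi\le\lambda D_0$. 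Dividing by $D_0$ gives $\int_0^1 ry\le\|r\|_\phi$.

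The only delicate points are the integrability and measurability of $\xi$, and the positivity and finiteness of the norms. $\xi$ is measurable since $\phi'_+$ is monotone. The norms are finite because $\phi(r/\lambda)\le\phi(1/\lambda)$. These are routine, so I expect no step to be a serious obstacle; the heart of the proof is the single convexity inequality, used twice.
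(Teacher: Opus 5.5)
Your proposal is correct and follows essentially the same route as the paper. The subgradient inequality $\phi(t)\ge\phi(z(s))+\phi'_+(z(s))(t-z(s))$ is used at $t=0$ to get $D_0\ge\int_0^1\phi(z)\,\mathrm{d}s=1$, and at $t=r(s)/\|r\|_\phi$ to get $\int_0^1 r\xi\,\mathrm{d}s\le\|r\|_\phi D_0$; finiteness of $D_0$ comes from $0\le z\le\|h\|_\phi^{-1}$ and the monotonicity of $\phi'_+$, exactly as in the paper, while your remarks on $\|h\|_\phi>0$, the finiteness of $\phi'_+(0)$, and the measurability of $\xi$ fill in details the paper leaves implicit.
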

\begin{proof}
Fix $s\in[0,1]$. Since $\phi$ is convex,
$\phi'_+(z(s))$ is a subgradient of $\phi$ at $z(s)$. Hence, for every $t\ge0$,
\begin{equation}\label{eq:subgradient}
\phi(t) \ge \phi(z(s)) + \phi'_+(z(s)) \left(t-z(s)\right).
\end{equation}
Taking $t=0$ and using $\phi(0)=0$, we obtain
\begin{equation*}
z(s)\phi'_+(z(s)) \ge \phi(z(s)).
\end{equation*}
Since $\xi(s)=\phi'_+(z(s))$, it follows that
\begin{equation*}
z(s)\xi(s)\ge\phi(z(s)).
\end{equation*}
Integrating over $s\in[0,1]$ and using
Lemma~\ref{lem:luxemburg-property-for-proof}, we have
\begin{equation*}
D_0 = \int_0^1 z(s)\xi(s)\mathrm{d}s \ge \int_0^1\phi(z(s))\mathrm{d}s = 1.
\end{equation*}
Since $0 \le z(s) \le \|h\|_{\phi}^{-1}$ for every $s \in [0,1]$ and since $\phi'_{+}$ is finite-valued and increasing on $[0,+\infty)$, we have
\begin{equation*}
0 \le D_0 \le \|h\|_{\phi}^{-1}\phi'_{+}\left(\|h\|_{\phi}^{-1}\right) <+\infty.
\end{equation*}
Let $0\le r \le 1$. If $r\equiv0$, the inequality is immediate. Hence, assume that $r$ is nonzero
and define
\begin{equation*}
w(s)\coloneqq \frac{r(s)}{\|r\|_\phi}.
\end{equation*}
Again fix $s\in[0,1]$. Applying the subgradient inequality at $z(s)$ with $t=w(s)$ gives
\begin{equation*}
\phi(w(s)) \ge \phi(z(s)) + \xi(s)\bigl(w(s)-z(s)\bigr).
\end{equation*}
Integrating over $[0,1]$ and using
Lemma~\ref{lem:luxemburg-property-for-proof},
\begin{equation*}
\int_0^1\phi(w(s))\mathrm{d}s=\int_0^1\phi(z(s))\mathrm{d}s=1.
\end{equation*}
Therefore,
$$\int_0^1 \xi(s)w(s)\mathrm{d}s\le \int_0^1 \xi(s)z(s)\mathrm{d}s= D_0.
$$
Since $r(s)=\|r\|_\phi w(s)$ and
$y(s)=\xi(s)/D_0$,
\begin{align*}
\int_0^1r(s)y(s)\mathrm{d}s=\frac{\|r\|_\phi}{D_0}\int_0^1 w(s)\xi(s)\mathrm{d}s\le \|r\|_\phi.
\end{align*}
Finally, since $h(s)=\|h\|_\phi z(s)$,
we have
\begin{align*}
\int_0^1 h(s)y(s)\mathrm{d}s=\frac{\|h\|_\phi}{D_0} \int_0^1 z(s)\xi(s)\mathrm{d}s=\|h\|_\phi.
\end{align*}
\end{proof}
The following identity expresses the regularization $\psi_{\phi,p}^{\sharp}$ directly in terms of the growth of $\phi$ and
will be useful for the upper bound below.
\begin{lemma}\label{lem:psi-sharp-growth}
For every $u\in(0,1]$,
$$\frac{\psi_{\phi,p}^{\sharp}(u)}{u}= \sup_{\phi^{-1}(1)\le t\le\phi^{-1}(1/u)} \frac{\phi(t)}{t^p}.
$$
\end{lemma}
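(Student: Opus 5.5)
The plan is a direct change of variables in the supremum that defines $\psi_{\phi,p}^{\sharp}$. Fix $u\in(0,1]$. By \eqref{eq:psisharp},
\[
\frac{\psi_{\phi,p}^{\sharp}(u)}{u}=\sup_{u\le v\le 1}\frac{\psi_{\phi,p}(v)}{v}
=\sup_{u\le v\le 1}\frac{1/v}{\phi^{-1}(1/v)^{p}} ,
\]
using \eqref{eq:psiphip} for $v\in(0,1]$. I would reparametrize each $v\in[u,1]$ by $t\coloneqq\phi^{-1}(1/v)$. By Lemma~\ref{lem:properties-Nfunctions}, $\phi$ is a continuous strictly increasing bijection of $[0,+\infty)$, and so is $\phi^{-1}$. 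Consequently $1/v=\phi(t)$.

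Substituting $1/v=\phi(t)$ and $\phi^{-1}(1/v)=t$, each term in the supremum becomes
\[
\frac{\psi_{\phi,p}(v)}{v}=\frac{\phi(t)}{t^{p}}.
\]
Here $t>0$, because $v\le 1$ gives $t\ge\phi^{-1}(1)>0$, so the expression is well defined.

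It remains to check that the map $v\mapsto \phi^{-1}(1/v)$ sends $[u,1]$ bijectively onto $[\phi^{-1}(1),\phi^{-1}(1/u)]$.
\begin{itemize}
\item The map $v\mapsto 1/v$ is a strictly decreasing bijection of $[u,1]$ onto $[1,1/u]$.
\item Composing with the strictly increasing continuous bijection $\phi^{-1}$ gives a strictly decreasing bijection onto $[\phi^{-1}(1),\phi^{-1}(1/u)]$.
\item Its inverse is $t\mapsto 1/\phi(t)$.
\end{itemize}
Since the two suprema range over the same set of values, indexed through a bijection, they coincide, which is the claimed identity.

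There is no real obstacle. The only point needing care is the bijectivity and monotonicity of $\phi^{-1}$, which is already supplied by Lemma~\ref{lem:properties-Nfunctions}, together with $\phi^{-1}(1)>0$ so that division by $t^{p}$ is legitimate. The degenerate case $u=1$ is covered automatically: both sides reduce to the single value $\psi_{\phi,p}(1)=1/\phi^{-1}(1)^{p}=\phi(t)/t^{p}$ at $t=\phi^{-1}(1)$.
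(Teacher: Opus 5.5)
Your proposal is correct and follows essentially the same route as the paper: the paper's proof uses the same substitution $t=\phi^{-1}(1/v)$ with $v=1/\phi(t)$, and notes that $t$ ranges over $\left[\phi^{-1}(1),\phi^{-1}(1/u)\right]$ as $v$ ranges over $[u,1]$. It likewise identifies $\psi_{\phi,p}(v)/v=\phi(t)/t^{p}$ termwise; your explicit checks of bijectivity and of $\phi^{-1}(1)>0$ make the details slightly more explicit than the paper does.
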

\begin{proof}
By definition,
\begin{equation*}
\frac{\psi_{\phi,p}^{\sharp}(u)}{u} = \sup_{u\le v\le1} \frac{\psi_{\phi,p}(v)}{v}.
\end{equation*}
For $v\in[u,1]$, set
\begin{equation*}
t\coloneqq \phi^{-1}(1/v).
\end{equation*}
Since $\phi$ is strictly increasing,
\begin{equation*}
v=\frac{1}{\phi(t)},
\end{equation*}
and, as $v$ varies over $[u,1]$, $t$ varies over $\left[\phi^{-1}(1),\,\phi^{-1}(1/u)\right].$
Moreover,
\begin{equation*}
\frac{\psi_{\phi,p}(v)}{v}=\frac{1}{\phi^{-1}(1/v)^p}\frac{1}{v} = \frac{\phi(t)}{t^p}.
\end{equation*}
Taking the supremum over $v\in[u,1]$ gives the result.
\end{proof}
We now introduce a quantity that measures the largest possible variation between $\|f\|^p_{\phi}$ and $\|g\|^{p}_{\phi}$ among functions $f$ and $g$ that are close in $L^1$.
\begin{equation}\label{eq:Omega-supL1}
\Omega_{\phi,p}(u) \coloneqq\sup_{\substack{0\le f,g\le1\\\|f-g\|_{L^1(0,1)}\le u}} \left|\|f\|_\phi^p-\|g\|_\phi^p\right|, \quad u \in [0,1].
\end{equation}
For the upper-bound argument in Theorem~\ref{thm:powerSOW-two-sample}, it will be useful to work with pointwise ordered pairs $f$ and $g$. By symmetry, one may interchange $f$ and $g$ so that $\|f\|_{\phi} \ge \|g\|_{\phi}$, but this does not imply $f \ge g$ pointwise. However, replacing $f$ and $g$ by their pointwise maximum and minimum preserves their $L^1$-distance and can only increase the difference of their powered Luxemburg norms. We formalize this observation in the following lemma.
\begin{lemma}\label{lem:order-Omega}
For every $u\in [0,1]$,
\begin{equation*}
\Omega_{\phi,p}(u) = \sup_{\substack{0\le g\le f\le1\\\|f-g\|_{L^1(0,1)}\le u}}\left( \|f\|_\phi^p-\|g\|_\phi^p\right).
\end{equation*}
\end{lemma}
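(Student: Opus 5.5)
We prove the identity by establishing the two inequalities separately, writing $S(u)$ for the supremum on the right-hand side.

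\emph{The inequality $S(u)\le\Omega_{\phi,p}(u)$.} Every pair with $0\le g\le f\le 1$ and $\|f-g\|_{L^1(0,1)}\le u$ is admissible in the definition \eqref{eq:Omega-supL1} of $\Omega_{\phi,p}(u)$. For such a pair, $\|f\|_\phi^p-\|g\|_\phi^p=\left|\|f\|_\phi^p-\|g\|_\phi^p\right|$, since $g\le f$ pointwise and Lemma~\ref{lem:properties-luxemburgnorm} give $\|g\|_\phi\le\|f\|_\phi$. Hence $S(u)\le\Omega_{\phi,p}(u)$.

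\emph{The inequality $\Omega_{\phi,p}(u)\le S(u)$.} Fix an admissible pair $0\le f,g\le1$ with $\|f-g\|_{L^1(0,1)}\le u$. Set $F\coloneqq\max\{f,g\}$ and $G\coloneqq\min\{f,g\}$, both taken pointwise. These functions are measurable and satisfy $0\le G\le F\le1$. Moreover, $|F-G|=|f-g|$ pointwise, so $\|F-G\|_{L^1(0,1)}=\|f-g\|_{L^1(0,1)}\le u$, and the pair $(F,G)$ is admissible for $S(u)$.

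It then suffices to show $\left|\|f\|_\phi^p-\|g\|_\phi^p\right|\le \|F\|_\phi^p-\|G\|_\phi^p$. By symmetry we may assume $\|f\|_\phi\ge\|g\|_\phi$. We have $f\le F$ and $G\le g$ pointwise, so the monotonicity in Lemma~\ref{lem:properties-luxemburgnorm} gives $\|f\|_\phi\le\|F\|_\phi$ and $\|G\|_\phi\le\|g\|_\phi$. All these norms are finite, because each function is bounded by $1$ and the constant $1$ has Luxemburg norm $\omega_\phi(1)$. Since $r\mapsto r^p$ is increasing on $[0,\infty)$,
\begin{equation*}
\|f\|_\phi^p-\|g\|_\phi^p\le \|F\|_\phi^p-\|G\|_\phi^p\le S(u).
\end{equation*}
Taking the supremum over admissible $(f,g)$ yields $\Omega_{\phi,p}(u)\le S(u)$.

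No step is a real obstacle. The only points to check are the measurability of the pointwise maximum and minimum, and the finiteness of the Luxemburg norms so that Lemma~\ref{lem:properties-luxemburgnorm} applies; both are immediate for functions with values in $[0,1]$. The reduction to pointwise-ordered pairs is the useful consequence of the lemma.
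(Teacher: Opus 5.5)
Your proposal is correct and follows essentially the same route as the paper: replace an admissible pair by its pointwise maximum and minimum, which keeps the same $L^1$ distance, and use monotonicity of the Luxemburg norm for one inequality and admissibility of ordered pairs for the other. The only cosmetic difference is that you take $\|f\|_\phi\ge\|g\|_\phi$ without loss of generality, whereas the paper compares $\|F\|_\phi^p-\|G\|_\phi^p$ directly with $\max\{\|f\|_\phi^p,\|g\|_\phi^p\}-\min\{\|f\|_\phi^p,\|g\|_\phi^p\}$.
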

\begin{proof}
Let $(f,g)$ be
any pair satisfying
\begin{equation*}
0\le f,g\le1, \quad \|f-g\|_{L^1(0,1)}\le u.
\end{equation*}
Define
$F\coloneqq \max \left\{f,g\right\}, \quad G\coloneqq \min \left\{f,g\right\}.$
Then, $0\le G\le F \le 1$ and
$$F-G=|f-g|,\quad \|F-G\|_{L^1(0,1)}
=\|f-g\|_{L^1(0,1)}\le u.
$$
Moreover, by the monotonicity of the Luxemburg norm,
\begin{equation*}
\|F\|_\phi\ge\max\{\|f\|_\phi,\|g\|_\phi\},
\quad\|G\|_\phi \le \min\{\|f\|_\phi,\|g\|_\phi\}.
\end{equation*}
Hence,
\begin{equation*}
\|F\|^{p}_{\phi}-\|G\|^p_{\phi} \ge \max\{\|f\|^p_\phi,\|g\|^p_\phi\} - \min\{\|f\|^p_\phi,\|g\|^p_\phi\}=\left|\|f\|^p_{\phi}-\|g\|^p_{\phi}\right|.
\end{equation*}
It follows that
$$\Omega_{\phi,p}(u)\le\sup_{\substack{0\le G\le F\le1\\\|F-G\|_{L^1(0,1)}\le u}}\left(
\|F\|_\phi^p-\|G\|_\phi^p\right).
$$
Conversely, every ordered pair $0\le G\le F\le 1 $ satisfying $\|F-G\|_{L^1(0,1)}\le u$ is admissible in the definition of $\Omega_{\phi,p}(u)$. Moreover, since $G \le F$, the monotonicity of the Luxemburg norm gives $\|G\|_{\phi} \le \|F\|_{\phi}$. Hence,
\begin{equation*}
\|F\|^p_{\phi}-\|G\|^p_{\phi} =\left|\|F\|^p_{\phi}-\|G\|^p_{\phi}\right|\le \Omega_{\phi,p}(u).
\end{equation*}
Taking the supremum over such $(F,G)$ gives the reverse inequality. Relabeling $F$ and $G$ as $f,g$ completes the proof.
\end{proof}
With this reduction in hand, we are now ready to establish the main technical estimate for $\Omega_{\phi,p}$.
\begin{proposition}\label{prop:luxemburg-L1-modulus}
For every $u\in[0,1]$,
$$\psi_{\phi,p}^{\sharp}(u)\le \Omega_{\phi,p}(u)
\le p2^p\psi_{\phi,p}^{\sharp}(u).
$$
Consequently,
$$\frac{1}{2}\psi_{\phi,p}^{\mathrm{cav}}(u) \le\Omega_{\phi,p}(u)\le p2^p\psi_{\phi,p}^{\mathrm{cav}}(u).
$$
\end{proposition}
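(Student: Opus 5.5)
The plan is to prove the two inequalities for $\psi^{\sharp}_{\phi,p}$ separately, by an explicit construction for the lower bound and a duality argument for the upper bound. The statement for $\psi^{\mathrm{cav}}_{\phi,p}$ then follows at once from \eqref{eq:psisharp-concaveenvelope}: $\frac12\psi^{\mathrm{cav}}_{\phi,p}\le\psi^{\sharp}_{\phi,p}$ and $\psi^{\sharp}_{\phi,p}\le\psi^{\mathrm{cav}}_{\phi,p}$. The case $u=0$ is trivial, since then $f=g$ almost everywhere; so fix $u\in(0,1]$.

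\emph{Lower bound.} Fix $v\in[u,1]$ and take $f=\mathbbm{1}_{[0,v]}$ and $g=(1-u/v)\mathbbm{1}_{[0,v]}$. Then $0\le g\le f\le1$ and $\|f-g\|_{L^1(0,1)}=u$. A direct computation from the definition gives $\|\mathbbm{1}_{[0,v]}\|_\phi=\omega_\phi(v)$, so $\|f\|_\phi^p=\psi_{\phi,p}(v)$. By Lemma~\ref{lem:properties-luxemburgnorm}, $\|g\|_\phi^p=(1-u/v)^p\psi_{\phi,p}(v)$. Since $p\ge1$, we have $1-(1-u/v)^p\ge u/v$, and hence $\Omega_{\phi,p}(u)\ge \frac{u}{v}\psi_{\phi,p}(v)$. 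Taking the supremum over $v\in[u,1]$ yields $\Omega_{\phi,p}(u)\ge\psi^{\sharp}_{\phi,p}(u)$.

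\emph{Upper bound.} By Lemma~\ref{lem:order-Omega}, it suffices to consider $0\le g\le f\le1$ with $\|f-g\|_{L^1(0,1)}\le u$. We may assume $f\not\equiv0$, and we set $b\coloneqq\|f\|_\phi$. Let $y$ be the functional of Lemma~\ref{lem:luxemburg-linear-functional} with $h=f$. Then
\begin{equation*}
b=\int_0^1 f y=\int_0^1 g y+\int_0^1 (f-g)y\le \|g\|_\phi+\int_0^1(f-g)y .
\end{equation*}
Since $z=f/b\le 1/b$, $\phi'_+$ is increasing, and $D_0\ge1$, we get $y\le\phi'_+(1/b)$, and therefore $b-\|g\|_\phi\le u\,\phi'_+(1/b)$. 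The mean value inequality for $r\mapsto r^p$ then gives $b^p-\|g\|_\phi^p\le p\,b^{p-1}u\,\phi'_+(1/b)$. By convexity and $\phi\ge0$, $\phi'_+(t)\le(\phi(2t)-\phi(t))/t\le\phi(2t)/t$. Setting $t=2/b$, this gives
\begin{equation*}
b^p-\|g\|_\phi^p\le p\,2^p\,u\,\frac{\phi(t)}{t^p}.
\end{equation*}

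The main obstacle is checking that $t=2/b$ lies in the range $[\phi^{-1}(1),\phi^{-1}(1/u)]$ where Lemma~\ref{lem:psi-sharp-growth} applies. The lower end is automatic: $f\le1$ gives $b\le\omega_\phi(1)=1/\phi^{-1}(1)$, so $t\ge2\phi^{-1}(1)\ge\phi^{-1}(1)$. For the upper end, split into two cases.
\begin{itemize}
\item If $2/b\le\phi^{-1}(1/u)$, Lemma~\ref{lem:psi-sharp-growth} gives $u\,\phi(t)/t^p\le\psi^{\sharp}_{\phi,p}(u)$, and hence the bound $p2^p\psi^{\sharp}_{\phi,p}(u)$.
\item Otherwise $b<2/\phi^{-1}(1/u)=2\omega_\phi(u)$. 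Then trivially $b^p-\|g\|_\phi^p\le b^p\le 2^p\psi_{\phi,p}(u)\le p2^p\psi^{\sharp}_{\phi,p}(u)$, using Lemma~\ref{lem:psi-sharp-properties}.
\end{itemize}
In either case we obtain the stated constant $p2^p$.
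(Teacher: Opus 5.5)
Your proof is correct and follows the paper's argument: the same indicator construction for the lower bound, the reduction to ordered pairs, the dual functional $y$ of Lemma~\ref{lem:luxemburg-linear-functional}, the chord bound $\phi'_+(z)\le\phi(2z)/z$, Lemma~\ref{lem:psi-sharp-growth}, and the same case split on $\|f\|_\phi$ versus $2\omega_\phi(u)$. The only difference is a small streamlining: you bound $y$ uniformly by $\phi'_+(1/b)$ using the monotonicity of $\phi'_+$, and you note that $b\le 1/\phi^{-1}(1)$ forces $2/b\ge\phi^{-1}(1)$, which makes the paper's pointwise sub-case split on whether $f(s)/\|f\|_\phi\le\phi^{-1}(1)/2$ unnecessary.
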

\begin{proof}
The case $u=0$ is immediate. Hence, let $u\in(0,1]$.
\paragraph{Lower bound.}
Fix $v\in[u,1]$ and choose a measurable set $A\subset(0,1)$ with $|A|=v$. Define
\begin{equation*}
f\coloneqq\mathbbm{1}_A, \quad g\coloneqq \left(1-\frac{u}{v}\right)\mathbbm{1}_A.
\end{equation*}
Then $0\le g\le f\le1$ and
\begin{equation*}
\|f-g\|_{L^1(0,1)} = \frac{u}{v}|A| = u.
\end{equation*}
    Since $\|\mathbbm{1}_A\|_\phi= \omega_\phi(v),$ the homogeneity of the Luxemburg norm gives
    \begin{equation*}
    \|f\|_\phi = \omega_\phi(v),
    \quad \|g\|_\phi = \left(1-\frac{u}{v}\right)\omega_\phi(v).
    \end{equation*}
    Therefore,
    \begin{align*}
    \|f\|_\phi^p-\|g\|_\phi^p=\omega_\phi(v)^p\left[
    1-\left(1-\frac{u}{v}\right)^p
    \right]=\psi_{\phi,p}(v)\left[1-
    \left(1-\frac{u}{v}\right)^p\right].
    \end{align*}
    Since $0 \le 1-u/v \le 1 $ and $p \ge 1$,
    \begin{equation*}
    1- \left(1-\frac{u}{v}\right)^p\ge \frac{u}{v}.
    \end{equation*}
    Hence
    \begin{equation*}
    \Omega_{\phi,p}(u)\ge \frac{u}{v}\psi_{\phi,p}(v).
    \end{equation*}
    Taking the supremum over $v\in[u,1]$ yields
    $$\Omega_{\phi,p}(u)\ge u\sup_{u\le v\le1}
    \frac{\psi_{\phi,p}(v)}{v}= \psi_{\phi,p}^{\sharp}(u).
    $$
    \paragraph{Upper bound.} By Lemma~\ref{lem:order-Omega}, it suffices to consider ordered pairs. Fix $0 \le g \le f \le 1$ satisfying
    \begin{equation*}
    \|f-g\|_{L^1(0,1)} \le u.
    \end{equation*}
    If $f \equiv 0$, then $g \equiv 0$ and the claim is trivial. Hence, assume $f \not\equiv 0.$

    We distinguish two cases.
    
    \emph{Case 1.} We assume
    \begin{equation*}
    \|f\|_{\phi} \le \frac{2}{\phi^{-1}\left(1/u\right)}. 
    \end{equation*}
    Then, for $p \in [1,+\infty)$, using Lemma~\ref{lem:psi-sharp-properties}, we have
    \begin{align*}
    \|f\|^p_{\phi} - \|g\|^p_{\phi} & \le \|f\|^p_{\phi}\\
    &\le \frac{2^p}{\left[\phi^{-1}\left(1/u\right)\right]^{p}}
    = 2 ^p\psi_{\phi,p}\left(u\right)\\
    & \le 2^p \psi^{\sharp}_{\phi,p}(u) \le p 2^p\psi^{\sharp}_{\phi,p}(u).
    \end{align*}
    \emph{Case 2.} We now assume
    \begin{equation*}
    \|f\|_{\phi} > \frac{2}{\phi^{-1}(1/u)}.
    \end{equation*}
    Applying Lemma~\ref{lem:luxemburg-linear-functional} to $f$, there exists a function $y$ such that
    \begin{equation*}
    \int_{0}^{1} f(s)y(s)\mathrm{d}s = \|f\|_{\phi}
    \end{equation*}
    and
    \begin{equation*}
    \int_{0}^{1}g(s)y(s)\mathrm{d}s \le \|g\|_{\phi}.
    \end{equation*}
    Therefore,
    \begin{align*}
    \|f\|_{\phi} - \|g\|_{\phi} &\le \int_{0}^{1} f(s)y(s)\mathrm{d}s - \int_{0}^{1}g(s)y(s)\mathrm{d}s\\
    &= \int_{0}^{1} \left(f(s)-g(s)\right)y(s)\mathrm{d}s.
    \end{align*}
    Since $0 \le \|g\|_{\phi}\le \|f\|_{\phi}$, the inequality $a^p - b^p \le p a^{p-1}(a-b),$ for all $0 \le b \le a$ gives
    \begin{equation}\label{eq:proof-firstbound}
    \|f\|^p_{\phi} - \|g\|^{p}_{\phi} \le p \|f\|^{p-1}_{\phi} \int_{0}^{1} \left(f(s)-g(s)\right)y(s)\mathrm{d}s.
    \end{equation}
    \paragraph{Pointwise estimate.} We now claim that
    \begin{equation*}
    \|f\|^{p-1}_{\phi}y(s) \le 2^p \frac{\psi^\sharp_{\phi,p}(u)}{u}, \quad  \forall s \in [0,1] \text{ such that } f(s)>0. 
    \end{equation*}
    By Lemma~\ref{lem:luxemburg-linear-functional}, 
    \begin{equation*}
    y(s) = \frac{\phi'_{+}\left(f(s)/\|f\|_{\phi}\right)}{D_0}, \quad D_0 \ge 1.
    \end{equation*}
    Since $\phi$ is increasing, $\phi'_{+}\ge 0$. Hence, $D_0 \ge 1$ implies 
    \begin{equation*}
    y(s) \le \phi'_{+}\left(\frac{f(s)}{\|f\|_{\phi}}\right).
    \end{equation*}
    Therefore, it is enough to show that 
    \begin{equation*}
    \|f\|^{p-1}_{\phi}\phi'_{+}\left(\frac{f(s)}{\|f\|_{\phi}}\right) \le 2^p \frac{\psi^{\sharp}_{\phi,p}(u)}{u}.
    \end{equation*}
    Fix $s \in [0,1]$ such that $f(s)>0$. From \eqref{eq:subgradient}, we deduce
    \begin{equation}\label{eq:subgradient-consequence}
    \phi'_{+}\left(\frac{f(s)}{\|f\|_{\phi}}\right) \le \frac{\phi\left(2f(s)/\|f\|_{\phi}\right)-\phi\left(f(s)/\|f\|_{\phi}\right)}{f(s)/\|f\|_{\phi}} \le \frac{\phi\left(2f(s)/\|f\|_{\phi}\right)}{f(s)/\|f\|_{\phi}}.
    \end{equation}
    First, suppose that
    \begin{equation*}
    0 < \frac{f(s)}{\|f\|_{\phi}} \le \frac{\phi^{-1}(1)}{2}.
    \end{equation*}
    Since 
    \begin{equation*}
    0 \le \frac{2f(s)}{\|f\|_{\phi}\phi^{-1}(1)}\le 1,
    \end{equation*}
    the convexity of $\phi$, together with $\phi(0)=0$ and $\phi\left(\phi^{-1}(1)\right)=1$ gives
    \begin{equation*}
    \phi \left(\frac{2f(s)}{\|f\|_{\phi}}\right) \le \frac{2f(s)}{\|f\|_{\phi}\phi^{-1}(1)}.
    \end{equation*}
    Therefore,
    \begin{equation*}
    \phi'_{+}\left(\frac{f(s)}{\|f\|_{\phi}}\right) \le \frac{2}{\phi^{-1}(1)}.
    \end{equation*}
    Moreover, since $0 \le f \le 1$,
    \begin{equation*}
    \|f\|_{\phi}\le \|1\|_{\phi} = 1/\phi^{-1}(1).
    \end{equation*}
    Hence
    \begin{equation*}
    \|f\|^{p-1}_{\phi} \phi'_{+}\left(\frac{f(s)}{\|f\|_{\phi}}\right) \le \frac{2}{\left[\phi^{-1}(1)\right]^p}.
    \end{equation*}
    By Lemma~\ref{lem:psi-sharp-growth}, we have
    \begin{equation*}
    \frac{\psi_{\phi,p}^{\sharp}(u)}{u}= \sup_{\phi^{-1}(1)\le t\le\phi^{-1}(1/u)} \frac{\phi(t)}{t^p} \ge \frac{1}{[\phi^{-1}(1)]^p}.
    \end{equation*}
    Thus, in this case
    \begin{equation*}
    \|f\|^{p-1}_{\phi} \phi'_{+}\left(\frac{f(s)}{\|f\|_{\phi}}\right) \le 2 \frac{\psi^{\sharp}_{\phi,p}(u)}{u} \le 2^p \frac{\psi^{\sharp}_{\phi,p}(u)}{u}.
    \end{equation*}
    It remains to consider the case where
    \begin{equation*}
    \frac{f(s)}{\|f\|_{\phi}} >\frac{\phi^{-1}(1)}{2}.
    \end{equation*}
    Since we are in the case where $$\|f\|_{\phi} > \frac{2}{\phi^{-1}(1/u)},$$
    and $f (s) \le 1$, we have
    \begin{equation*}
    \phi^{-1}(1) < \frac{2f(s)}{\|f\|_{\phi}} < \phi^{-1}(1/u).
    \end{equation*}
    Hence, using Lemma~\ref{lem:psi-sharp-growth}, 
    \begin{equation*}
    \phi\left(\frac{2f(s)}{\|f\|_{\phi}}\right) \le \frac{\psi^{\sharp}_{\phi,p}(u)}{u} \left(\frac{2f(s)}{\|f\|_{\phi}}\right)^p.
    \end{equation*}
    Therefore, from \eqref{eq:subgradient-consequence}, we have
    \begin{equation*}
    \phi'_{+}\left(\frac{f(s)}{\|f\|_{\phi}}\right) \le \frac{\phi\left(2f(s)/\|f\|_{\phi}\right)}{f(s)/\|f\|_{\phi}} \le 2^p\frac{\psi^{\sharp}_{\phi,p}(u)}{u}\left(\frac{f(s)}{\|f\|_{\phi}}\right)^{p-1}.
    \end{equation*}
    Consequently, 
    \begin{equation*}
    \|f\|^{p-1}_{\phi}\phi'_{+}\left(\frac{f(s)}{\|f\|_{\phi}}\right) \le 2^p\frac{\psi^{\sharp}_{\phi,p}(u)}{u} f(s)^{p-1} \le 2^p \frac{\psi^{\sharp}_{\phi,p}(u)}{u},
    \end{equation*}
    where the last inequality comes from $0 \le f(s) \le 1$ and $p \ge 1$.
    
    Thus, for every $s\in [0,1]$ such that $f(s)>0$, we have
    \begin{equation*}
    \|f\|^{p-1}_{\phi}y(s) \le 2^p \frac{\psi^{\sharp}_{\phi,p}(u)}{u}.
    \end{equation*}
    Since $0 \le g \le f$, we have $f-g=0$ on the set $\{f=0\}$. Substituting the pointwise bound into \eqref{eq:proof-firstbound} yields
    \begin{align*}
    \|f\|^p_{\phi}-\|g\|^{p}_{\phi} &\le p \|f\|_{\phi}^{p-1} \int_{0}^{1}\left(f(s)-g(s)\right)y(s)\mathrm{d}s\\
    & = p \|f\|_{\phi}^{p-1}\int_{\{s\in[0,1]:\,f(s)>0\}}\left(f(s)-g(s)\right)y(s)\mathrm{d}s\\
    & \le p2^p\frac{\psi^{\sharp}_{\phi,p}(u)}{u} \int_{\{s\in[0,1]:\,f(s)>0\}} \left(f(s)-g(s)\right) \mathrm{d}s\\
    & = p2^p\frac{\psi^{\sharp}_{\phi,p}(u)}{u} \int_{0}^1 \left(f(s)-g(s)\right) \mathrm{d}s   \\
    &=p2^p\frac{\psi^{\sharp}_{\phi,p}(u)}{u}
    \|f-g\|_{L^1(0,1)}\\
    & \le p2^p \psi^{\sharp}_{\phi,p}(u).
    \end{align*}
    Taking the supremum over all admissible pairs gives
    \begin{equation*}
    \Omega_{\phi,p}(u) \le p2^p \psi^{\sharp}_{\phi,p}(u).
    \end{equation*}
    Together with the lower bound proved above, this establishes the first assertion. The second assertion follows by combining this result with \eqref{eq:psisharp-concaveenvelope}.
    \end{proof}
With the estimate for $\Omega_{\phi,p}$ established, we are now ready to prove Theorem~\ref{thm:powerSOW-two-sample}.
\begin{proof}[Proof of Theorem~\ref{thm:powerSOW-two-sample}]
We have
\begin{align*}
\mathbb{E}\left|\operatorname{SOW}^p_{\phi,p}\left(\widehat{\mu}_n,\widehat{\nu}_m\right)-\operatorname{SOW}^p_{\phi,p}\left(\mu,\nu\right)\right| & = \mathbb{E}\left|\int_{\mathbb{S}^{d-1}}\operatorname{W}^p_{\phi}\left(\Pi^{\theta}_{\#}\widehat{\mu}_n,\Pi^{\theta}_{\#}\widehat{\nu}_m\right)-\operatorname{W}^p_{\phi}\left(\Pi^{\theta}_{\#}\mu,\Pi^{\theta}_{\#}\nu\right)\mathrm{d}\sigma\left(\theta\right)\right|\\
& \le \int_{\mathbb{S}^{d-1}}\mathbb{E}\left|\operatorname{W}^p_{\phi}\left(\Pi^{\theta}_{\#}\widehat{\mu}_n,\Pi^{\theta}_{\#}\widehat{\nu}_m\right)-\operatorname{W}^p_{\phi}\left(\Pi^{\theta}_{\#}\mu,\Pi^{\theta}_{\#}\nu\right)\right|\mathrm{d}\sigma(\theta),
\end{align*}
where the inequality follows from the triangle inequality and Tonelli's theorem.

Fix $\theta \in \mathbb{S}^{d-1}$. We now bound
\begin{equation*}
\left|\operatorname{W}^p_{\phi}\left(\Pi^{\theta}_{\#}\widehat{\mu}_n,\Pi^{\theta}_{\#}\widehat{\nu}_m\right)-\operatorname{W}^p_{\phi}\left(\Pi^{\theta}_{\#}\mu,\Pi^{\theta}_{\#}\nu\right)\right|.
\end{equation*}
For a probability measure $\alpha$ on $\mathbb{R}$, let $F_{\alpha}$ denote its cumulative distribution function and let $F^{-1}_{\alpha}$ denote its quantile function. With this convention, define
\begin{equation*}
\widehat{h}_{\theta}(s) \coloneqq\frac{ \left|
F_{\Pi^{\theta}_{\#}\widehat{\mu}_n}^{-1}(s)-F_{\Pi^{\theta}_{\#}\widehat{\nu}_m}^{-1}(s) \right|
}{D},\quad s\in(0,1),
\end{equation*}
and
\begin{equation*}
h_{\theta}(s)\coloneqq\frac{
\left|F_{\Pi^{\theta}_{\#}\mu}^{-1}(s)-
F_{\Pi^{\theta}_{\#}\nu}^{-1}(s)\right|
}{D},
\quad s\in(0,1).
\end{equation*}
By the one-dimensional quantile representation and the homogeneity of the Luxemburg norm, we have
\begin{align*}
\left|\operatorname{W}_{\phi}^p\left(\Pi^{\theta}_{\#}\widehat{\mu}_n,\Pi^{\theta}_{\#}\widehat{\nu}_m\right)-\operatorname{W}_{\phi}^p\left(\Pi^{\theta}_{\#}\mu,
\Pi^{\theta}_{\#}\nu\right)
\right|=D^p
\left|\|\widehat{h}_{\theta}\|_{\phi}^{p}-
\|h_{\theta}\|_{\phi}^{p}\right|.
\end{align*}
Since $\mu$ and $\nu$ are supported on $K$, the empirical measures
$\widehat{\mu}_n$ and $\widehat{\nu}_m$ are also supported on $K$. Since $\theta \in \mathbb{S}^{d-1}$, the projected supports have diameter at most $D$. Hence,
\begin{equation*}
0\le \widehat{h}_{\theta}(s),h_{\theta}(s) \le 1, \quad s \in (0,1).
\end{equation*}
As a result,
\begin{equation*}
0 \le \|\widehat{h}_{\theta}-h_{\theta}\|_{L^1(0,1)}\le 1.
\end{equation*}
By the definition of $\Omega_{\phi,p}$ in \eqref{eq:Omega-supL1}, we have
\begin{equation*}
\left|\|\widehat{h}_{\theta}\|_{\phi}^{p}-\|h_{\theta}\|_{\phi}^{p}\right| \le \Omega_{\phi,p}\left(\|\widehat{h}_{\theta}-h_{\theta}\|_{L^1(0,1)}\right).
\end{equation*}
Applying Proposition~\ref{prop:luxemburg-L1-modulus}, we have
\begin{equation*}
\Omega_{\phi,p}\left(\|\widehat{h}_{\theta}-h_{\theta}\|_{L^1(0,1)}\right)\le p2^p\psi_{\phi,p}^{\mathrm{cav}}\left(\|\widehat{h}_{\theta}-h_{\theta}\|_{L^1(0,1)}\right).
\end{equation*}
As a consequence,
\begin{equation*}
\mathbb{E}\left|\operatorname{W}_{\phi}^p\left(\Pi^{\theta}_{\#}\widehat{\mu}_n,\Pi^{\theta}_{\#}\widehat{\nu}_m\right)-\operatorname{W}_{\phi}^p\left(\Pi^{\theta}_{\#}\mu,\Pi^{\theta}_{\#}\nu\right)
\right|\le p2^pD^p\mathbb{E}\left[\psi^{\mathrm{cav}}_{\phi,p}\left(\|\widehat{h}_{\theta}-h_{\theta}\|_{L^1(0,1)}\right)\right].
\end{equation*}
Since $\psi_{\phi,p}^{\mathrm{cav}}$ is concave, Jensen's inequality yields
\begin{equation*}
\mathbb{E}\left[\psi_{\phi,p}^{\mathrm{cav}}\left(\|\widehat h_\theta-h_\theta\|_{L^1(0,1)}\right)
\right]\le\psi_{\phi,p}^{\mathrm{cav}}\left(\mathbb{E}\|\widehat h_\theta-h_\theta\|_{L^1(0,1)}\right).
\end{equation*}
We now bound $\mathbb{E}\|\widehat h_\theta-h_\theta\|_{L^1(0,1)}$. By the triangle inequality,
\begin{align*}
\|\widehat{h}_{\theta}-h_{\theta}\|_{L^1(0,1)}&= \frac{1}{D}\int_{0}^1\left|\left|
F_{\Pi^{\theta}_{\#}\widehat{\mu}_n}^{-1}(s)-F_{\Pi^{\theta}_{\#}\widehat{\nu}_m}^{-1}(s) \right|-\left|F_{\Pi^{\theta}_{\#}\mu}^{-1}(s)-
F_{\Pi^{\theta}_{\#}\nu}^{-1}(s)\right|\right|\mathrm{d}s\\
& \le \frac{1}{D}\int_{0}^{1}\left|
F_{\Pi^{\theta}_{\#}\widehat{\mu}_n}^{-1}(s)-F_{\Pi^{\theta}_{\#}\mu}^{-1}(s) \right|\mathrm{d}s+\frac{1}{D}\int_{0}^{1}\left|F_{\Pi^{\theta}_{\#}\widehat{\nu}_m}^{-1}(s)-
F_{\Pi^{\theta}_{\#}\nu}^{-1}(s)\right|\mathrm{d}s\\
&= \frac{1}{D}\left[\operatorname{W}_1\left(\Pi^{\theta}_{\#}\widehat{\mu}_n,\Pi^{\theta}_{\#}\mu\right)+\operatorname{W}_1\left(\Pi^{\theta}_{\#}\widehat{\nu}_m,\Pi^{\theta}_{\#}\nu\right)\right],
\end{align*}
where the last equality follows from the one-dimensional quantile
representation of the Wasserstein distance
(see, e.g., \cite[Theorem~2.10]{bobkov2019one}).

Taking expectations gives
\begin{equation*}
\mathbb{E}\left[\|\widehat{h}_{\theta}-h_{\theta}\|_{L^1(0,1)}\right] \le \frac{1}{D}\left[\mathbb{E}\operatorname{W}_1\left(\Pi^{\theta}_{\#}\widehat{\mu}_n,\Pi^{\theta}_{\#}\mu\right)+\mathbb{E}\operatorname{W}_1\left(\Pi^{\theta}_{\#}\widehat{\nu}_m,\Pi^{\theta}_{\#}\nu\right)\right].
\end{equation*}
Applying \cite[Theorem~3.2]{bobkov2019one}, we have
\begin{equation*}
\mathbb{E}\operatorname{W}_1\left(\Pi^{\theta}_{\#}\widehat{\mu}_n,\Pi^{\theta}_{\#}\mu\right) \le \frac{1}{\sqrt{n}}\int_{\mathbb{R}}\sqrt{F_{\Pi^{\theta}_{\#}\mu}(x)\left(1-F_{\Pi^{\theta}_{\#}\mu}(x)\right)}\mathrm{d}x \le \frac{D}{2\sqrt{n}},
\end{equation*}
and similarly,
\begin{equation*}
\mathbb{E}\operatorname{W}_1\left(\Pi^{\theta}_{\#}\widehat{\nu}_m,\Pi^{\theta}_{\#}\nu\right) \le \frac{D}{2\sqrt{m}}.
\end{equation*}
Consequently,
\begin{equation*}
\mathbb{E}\left[\|\widehat{h}_{\theta}-h_{\theta}\|_{L^1(0,1)}\right] \le \frac{1}{2}\left(n^{-1/2}+m^{-1/2}\right).
\end{equation*}
Since $\psi_{\phi,p}^{\mathrm{cav}}$ is increasing by Lemma~\ref{lem:properties-psi-cav}, we obtain
\begin{equation*}
\mathbb{E}\left|\operatorname{W}_{\phi}^p\left(\Pi^{\theta}_{\#}\widehat{\mu}_n,\Pi^{\theta}_{\#}\widehat{\nu}_m\right)-\operatorname{W}_{\phi}^p\left(\Pi^{\theta}_{\#}\mu,
\Pi^{\theta}_{\#}\nu\right)
\right|\le p2^pD^p\psi_{\phi,p}^{\mathrm{cav}}\left(\frac{1}{2}n^{-1/2}+\frac{1}{2}m^{-1/2}\right).
\end{equation*}
Integrating the above bound over $\theta\in\mathbb{S}^{d-1}$ and using that
$\sigma$ is a probability measure yields the desired result.
\end{proof}
\section{Proofs for Minimax Optimality}\label{app:proof-minimax}
\subsection{Le Cam Two-Point Bounds}\label{app:lecam-two-point}
The minimax lower bounds are based on Le Cam's two-point method
\cite{lecam1973convergence,le2012asymptotic}. We first state the
one-sample version, followed by a two-sample version for real-valued
functionals.

Throughout this appendix, for $\mu,\nu \in \mathcal{P}(K)$, we write $\mathbb{P}_{\mu}$ and $\mathbb{E}_{\mu}$ for probability and expectation with respect to $\mu^{\otimes n}$. Likewise, $\mathbb{P}_{\mu,\nu}$ and $\mathbb{E}_{\mu,\nu}$ denote probability and expectation with respect to $\mu^{\otimes n}\otimes \nu^{\otimes m}$.
\begin{lemma}[One-sample two-point bound] \label{lem:lecam-one-sample}
Let $\mu_0,\mu_1\in\mathcal{P}(K)$. Then
\begin{equation*}
\inf_{\widetilde{\mu}_n}\sup_{\mu\in\mathcal{P}(K)}\left[\mathbb{E}_{\mu}
\operatorname{SOW}_{\phi,p}^{p}\left(
\widetilde{\mu}_n,\mu\right)
\right]^{1/p}\ge \frac{
\operatorname{SOW}_{\phi,p}(\mu_0,\mu_1)
}{2^{1+1/p}
}\left[1-\sqrt{\frac{
n\operatorname{KL}(\mu_0\|\mu_1)}{2}}
\right].
\end{equation*}
\end{lemma}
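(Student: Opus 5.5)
The plan is the standard Le Cam reduction: pass from estimation to testing via the triangle inequality for $\operatorname{SOW}_{\phi,p}$, then bound the testing error by total variation and Pinsker. Set $\Delta\coloneqq\operatorname{SOW}_{\phi,p}(\mu_0,\mu_1)$; if $\Delta=0$ there is nothing to prove. Fix an arbitrary estimator $\widetilde{\mu}_n$, and assume without loss of generality that it takes values in $\mathcal{P}_\phi(\mathbb{R}^d)$, so that $\operatorname{SOW}_{\phi,p}$ is a metric there by Theorem~\ref{thm:SOW-metric}. Lower bound the supremum over $\mathcal{P}(K)$ by the maximum over $\{\mu_0,\mu_1\}$, and then by the average of the two $p$-th moments:
\[
\sup_{\mu}\mathbb{E}_\mu\operatorname{SOW}_{\phi,p}^p(\widetilde{\mu}_n,\mu)\ge\tfrac12\left[\mathbb{E}_{\mu_0}\operatorname{SOW}_{\phi,p}^p(\widetilde{\mu}_n,\mu_0)+\mathbb{E}_{\mu_1}\operatorname{SOW}_{\phi,p}^p(\widetilde{\mu}_n,\mu_1)\right].
\]

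Next, apply Markov's inequality: $\mathbb{E}_{\mu_j}\operatorname{SOW}^p_{\phi,p}(\widetilde{\mu}_n,\mu_j)\ge(\Delta/2)^p\,\mathbb{P}_{\mu_j}(\operatorname{SOW}_{\phi,p}(\widetilde{\mu}_n,\mu_j)\ge\Delta/2)$. Consider the event $A\coloneqq\{\operatorname{SOW}_{\phi,p}(\widetilde{\mu}_n,\mu_0)\ge\Delta/2\}$. By the triangle inequality, $\operatorname{SOW}_{\phi,p}(\widetilde{\mu}_n,\mu_0)+\operatorname{SOW}_{\phi,p}(\widetilde{\mu}_n,\mu_1)\ge\Delta$, so on $A^c$ the distance to $\mu_1$ is strictly greater than $\Delta/2$. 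The average is therefore at least
\[
\tfrac12(\Delta/2)^p\left[\mathbb{P}_{\mu_0}(A)+\mathbb{P}_{\mu_1}(A^c)\right]\ge\tfrac12(\Delta/2)^p\left[1-\operatorname{TV}(\mu_0^{\otimes n},\mu_1^{\otimes n})\right].
\]
Then use Pinsker's inequality together with the tensorization of KL:
\[
\operatorname{TV}\le\sqrt{\operatorname{KL}(\mu_0^{\otimes n}\|\mu_1^{\otimes n})/2}=\sqrt{n\operatorname{KL}(\mu_0\|\mu_1)/2}.
\]
If the bracket $1-\sqrt{n\operatorname{KL}/2}$ is negative, the claim is trivial, so assume it is nonnegative.

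Taking $p$-th roots gives
\[
\left[\sup_\mu\mathbb{E}_\mu\operatorname{SOW}^p_{\phi,p}(\widetilde{\mu}_n,\mu)\right]^{1/p}\ge 2^{-1/p}\tfrac{\Delta}{2}\left[1-\sqrt{n\operatorname{KL}/2}\right]^{1/p}.
\]
Since the bracket lies in $[0,1]$ and $p\ge1$, we have $x^{1/p}\ge x$ on this range, which yields $\frac{\Delta}{2^{1+1/p}}[1-\sqrt{n\operatorname{KL}(\mu_0\|\mu_1)/2}]$. The estimator was arbitrary, so taking the infimum finishes the proof.

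The main point needing care is measurability and the range of the estimator. It may output measures outside $\mathcal{P}_\phi$; projecting such an output onto $\mathcal{P}(K)$ would lose the constant, so instead I would note that an estimator with infinite distance makes the bound trivial. Measurability of $\omega\mapsto\operatorname{SOW}_{\phi,p}(\widetilde{\mu}_n,\mu_j)$ is assumed, as is implicit in the definition of the minimax risk.
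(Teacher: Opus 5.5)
Your proof is correct and follows the paper's argument: you reduce to a two-point test via the triangle inequality for $\operatorname{SOW}_{\phi,p}$, apply a Markov-type bound under each hypothesis, pass to $1-\operatorname{TV}(\mu_0^{\otimes n},\mu_1^{\otimes n})$, use Pinsker with tensorization of KL, and finish with $x^{1/p}\ge x$ on $[0,1]$. The differences are cosmetic. You use the threshold event $\{\operatorname{SOW}_{\phi,p}(\widetilde{\mu}_n,\mu_0)\ge\Delta/2\}$ instead of the paper's minimum-distance test. You also average the $p$-th moments before taking roots, which is equivalent to the paper's use of $\max\{a^{1/p},b^{1/p}\}\ge((a+b)/2)^{1/p}$. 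Finally, you apply Pinsker before taking roots rather than after.
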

\begin{proof}
Let $\widetilde{\mu}_n$ be any estimator based on
$X_1,\ldots,X_n$. Since $\mu_0,\mu_1\in\mathcal{P}(K)$, we have
\begin{equation*}
\sup_{\mu\in\mathcal{P}(K)}\left[\mathbb{E}_{\mu}
\operatorname{SOW}_{\phi,p}^{p}\left(
\widetilde{\mu}_n,\mu\right)
\right]^{1/p}\ge \max_{i \in \{0,1\}}\left[\mathbb{E}_{\mu_i}\operatorname{SOW}^p_{\phi,p}\left(\widetilde{\mu}_n,\mu_i\right)\right]^{1/p}.
\end{equation*}
Define the test
$$
\Psi(\widetilde{\mu}_n)\coloneqq 
\begin{cases}
0,
&
\text{if }\operatorname{SOW}_{\phi,p}
\left(\widetilde{\mu}_n,\mu_0
\right)\le\operatorname{SOW}_{\phi,p}
\left(\widetilde{\mu}_n,\mu_1\right),
\\
1,
&\text{if }
\operatorname{SOW}_{\phi,p}\left(
\widetilde{\mu}_n,\mu_0\right)
>\operatorname{SOW}_{\phi,p}
\left(\widetilde{\mu}_n,\mu_1
\right).
\end{cases}
$$
Consider first the event
$\{\Psi(\widetilde{\mu}_n)=1\}$. By the definition of
$\Psi(\widetilde{\mu}_n)$,
$$\operatorname{SOW}_{\phi,p}
\left(\widetilde{\mu}_n,\mu_0\right)>
\operatorname{SOW}_{\phi,p}\left(
\widetilde{\mu}_n,\mu_1\right).
$$
Hence, by the triangle inequality for
$\operatorname{SOW}_{\phi,p}$,
\begin{align*}
\operatorname{SOW}_{\phi,p}(\mu_0,\mu_1)
&\le\operatorname{SOW}_{\phi,p}
\left(\mu_0,\widetilde{\mu}_n
\right)+\operatorname{SOW}_{\phi,p}
\left(\widetilde{\mu}_n,\mu_1
\right)
\\
&<2\operatorname{SOW}_{\phi,p}\left(
\widetilde{\mu}_n,\mu_0\right).
\end{align*}
Therefore, on the event
$\{\Psi(\widetilde{\mu}_n)=1\}$,
$$\operatorname{SOW}_{\phi,p}
\left(\widetilde{\mu}_n,\mu_0
\right)\ge\frac{\operatorname{SOW}_{\phi,p}(\mu_0,\mu_1)
}{2}.
$$
Raising both sides to the power $p$ gives
$$\operatorname{SOW}_{\phi,p}^{p}
\left(\widetilde{\mu}_n,\mu_0
\right)\ge\left(
\frac{\operatorname{SOW}_{\phi,p}(\mu_0,\mu_1)
}{2}\right)^p
\qquad
\text{on }
\{\Psi(\widetilde{\mu}_n)=1\}.
$$
It follows that
$$\operatorname{SOW}_{\phi,p}^{p}
\left(\widetilde{\mu}_n,\mu_0
\right)\ge\left(
\frac{\operatorname{SOW}_{\phi,p}(\mu_0,\mu_1)
}{2}\right)^p
\mathbbm{1}_{\{\Psi(\widetilde{\mu}_n)=1\}}.
$$
Taking expectation under $\mu_0^{\otimes n}$ yields
$$\mathbb{E}_{\mu_0}
\operatorname{SOW}_{\phi,p}^{p}\left(
\widetilde{\mu}_n,\mu_0\right)
\ge\left(\frac{
\operatorname{SOW}_{\phi,p}(\mu_0,\mu_1)
}{2}\right)^p\mathbb{P}_{\mu_0}
\left(\Psi(\widetilde{\mu}_n)=1
\right).
$$
Taking the $p$-th root, we obtain
$$
\left[\mathbb{E}_{\mu_0}
\operatorname{SOW}_{\phi,p}^{p}\left(
\widetilde{\mu}_n,\mu_0\right)\right]^{1/p}
\ge\frac{
\operatorname{SOW}_{\phi,p}(\mu_0,\mu_1)
}{2}\mathbb{P}_{\mu_0}
\left(
\Psi(\widetilde{\mu}_n)=1
\right)^{1/p}.
$$
Similarly, we can show that
$$
\left[\mathbb{E}_{\mu_1}
\operatorname{SOW}_{\phi,p}^{p}\left(\widetilde{\mu}_n,\mu_1\right)
\right]^{1/p}\ge\frac{
\operatorname{SOW}_{\phi,p}(\mu_0,\mu_1)
}{2}\mathbb{P}_{\mu_1}\left(\Psi(\widetilde{\mu}_n)=0\right)^{1/p}.
$$
Combining these estimates with the initial bound and using $\max\left\{a^{1/p},b^{1/p}\right\}\ge \left(\frac{a+b}{2}\right)^{1/p}$ for $a,b \ge 0$, we obtain 
\begin{equation*}
\sup_{\mu\in\mathcal{P}(K)}\left[
\mathbb{E}_{\mu}\operatorname{SOW}_{\phi,p}^{p}
\left(\widetilde{\mu}_n,\mu
\right)\right]^{1/p}
\ge\frac{
\operatorname{SOW}_{\phi,p}(\mu_0,\mu_1)
}{2^{1+1/p}}\Bigg[
\mathbb{P}_{\mu_0}\left(
\Psi(\widetilde{\mu}_n)=1\right)+
\mathbb{P}_{\mu_1}\left(\Psi(\widetilde{\mu}_n)=0
\right)\Bigg]^{1/p}.
\end{equation*}
By the definition of total variation
\cite[Definition~2.4]{tsybakov2009introduction}, we have
\begin{align*}
\sup_{\mu\in\mathcal{P}(K)}\left[
\mathbb{E}_{\mu}\operatorname{SOW}_{\phi,p}^{p}
\left(\widetilde{\mu}_n,\mu
\right)\right]^{1/p}\ge 
\frac{\operatorname{SOW}_{\phi,p}(\mu_0,\mu_1)
}{2^{1+1/p}
}\left[1-\operatorname{TV}\left(\mu_0^{\otimes n},\mu_1^{\otimes n}\right)\right]^{1/p}.
\end{align*}
Since $0 \le1-\operatorname{TV}\left(\mu_0^{\otimes n},\mu_1^{\otimes n}\right) \le 1$ and $p\ge1$, we have
$$\sup_{\mu\in\mathcal{P}(K)}\left[ \mathbb{E}_{\mu}\operatorname{SOW}_{\phi,p}^{p}
\left(\widetilde{\mu}_n,\mu
\right)\right]^{1/p}\ge 
\frac{\operatorname{SOW}_{\phi,p}(\mu_0,\mu_1)
}{2^{1+1/p}}
\left[1-\operatorname{TV}\left(\mu_0^{\otimes n},
\mu_1^{\otimes n}\right)
\right].
$$
By Pinsker's inequality
\cite[Lemma~2.5]{tsybakov2009introduction},
\begin{equation*}
\operatorname{TV} \left(
\mu_0^{\otimes n},\mu_1^{\otimes n}
\right)\le\sqrt{\frac{1}{2}
\operatorname{KL}\left(
\mu_0^{\otimes n}\|\mu_1^{\otimes n}\right)}.
\end{equation*}
Using the additivity of the Kullback--Leibler divergence over product measures, we have
\begin{equation*}
\sup_{\mu\in\mathcal{P}(K)}\left[
\mathbb{E}_{\mu}\operatorname{SOW}_{\phi,p}^{p}
\left(\widetilde{\mu}_n,\mu
\right)\right]^{1/p}\ge
\frac{\operatorname{SOW}_{\phi,p}(\mu_0,\mu_1)
}{2^{1+1/p}}\left[1-
\sqrt{\frac{
n\operatorname{KL}(\mu_0\|\mu_1)}{2}
}\right].
\end{equation*}
Taking the infimum over all estimators
$\widetilde{\mu}_n$ completes the proof.
\end{proof}
We next state the corresponding two-sample bound for real-valued functionals.
\begin{lemma}[Two-sample two-point bound] \label{lem:lecam-two-sample}
Let $(\mu_0,\nu_0),(\mu_1,\nu_1)
\in\mathcal{P}(K)\times\mathcal{P}(K)$,
and let
$$T:\mathcal{P}(K)\times\mathcal{P}(K)
\longrightarrow \mathbb{R}$$
be a real-valued functional. Define
$$T_0\coloneqq T(\mu_0,\nu_0),
\quad T_1\coloneqq T(\mu_1,\nu_1).
$$
Then
\begin{equation*}
\inf_{\widehat T}\sup_{\mu,\nu\in\mathcal{P}(K)}
\mathbb{E}_{\mu,\nu}\left|
\widehat T-T(\mu,\nu)\right| \ge
\frac{|T_1-T_0|}{4}\left[1-\sqrt{\frac{
n\operatorname{KL}(\mu_0\|\mu_1)+m\operatorname{KL}(\nu_0\|\nu_1)
}{2}}\right].
\end{equation*}
\end{lemma}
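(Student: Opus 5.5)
The plan is to reduce to a binary test between the two hypotheses, following the same pattern as the proof of Lemma~\ref{lem:lecam-one-sample}. Write $P_i\coloneqq\mu_i^{\otimes n}\otimes\nu_i^{\otimes m}$ for $i\in\{0,1\}$ and $\delta\coloneqq|T_1-T_0|$. If $\delta=0$ there is nothing to prove, so assume $\delta>0$. Fix an arbitrary estimator $\widehat T$ based on the two samples. Bounding the supremum over $\mathcal{P}(K)\times\mathcal{P}(K)$ from below by the maximum over the two candidate pairs, and the maximum by the average, gives
$$\sup_{\mu,\nu}\mathbb{E}_{\mu,\nu}|\widehat T-T(\mu,\nu)|\ge\frac12\Big(\mathbb{E}_{P_0}|\widehat T-T_0|+\mathbb{E}_{P_1}|\widehat T-T_1|\Big).$$

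Next define the test $\Psi\coloneqq 0$ if $|\widehat T-T_0|\le|\widehat T-T_1|$ and $\Psi\coloneqq1$ otherwise. The triangle inequality in $\mathbb{R}$ gives $\delta\le|\widehat T-T_0|+|\widehat T-T_1|$. Hence on $\{\Psi=1\}$ we have $|\widehat T-T_0|\ge\delta/2$, and on $\{\Psi=0\}$ we have $|\widehat T-T_1|\ge\delta/2$. Markov-type lower bounds then yield
$$\mathbb{E}_{P_0}|\widehat T-T_0|+\mathbb{E}_{P_1}|\widehat T-T_1|\ge\frac{\delta}{2}\big[P_0(\Psi=1)+P_1(\Psi=0)\big].$$
For any event $A$ we have $P_0(A^c)+P_1(A)\ge1-\operatorname{TV}(P_0,P_1)$, by \cite[Definition~2.4]{tsybakov2009introduction}. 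Applying this with $A=\{\Psi=0\}$ bounds the bracket below by $1-\operatorname{TV}(P_0,P_1)$. Combined with the first display, this gives the factor $\delta/4$.

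Finally, Pinsker's inequality gives $\operatorname{TV}(P_0,P_1)\le\sqrt{\operatorname{KL}(P_0\|P_1)/2}$. The chain rule (tensorization) of KL over the independent product structure gives $\operatorname{KL}(P_0\|P_1)=n\operatorname{KL}(\mu_0\|\mu_1)+m\operatorname{KL}(\nu_0\|\nu_1)$. Taking the infimum over $\widehat T$ completes the argument.

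No step is a real obstacle. The only point needing care is measurability of $\Psi$ as a function of the data, which holds because $\widehat T$ is measurable. One should also note that if either KL divergence is infinite, the right-hand side is negative and the bound is trivial.
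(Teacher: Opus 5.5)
Your proposal is correct and matches the paper's proof step for step. Both reduce the supremum to the average over the two hypotheses, use the minimum-distance test $\Psi$ with the triangle inequality to get the $|T_1-T_0|/2$ separation on each error event, lower-bound the sum of error probabilities by $1-\operatorname{TV}$, and finish with Pinsker's inequality and KL tensorization over $\mu_i^{\otimes n}\otimes\nu_i^{\otimes m}$.
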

\begin{proof}
Let $\widehat T=
\widehat T(X_1,\ldots,X_n,Y_1,\ldots,Y_m)$
be any estimator based on $X_1,\ldots,X_n,Y_1,\ldots,Y_m$. We have
\begin{align*}
\sup_{\mu,\nu\in\mathcal{P}(K)}
\mathbb{E}_{\mu,\nu}\left|
\widehat T-T(\mu,\nu)\right|
&\ge\max\left\{\mathbb{E}_{\mu_0,\nu_0}
|\widehat T-T_0|,\mathbb{E}_{\mu_1,\nu_1}
|\widehat T-T_1|\right\}
\\
&\ge\frac{1}{2}\mathbb{E}_{\mu_0,\nu_0}
|\widehat T-T_0|+\frac{1}{2}\mathbb{E}_{\mu_1,\nu_1}|\widehat T-T_1|.
\end{align*}
Define
$$\Psi(\widehat T)
\coloneqq 
\begin{cases}
0,&\text{if }|\widehat T-T_0|
\le|\widehat T-T_1|,
\\1,&\text{if }|\widehat T-T_0|>
|\widehat T-T_1|.
\end{cases}
$$
On the event $\{\Psi(\widehat T)=1\}$, we have
$$|\widehat T-T_0|> |\widehat T-T_1|.
$$
Hence, by the triangle inequality,
$$|T_1-T_0|\le|T_1-\widehat T|
+|\widehat T-T_0|\le 2|\widehat T-T_0|.
$$
Therefore,
$$\mathbb{E}_{\mu_0,\nu_0}
|\widehat T-T_0|\ge\frac{|T_1-T_0|}{2}
\mathbb{P}_{\mu_0,\nu_0}\left(\Psi(\widehat T)=1\right).
$$
Similarly,
$$\mathbb{E}_{\mu_1,\nu_1}|\widehat T-T_1|
\ge\frac{|T_1-T_0|}{2}\mathbb{P}_{\mu_1,\nu_1}\left(\Psi(\widehat T)=0\right).
$$
Consequently,
\begin{align*}
\sup_{\mu,\nu\in\mathcal{P}(K)}\mathbb{E}_{\mu,\nu}\left|\widehat T-T(\mu,\nu)\right|
&\ge\frac{|T_1-T_0|}{4}\left[
\mathbb{P}_{\mu_0,\nu_0}\left(
\Psi(\widehat T)=1\right)
+\mathbb{P}_{\mu_1,\nu_1}\left(\Psi(\widehat T)=0
\right)\right]
\\
&=\frac{|T_1-T_0|}{4}
\left[1-\mathbb{P}_{\mu_0,\nu_0}
\left(\Psi(\widehat T)=0
\right)+\mathbb{P}_{\mu_1,\nu_1}
\left(\Psi(\widehat T)=0\right)
\right]
\\
&\ge\frac{|T_1-T_0|}{4}
\left[1-\operatorname{TV}\left(\mu_0^{\otimes n}\otimes\nu_0^{\otimes m},\mu_1^{\otimes n}\otimes\nu_1^{\otimes m}
\right)
\right]
\\
&\ge\frac{|T_1-T_0|}{4}
\left[1-\sqrt{\frac{1}{2}\operatorname{KL}
\left(\mu_0^{\otimes n}\otimes\nu_0^{\otimes m}
\|\mu_1^{\otimes n}\otimes\nu_1^{\otimes m}
\right)}
\right]
\\
&=\frac{|T_1-T_0|}{4}
\left[1-\sqrt{\frac{n\operatorname{KL}(\mu_0\|\mu_1)+m\operatorname{KL}(\nu_0\|\nu_1)
}{2}}
\right].
\end{align*}
The second inequality follows from the definition of total variation
\cite[Definition~2.4]{tsybakov2009introduction}, and the third
inequality follows from Pinsker's inequality
\cite[Lemma~2.5]{tsybakov2009introduction}. The last equality follows
from the additivity of the Kullback--Leibler divergence over product
measures \cite[Page 85]{tsybakov2009introduction}.
Taking the infimum over all estimators $\widehat T$ completes the proof.
\end{proof}
With this general framework in place, we now construct suitable pairs of
distributions and apply the preceding two-point bounds to the one-sample
and two-sample SOW estimation problems. Before doing so, we recall the
useful spherical identity
\begin{equation}\label{eq:sphereuseful}
\int_{\mathbb{S}^{d-1}}
\left|\langle \theta,z\rangle\right|^p
\mathrm{d}\sigma(\theta)= \kappa_{p,d}\|z\|^p,
\end{equation}
where $\sigma$ denotes the uniform probability measure on $\mathbb{S}^{d-1}$ and $\kappa_{p,d}>0$ is a constant depending only on
$p$ and $d$ (see, e.g., \cite[Section~11.2]{peyre2025optimal}).

Besides, since $K$ is compact, the continuous map
$(x,y)\mapsto \|x-y\|$ attains its maximum on $K\times K$.
Hence, there exist $x_0,x_1\in K$ such that
$$\|x_0-x_1\|=\operatorname{diam}(K)=D.$$
Throughout the remainder of this section, we fix such a pair
$(x_0,x_1)$ and use it in all subsequent lower-bound constructions.

\subsection{One-sample estimation} \label{app:minimax-one-sample}
We first prove the one-sample lower bound. We choose
\begin{equation*}
\mu_0 = \frac{1}{2}\delta_{x_0}+\frac{1}{2}\delta_{x_1}, \quad \mu_1 = \left(\frac{1}{2}-\frac{1}{4\sqrt{n}}\right)\delta_{x_0}+\left(\frac{1}{2}+\frac{1}{4\sqrt{n}}\right)\delta_{x_1}.
\end{equation*}

For this choice, we have
\begin{align*}
\operatorname{SOW}_{\phi,p}^p(\mu_0,\mu_1)&=
\int_{\mathbb{S}^{d-1}}\operatorname{W}_\phi^p
\left(\Pi^\theta_{\#}\mu_0,
\Pi^\theta_{\#}\mu_1\right)\mathrm{d}\sigma(\theta)\\
&=\int_{\mathbb{S}^{d-1}}\left[D\left|
\left\langle\theta,\frac{x_0-x_1}{\|x_0-x_1\|}
\right\rangle\right|\omega_\phi
\left(\frac{1}{4\sqrt{n}}
\right)\right]^p\mathrm{d}\sigma(\theta)
\\&=D^p\omega_\phi\left(\frac{1}{4\sqrt{n}}\right)^p\kappa_{p,d}.
\end{align*}
Therefore,
\begin{equation*}
\operatorname{SOW}_{\phi,p}\left(\mu_0,\mu_1\right)=D\omega_{\phi}\left(\frac{1}{4\sqrt{n}}\right)\kappa_{p,d}^{1/p}.
\end{equation*}
Moreover, applying Lemma~\ref{lem:properties-omega-psi} with
$a=4$ and $u=1/(4\sqrt{n})$ gives
\begin{equation*}
\omega_{\phi}\left(n^{-1/2}\right) \le 4\omega_{\phi}\left(\frac{1}{4\sqrt{n}}\right).
\end{equation*}
Hence,
\begin{equation*}
\operatorname{SOW}_{\phi,p}\left(\mu_0,\mu_1\right)\ge \frac{\kappa_{p,d}^{1/p}D}{4}\omega_{\phi}\left(n^{-1/2}\right).
\end{equation*}
On the other hand, we also have
\begin{align*}
\operatorname{KL}(\mu_0\|\mu_1)
&=\frac{1}{2}\log\frac{1/2}{1/2+1/(4\sqrt{n})}
+\frac{1}{2}\log\frac{1/2}{1/2-1/(4\sqrt{n})}\\
&\le\frac{1}{2}\left(\frac{1/2}{1/2+1/(4\sqrt{n})}-1\right)+\frac{1}{2}\left(\frac{1/2}{1/2-1/(4\sqrt{n})}-1
\right)\\
&=\frac{1/(16n)}{1/4-1/(16n)},
\end{align*}
where the inequality comes from the fact that $\log t \le t -1 \, \forall t >0$.
Since $n \ge 1$, 
\begin{equation*}
\frac{1}{4}-\frac{1}{16n}\ge \frac{3}{16}.
\end{equation*}
As a result,
\begin{equation*}
\operatorname{KL}\left(\mu_0\|\mu_1\right) \le \frac{1}{3n}.
\end{equation*}
Combining the preceding estimates with
Lemma~\ref{lem:lecam-one-sample}, we obtain
\begin{equation*}
\inf_{\widetilde{\mu}_n}\sup_{\mu\in\mathcal{P}(K)}\left[\mathbb{E}_{\mu}
\operatorname{SOW}_{\phi,p}^{p}\left(
\widetilde{\mu}_n,\mu\right)
\right]^{1/p} \ge \frac{\kappa^{1/p}_{p,d}}{2^{3+1/p}}\left(1-\frac{1}{\sqrt{6}}\right)D\omega_{\phi}\left(n^{-1/2}\right).
\end{equation*}

The matching upper bound follows directly from
Theorem~\ref{thm:SOW-one-sample} by taking the empirical measure
$\widehat{\mu}_n$ as the estimator. Combining the upper and lower bounds
yields
$$\inf_{\widetilde{\mu}_n}\sup_{\mu\in\mathcal{P}(K)}\left[\mathbb{E}_{\mu}
\operatorname{SOW}_{\phi,p}^{p}\left(
\widetilde{\mu}_n,\mu\right)\right]^{1/p}
\asymp_{d,p}D\omega_\phi\left(n^{-1/2}\right).$$
\subsection{Two-sample estimation}\label{app:minimax-two-sample}
We next prove the two-sample statement in
Theorem~\ref{thm:SOW-minimax}. Without loss of generality, assume that
$n\le m$.
\subsubsection{Two-sample estimation of the SOW distance}
We choose 
\begin{equation*}
\mu_0 = \frac{1}{2}\delta_{x_0}+\frac{1}{2}\delta_{x_1}, \quad \nu_0=\frac{1}{2}\delta_{x_0}+\frac{1}{2}\delta_{x_1},
\end{equation*}
and
\begin{equation*}
\mu_1 = \left(\frac{1}{2}-\frac{1}{4\sqrt{n}}\right)\delta_{x_0}+\left(\frac{1}{2}+\frac{1}{4\sqrt{n}}\right)\delta_{x_1}, \quad \nu_1=\nu_0.
\end{equation*}

As in the one-sample case, our construction gives
\begin{equation*}
\operatorname{SOW}_{\phi,p}\left(\mu_0,\nu_0\right)=0, \quad \operatorname{SOW}_{\phi,p}(\mu_1,\nu_1)=D\kappa_{p,d}^{1/p}
\omega_\phi\left(\frac{1}{4\sqrt{n}}\right)\ge
\frac{D\kappa_{p,d}^{1/p}}{4}
\omega_\phi\left(n^{-1/2}\right).
\end{equation*}
For the KL term, since $\nu_0=\nu_1$,
\begin{equation*}
\operatorname{KL}\left(\nu_0\|\nu_1\right)=0.
\end{equation*}
The one-sample calculation also gives
\begin{equation*}
\operatorname{KL}\left(\mu_0\|\mu_1\right) \le \frac{1}{3n}.
\end{equation*}
Therefore,
\begin{equation*}
n\operatorname{KL}\left(\mu_0\|\mu_1\right)+m\operatorname{KL}\left(\nu_0\|\nu_1\right)\le n\frac{1}{3n}+0=\frac{1}{3}.
\end{equation*}
Applying Lemma~\ref{lem:lecam-two-sample} to the functional $T(\mu,\nu) = \operatorname{SOW}_{\phi,p}\left(\mu,\nu\right)$, we obtain
\begin{equation*}
\inf_{\widehat T}\sup_{\mu,\nu\in\mathcal{P}(K)}
\mathbb{E}_{\mu,\nu}\left|
\widehat T-\operatorname{SOW}_{\phi,p}(\mu,\nu)\right|\ge \frac{\kappa^{1/p}_{p,d}}{16}\left(1-\frac{1}{\sqrt{6}}\right)D\omega_{\phi}\left(n^{-1/2}\right).
\end{equation*}
Since $n\le m$ and $\omega_{\phi}$ is increasing by Lemma~\ref{lem:properties-omega-psi},
\begin{equation*}
\omega_{\phi}\left(n^{-1/2}\right) \ge \omega_{\phi}\left(m^{-1/2}\right).
\end{equation*}
Therefore,
\begin{equation*}
\omega_{\phi}\left(n^{-1/2}\right) \ge \frac{1}{2}\left[\omega_{\phi} \left(n^{-1/2}\right)+\omega_{\phi}\left(m^{-1/2}\right)\right].
\end{equation*}
Consequently,
\begin{equation*}
\inf_{\widehat T}\sup_{\mu,\nu\in\mathcal{P}(K)}
\mathbb{E}_{\mu,\nu}\left|
\widehat T-\operatorname{SOW}_{\phi,p}(\mu,\nu)\right|\ge \frac{\kappa^{1/p}_{p,d}}{32}\left(1-\frac{1}{\sqrt{6}}\right)D\left[\omega_{\phi}\left(n^{-1/2}\right)+\omega_{\phi}\left(m^{-1/2}\right)\right].
\end{equation*}
The matching upper bound follows from
Corollary~\ref{cor:SOW-two-sample}. Combining the upper and lower bounds
yields
\begin{equation*}
\inf_{\widehat T} \sup_{\mu,\nu\in\mathcal{P}(K)}
\mathbb{E}_{\mu,\nu}\left|
\widehat T-\operatorname{SOW}_{\phi,p}(\mu,\nu) \right|\asymp_{d,p}D\left[\omega_\phi\left(n^{-1/2}\right)+\omega_\phi\left(m^{-1/2}\right)
\right].
\end{equation*}
\subsubsection{Two-sample estimation of the powered SOW functional}
The construction is more delicate for the powered functional. In contrast to the SOW distance considered above, a direct perturbation at the statistical scale $n^{-1/2}$ only yields $\psi_{\phi,p}\left(n^{-1/2}\right)$, whereas the sharp rate is controlled by $\psi^{\mathrm{cav}}_{\phi,p}\left(n^{-1/2}\right)$. Bridging this gap is the main technical difficulty in the lower-bound argument. We resolve it through the following useful localization result.
\begin{lemma}\label{lem:parameter-powerSOW}
For every $h\in (0,1/4]$, there exist $0\le q_0<q_1\le \frac{1}{2}$ such that $q_1-q_0 \le h$ and
\begin{equation*}
\psi_{\phi,p}\left(q_1\right)-\psi_{\phi,p}\left(q_0\right)\ge 2^{-(p+2)}\psi_{\phi,p}^{\mathrm{cav}}(h).
\end{equation*}
\end{lemma}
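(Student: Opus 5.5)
The plan is to combine the two-sided comparison \eqref{eq:psisharp-concaveenvelope} between $\psi^{\mathrm{cav}}_{\phi,p}$ and $\psi^{\sharp}_{\phi,p}$ with a pigeonhole (telescoping) argument on a grid of mesh at most $h$. Fix $h\in(0,1/4]$. By \eqref{eq:psisharp-concaveenvelope},
\begin{equation*}
\psi^{\mathrm{cav}}_{\phi,p}(h)\le 2\psi^{\sharp}_{\phi,p}(h)=2h\sup_{h\le v\le 1}\frac{\psi_{\phi,p}(v)}{v}.
\end{equation*}
By Lemma~\ref{lem:properties-omega-psi}, $v\mapsto\psi_{\phi,p}(v)/v$ is continuous on the compact interval $[h,1]$. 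Hence the supremum is attained at some $v^\ast\in[h,1]$, and
\begin{equation*}
h\,\frac{\psi_{\phi,p}(v^\ast)}{v^\ast}\ge \tfrac12\psi^{\mathrm{cav}}_{\phi,p}(h).
\end{equation*}

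\textbf{Step 1: bring the scale into $[h,1/2]$.} The constraint $q_1\le 1/2$ forces us to work with some $v\le 1/2$.
\begin{itemize}
\item If $v^\ast\le 1/2$, set $v\coloneqq v^\ast$.
\item If $v^\ast>1/2$, set $v\coloneqq v^\ast/2$. Then $v\ge 1/4\ge h$. Lemma~\ref{lem:properties-omega-psi} with $a=2$ gives $\omega_\phi(v^\ast)\le 2\omega_\phi(v^\ast/2)$, so $\psi_{\phi,p}(v)\ge 2^{-p}\psi_{\phi,p}(v^\ast)$, and therefore $\psi_{\phi,p}(v)/v\ge 2^{1-p}\psi_{\phi,p}(v^\ast)/v^\ast$.
\end{itemize}
In both cases we obtain $v\in[h,1/2]$ with
\begin{equation*}
\frac{\psi_{\phi,p}(v)}{v}\ge 2^{1-p}\frac{\psi_{\phi,p}(v^\ast)}{v^\ast}.
\end{equation*}

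\textbf{Step 2: telescoping.} Let $N\coloneqq\lceil v/h\rceil$ and $t_j\coloneqq jv/N$ for $j=0,\dots,N$. These grid points lie in $[0,v]\subset[0,1/2]$ and satisfy $t_j-t_{j-1}=v/N\le h$. Since $\psi_{\phi,p}(0)=0$, we have
\begin{equation*}
\sum_{j=1}^N\bigl(\psi_{\phi,p}(t_j)-\psi_{\phi,p}(t_{j-1})\bigr)=\psi_{\phi,p}(v).
\end{equation*}
Hence some increment is at least $\psi_{\phi,p}(v)/N$. Because $v\ge h$, we have $N\le v/h+1\le 2v/h$, so that increment is at least $\tfrac h2\,\psi_{\phi,p}(v)/v$. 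Take $q_0\coloneqq t_{j-1}$ and $q_1\coloneqq t_j$ for this index $j$. Then $0\le q_0<q_1\le 1/2$ and $q_1-q_0\le h$, as required.

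\textbf{Step 3: constants.} Chaining the bounds gives
\begin{equation*}
\psi_{\phi,p}(q_1)-\psi_{\phi,p}(q_0)\ge \frac h2\cdot 2^{1-p}\frac{\psi_{\phi,p}(v^\ast)}{v^\ast}=2^{-p}h\frac{\psi_{\phi,p}(v^\ast)}{v^\ast}\ge 2^{-(p+1)}\psi^{\mathrm{cav}}_{\phi,p}(h).
\end{equation*}
This is stronger than the claimed bound $2^{-(p+2)}\psi^{\mathrm{cav}}_{\phi,p}(h)$.

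The main obstacle is Step 1. The maximizer of $\psi_{\phi,p}(v)/v$ may lie in $(1/2,1]$, outside the allowed range for $q_1$. It is handled by the quasiconcavity of $\omega_\phi$ from Lemma~\ref{lem:properties-omega-psi}, at the cost of a factor $2^{-p}$. The hypothesis $h\le 1/4$ is exactly what keeps the halved scale $v^\ast/2$ above $h$, which the pigeonhole count $N\le 2v/h$ in Step 2 requires.
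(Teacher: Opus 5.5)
Your proof is correct and follows the paper's route: bound $\psi^{\mathrm{cav}}_{\phi,p}(h)$ by $2\psi^{\sharp}_{\phi,p}(h)$, restrict the supremum of $\psi_{\phi,p}(v)/v$ to $[h,1/2]$ using Lemma~\ref{lem:properties-omega-psi} with $a=2$, and pigeonhole over an equal partition of $[0,v]$ into $\lceil v/h\rceil$ pieces. The one difference is that you halve the maximizer directly instead of comparing $\psi_{\phi,p}(1/2)$ with $\psi_{\phi,p}(1)$ as the paper does; this saves a factor of $2$ and gives the sharper constant $2^{-(p+1)}$.
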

\begin{proof}
The main idea is to find $v^\ast\in[h,1/2]$ and a constant $C_p>0$ depending only on $p$ such that
\begin{equation*}
h\frac{\psi_{\phi,p}(v^\ast)}{v^\ast}\ge C_p\psi_{\phi,p}^{\mathrm{cav}}(h).
\end{equation*}
Since $\psi_{\phi,p}(0)=0$, the ratio
$\psi_{\phi,p}(v^\ast)/v^\ast$ represents the average increase of
$\psi_{\phi,p}$ over the interval $[0,v^\ast]$. We then partition
$[0,v^\ast]$ into subintervals of length at most $h$. Since the sum of
the increments of $\psi_{\phi,p}$ over these subintervals is exactly
$\psi_{\phi,p}(v^\ast)$, at least one of them must have an increment
comparable to this average. This yields the desired pair $q_0,q_1$.

To carry out the first step, we fix $h \in (0,1/4]$. We then define
\begin{equation*}
S_h \coloneqq \sup_{h \le v \le 1} \frac{\psi_{\phi,p}(v)}{v}, \quad S_{h,1/2}\coloneqq \sup_{h \le v \le 1/2}\frac{\psi_{\phi,p}(v)}{v}.
\end{equation*}
We first compare these two suprema. Since $h \in (0,1/4]$,
\begin{equation*}
S_h = \max \left\{S_{h,1/2},\sup_{1/2\le v \le 1}\frac{\psi_{\phi,p}(v)}{v}\right\}.
\end{equation*}
It remains to control the contribution from $v \in [1/2,1]$.
By the definition of $S_{h,1/2}$, we have
\begin{equation*}
S_{h,1/2} \ge 2\psi_{\phi,p}\left(1/2\right)=\frac{2}{\left[\phi^{-1}(2)\right]^p}.
\end{equation*}
Applying Lemma~\ref{lem:properties-omega-psi} (with $a=2$ and $u=1/2$) gives
\begin{equation*}
\phi^{-1}(2) \le 2\phi^{-1}(1).
\end{equation*}
Consequently,
\begin{equation*}
S_{h,1/2} \ge \frac{2^{1-p}}{\left[\phi^{-1}(1)\right]^p}.
\end{equation*}
On the other hand, for $v \in [1/2,1]$, since $\psi_{\phi,p}$ is increasing by Lemma~\ref{lem:properties-omega-psi},
\begin{equation*}
\frac{\psi_{\phi,p}(v)}{v} \le 2\psi_{\phi,p}(1)=\frac{2}{\left[\phi^{-1}(1)\right]^p}.
\end{equation*}
Hence,
\begin{equation*}
\sup_{v \in [1/2,1]}\frac{\psi_{\phi,p}(v)}{v} \le \frac{2}{\left[\phi^{-1}(1)\right]^p}\le 2^pS_{h,1/2}.
\end{equation*}
As a result, $S_h \le 2^p S_{h,1/2}$.

By Lemma~\ref{lem:properties-omega-psi}, $\psi_{\phi,p}$ is continuous on $[h,1/2]$. Hence, the map $v \mapsto \psi_{\phi,p}(v)/v$ is continuous on $[h,1/2]$. Therefore, there exists $v^\ast\in[h,1/2]$ such that
\begin{equation*}
\frac{\psi_{\phi,p}(v^\ast)}{v^\ast} = S_{h,1/2}.
\end{equation*}
Since $hS_h=\psi^{\sharp}_{\phi,p}(h)$, combining $S_h \le 2^pS_{h,1/2}$ with
\eqref{eq:psisharp-concaveenvelope} gives
\begin{equation*}
h\frac{\psi_{\phi,p}(v^\ast)}{v^\ast}= hS_{h,1/2} \ge
2^{-p}hS_h\ge
2^{-(p+1)}\psi_{\phi,p}^{\mathrm{cav}}(h).
\end{equation*}
We now partition $[0,v^\ast]$ into
\begin{equation*}
k\coloneqq \left\lceil \frac{v^\ast}{h}\right\rceil
\end{equation*}
equal subintervals with endpoints
\begin{equation*}
t_i\coloneqq \frac{i v^\ast}{k},\qquad i=0,\ldots,k.
\end{equation*}
Then $t_i-t_{i-1}=v^\ast/k\le h$. Moreover,
\begin{equation*}
\sum_{i=1}^k\left[\psi_{\phi,p}(t_i)-\psi_{\phi,p}(t_{i-1})\right]=\psi_{\phi,p}(v^\ast),
\end{equation*}
since $\psi_{\phi,p}(0)=0$. Hence, for some $i\in\{1,\ldots,k\}$,
\begin{equation*}
\psi_{\phi,p}(t_i)-\psi_{\phi,p}(t_{i-1}) \ge \frac{\psi_{\phi,p}(v^\ast)}{k}.
\end{equation*}
Since $v^\ast\ge h$ and $k \le v^\ast/h +1$, we deduce
\begin{equation*}
\psi_{\phi,p}(t_i)-\psi_{\phi,p}(t_{i-1}) \ge\frac{h}{2v^\ast}\psi_{\phi,p}(v^\ast)\ge 2^{-(p+2)}\psi_{\phi,p}^{\mathrm{cav}}(h).
\end{equation*}
Setting $q_0=t_{i-1}$ and $q_1=t_i$ gives
\begin{equation*}
0\le q_0<q_1\le \frac{1}{2},\quad q_1-q_0\le h.
\end{equation*}
\end{proof}
With this lemma in hand, we are now ready to prove the lower bound for the powered SOW functional in Theorem~\ref{thm:SOW-minimax}.
\begin{proof}
Assume without loss of generality that $n\le m$. Applying
Lemma~\ref{lem:parameter-powerSOW} with $h=\frac{1}{4\sqrt{n}},$ there exist $0\le q_0<q_1\le 1/2$ such that
\begin{equation*}
q_1-q_0\le \frac{1}{4\sqrt{n}}
\end{equation*}
and
\begin{equation*}
\psi_{\phi,p}(q_1)-\psi_{\phi,p}(q_0)
\ge 2^{-(p+2)}\psi_{\phi,p}^{\mathrm{cav}} \left(\frac{1}{4\sqrt{n}}\right).
\end{equation*}
We then choose
\begin{equation*}
a_i=\frac{1}{4}+q_i,\qquad i=0,1,
\end{equation*}
and define
\begin{equation*}
\mu_i=(1-a_i)\delta_{x_0}+a_i\delta_{x_1}, \quad 
\nu_0=\nu_1=\frac34\delta_{x_0}+\frac14\delta_{x_1}.
\end{equation*}
This choice gives $\operatorname{KL}(\nu_0\|\nu_1)=0.$
Besides, since $a_0,a_1\in[1/4,3/4]$, using $\log t\le t-1$ for all $t>0$ gives
\begin{align*}
\operatorname{KL}(\mu_0\|\mu_1)
&=a_0\log\frac{a_0}{a_1}
+(1-a_0)\log\frac{1-a_0}{1-a_1}\\
&\le\frac{(a_1-a_0)^2}{a_1(1-a_1)}\\&\le
\frac{16}{3}(a_1-a_0)^2\\&\le \frac{1}{3n}.
\end{align*}
Hence,
\begin{equation*}
n\operatorname{KL}(\mu_0\|\mu_1)+ m\operatorname{KL}(\nu_0\|\nu_1) \le \frac{1}{3}.
\end{equation*}
Similarly, we also have
\begin{equation*}
\operatorname{SOW}_{\phi,p}^p(\mu_i,\nu_i) = \kappa_{p,d}D^p\psi_{\phi,p}(q_i),
\quad i=0,1.
\end{equation*}
Combining with Lemma~\ref{lem:parameter-powerSOW}, we obtain
\begin{align*}
\left|\operatorname{SOW}_{\phi,p}^p(\mu_1,\nu_1)-\operatorname{SOW}_{\phi,p}^p(\mu_0,\nu_0)\right|&=\kappa_{p,d}D^p\left[
\psi_{\phi,p}(q_1)-\psi_{\phi,p}(q_0)
\right] \\
&\ge 2^{-(p+2)}\kappa_{p,d}D^p
\psi_{\phi,p}^{\mathrm{cav}}\left(\frac{1}{4\sqrt n}\right).
\end{align*}
On the other hand, since $n \le m$
\begin{equation*}
0<\frac{1}{4}\left(\frac{1}{2}n^{-1/2}+\frac{1}{2}m^{-1/2}\right) \le \frac{1}{4\sqrt{n}}\le \frac{1}{4}.
\end{equation*}
By Lemma~\ref{lem:properties-psi-cav}, $\psi_{\phi,p}^{\mathrm{cav}}$ is increasing on $[0,1]$ and satisfies $\psi_{\phi,p}^{\mathrm{cav}}(0)=0$. Therefore, by its concavity,
\begin{equation*}
\psi^{\mathrm{cav}}_{\phi,p}\left(\frac{1}{4\sqrt{n}}\right) \ge \frac{1}{4} \psi ^{\mathrm{cav}}_{\phi,p}\left(\frac{1}{2}n^{-1/2}+\frac{1}{2}m^{-1/2}\right).
\end{equation*}
Consequently,
\begin{equation*}
\left|\operatorname{SOW}_{\phi,p}^p(\mu_1,\nu_1)-\operatorname{SOW}_{\phi,p}^p(\mu_0,\nu_0)\right|\ge 2^{-(p+4)}\kappa_{p,d}D^p\psi^{\mathrm{cav}}_{\phi,p}\left(\frac{1}{2}n^{-1/2}+\frac{1}{2}m^{-1/2}\right).
\end{equation*}
Plugging this estimate, together with the bound on the KL term, into
Lemma~\ref{lem:lecam-two-sample} for the functional
$T(\mu,\nu)=\operatorname{SOW}_{\phi,p}^p(\mu,\nu)$ yields
\begin{equation*}
\inf_{\widehat T}\sup_{\mu,\nu\in\mathcal P(K)}
\mathbb E_{\mu,\nu}
\left|\widehat T-\operatorname{SOW}_{\phi,p}^p(\mu,\nu)\right|\ge2^{-(p+6)}
\left(1-\frac{1}{\sqrt{6}}\right)\kappa_{p,d}D^p\psi_{\phi,p}^{\mathrm{cav}}
\left(\frac{1}{2}n^{-1/2}+\frac{1}{2}m^{-1/2}
\right).
\end{equation*}
This lower bound matches the upper bound in
Theorem~\ref{thm:powerSOW-two-sample} up to a multiplicative constant
depending only on $d$ and $p$, thereby completing the proof.
\end{proof}
\section{Computational details}
\subsection{Closed-form computation for $\phi_1$}\label{app:closeform}
Algorithm~\ref{alg:SOW} approximates $\operatorname{SOW}^p_{\phi,p}\left(\widehat{\mu}_n,\widehat{\nu}_m\right)$ by combining Monte Carlo integration with bisection for the projected $\operatorname{OW}$ distances. When these distances admit closed-form expressions, the bisection steps can be omitted, leaving only the Monte Carlo approximation.

Recall that, for $\theta_1,\dots,\theta_L\overset{\mathrm{i.i.d.}}{\sim}\sigma$, the Monte Carlo approximation is
\begin{equation*}
\widehat{\operatorname{SOW}}^p_{\phi,p}\left(\widehat{\mu}_n,\widehat{\nu}_m,L\right)=\frac{1}{L}\sum_{l=1}^L\left[\inf\left\{\lambda>0:\int_{0}^{1}\phi\left(\frac{\left|\Delta_{\theta_l}(u)\right|}{\lambda}\right)\mathrm{d}u \le 1\right\}\right]^p,
\end{equation*}
where $\Delta_{\theta}$ is the projected quantile gap defined in \eqref{eq:projected-quantile-gap}. Thus, for each direction, it remains to evaluate the scalar infimum inside the sum. Bisection provides a numerical approximation when no closed-form expression is available. 

We consider the Orlicz function $\phi_1(t) = \frac{t^2+t^4}{2}$ with $t \ge 0$, used in our experiments. Fix a direction $\theta$ and define
\begin{equation*}
A_{\theta}\coloneqq \int_{0}^1\left|\Delta_{\theta}(u)\right|^2\mathrm{d}u,\quad B_{\theta}\coloneqq \int_{0}^{1}\left|\Delta_{\theta}(u)\right|^4\mathrm{d}u.
\end{equation*}
For every $\lambda>0$, the constraint in the infimum becomes
\begin{align*}
\int_{0}^{1}\phi_1\left(\frac{\left|\Delta_{\theta}(u)\right|}{\lambda}\right)\mathrm{d} u \le 1 &\Leftrightarrow \frac{A_{\theta}}{2\lambda^2}+\frac{B_{\theta}}{2\lambda^4}\le 1\\
& \Leftrightarrow \lambda ^2 \ge \frac{A_{\theta}+\sqrt{A^2_{\theta}+8B_{\theta}}}{4}.
\end{align*}
Taking the infimum over $\lambda>0$ yields
\begin{equation*}
\operatorname{W}_{\phi_1}\left(\Pi^{\theta}_{\#}\widehat{\mu}_n,\Pi^{\theta}_{\#}\widehat{\nu}_m\right)=\sqrt{\frac{A_{\theta}+\sqrt{A^2_{\theta}+8B_{\theta}}}{4}}.
\end{equation*}
Consequently, each projected distance can be computed directly from $A_{\theta}$ and $B_{\theta}$. Then, the Monte Carlo approximation can be evaluated without bisection.
\subsection{Proof of Proposition~\ref{prop:SOW-computation}}
\label{sec:proof-SOW-computation}
Throughout this proof, the observed samples are held fixed, and all expectations are taken over the projection directions.

Using the triangle inequality, we obtain
\begin{align*}
&\mathbb{E}\left|\widehat{\operatorname{SOW}}_{\phi,p}^p(\widehat{\mu}_n,\widehat{\nu}_m,L,T)-\operatorname{SOW}_{\phi,p}^p(\widehat{\mu}_n,\widehat{\nu}_m)\right|\\
&\le\mathbb{E}\left|\widehat{\operatorname{SOW}}_{\phi,p}^p(\widehat{\mu}_n,\widehat{\nu}_m,L,T)-\widehat{\operatorname{SOW}}_{\phi,p}^p(\widehat{\mu}_n,\widehat{\nu}_m,L)\right|+\mathbb{E}\left|\widehat{\operatorname{SOW}}_{\phi,p}^p(\widehat{\mu}_n,\widehat{\nu}_m,L)-\operatorname{SOW}_{\phi,p}^p(\widehat{\mu}_n,\widehat{\nu}_m)\right|.
\end{align*}

Now, we control the first term $\mathbb{E}\left|\widehat{\operatorname{SOW}}_{\phi,p}^p(\widehat{\mu}_n,\widehat{\nu}_m,L,T)-\widehat{\operatorname{SOW}}_{\phi,p}^p(\widehat{\mu}_n,\widehat{\nu}_m,L)\right|$. We have:
\begin{align*}
&\mathbb{E}\left|\widehat{\operatorname{SOW}}_{\phi,p}^p(\widehat{\mu}_n,\widehat{\nu}_m,L,T)-\widehat{\operatorname{SOW}}_{\phi,p}^p(\widehat{\mu}_n,\widehat{\nu}_m,L)\right|\\
&=\mathbb{E}\left|\frac{1}{L}\sum_{l=1}^L\widehat{\operatorname{W}}^p_\phi(\Pi^{\theta_l}_{\#}\widehat{\mu}_n,\Pi^{\theta_l}_{\#}\widehat{\nu}_m,T)-\frac{1}{L}\sum_{l=1}^L\operatorname{W}^p_\phi(\Pi^{\theta_l}_{\#}\widehat{\mu}_n,\Pi^{\theta_l}_{\#}\widehat{\nu}_m)\right|\\
&=\mathbb{E}\left|\frac{1}{L}\sum_{l=1}^L\left(\widehat{\operatorname{W}}^p_\phi(\Pi^{\theta_l}_{\#}\widehat{\mu}_n,\Pi^{\theta_l}_{\#}\widehat{\nu}_m,T)-\operatorname{W}^p_\phi(\Pi^{\theta_l}_{\#}\widehat{\mu}_n,\Pi^{\theta_l}_{\#}\widehat{\nu}_m)\right)\right|\\
&\le\frac{1}{L}\sum_{l=1}^L\mathbb{E}\left|\widehat{\operatorname{W}}^p_\phi(\Pi^{\theta_l}_{\#}\widehat{\mu}_n,\Pi^{\theta_l}_{\#}\widehat{\nu}_m,T)-\operatorname{W}^p_\phi(\Pi^{\theta_l}_{\#}\widehat{\mu}_n,\Pi^{\theta_l}_{\#}\widehat{\nu}_m)\right|.
\end{align*}

From \eqref{eq:bracket}, we have
$$0\le L_\theta\le\operatorname{W}_\phi(\Pi^{\theta}_{\#}\widehat{\mu}_n,\Pi^{\theta}_{\#}\widehat{\nu}_m)\le U_\theta,\qquad\theta\in\mathbb{S}^{d-1},$$
so that $[L_\theta,U_\theta]$ is a valid initial bracket for Algorithm~\ref{alg:SOW}. Since $\phi$ is increasing, $\lambda\mapsto\phi(|\Delta_\theta(u)|/\lambda)$ is decreasing for each $u$. Therefore each bisection step retains $\operatorname{W}_\phi(\Pi^{\theta_l}_{\#}\widehat{\mu}_n,\Pi^{\theta_l}_{\#}\widehat{\nu}_m)$ in the current bracket while halving its width. In particular $\widehat{\operatorname{W}}_\phi(\Pi^{\theta_l}_{\#}\widehat{\mu}_n,\Pi^{\theta_l}_{\#}\widehat{\nu}_m,T)\in[L_{\theta_l},U_{\theta_l}]$, and the error of the bisection method~\cite{burden2011numerical} is
$$\left|\widehat{\operatorname{W}}_\phi(\Pi^{\theta_l}_{\#}\widehat{\mu}_n,\Pi^{\theta_l}_{\#}\widehat{\nu}_m,T)-\operatorname{W}_\phi(\Pi^{\theta_l}_{\#}\widehat{\mu}_n,\Pi^{\theta_l}_{\#}\widehat{\nu}_m)\right|\le\frac{|U_{\theta_l}-L_{\theta_l}|}{2^T}.$$
If $L_{\theta_l}=U_{\theta_l}$, the algorithm returns their common value without bisection, so the error is zero and the same bound still holds.

Moreover, since $p\ge1$, the map $s\mapsto s^p$ has increasing derivative $ps^{p-1}$ on $(0,\infty)$, so the mean value theorem gives $|a^p-b^p|\le pM^{p-1}|a-b|$ for all $a,b\in[0,M]$. Applying this with $M=U_{\theta_l}$, we obtain:
\begin{align*}
&\mathbb{E}\left[\left|\widehat{\operatorname{SOW}}_{\phi,p}^p(\widehat{\mu}_n,\widehat{\nu}_m,L,T)-\widehat{\operatorname{SOW}}_{\phi,p}^p(\widehat{\mu}_n,\widehat{\nu}_m,L)\right|\right]\\
&\le\frac{p}{L}\sum_{l=1}^L\mathbb{E}\left[\left|\widehat{\operatorname{W}}_\phi(\Pi^{\theta_l}_{\#}\widehat{\mu}_n,\Pi^{\theta_l}_{\#}\widehat{\nu}_m,T)-\operatorname{W}_\phi(\Pi^{\theta_l}_{\#}\widehat{\mu}_n,\Pi^{\theta_l}_{\#}\widehat{\nu}_m)\right|U_{\theta_l}^{p-1}\right]\\
&\le\frac{p}{L}\sum_{l=1}^L\mathbb{E}\left[\frac{|U_{\theta_l}-L_{\theta_l}|}{2^T}U_{\theta_l}^{p-1}\right]\\
&\le\frac{p}{L}\sum_{l=1}^L\mathbb{E}\left[\frac{U_{\theta_l}}{2^T}U_{\theta_l}^{p-1}\right]\\
&=\frac{p}{L2^{T}}\sum_{l=1}^L\mathbb{E}\left[U_{\theta_l}^p\right]=\frac{p}{2^{T}}\int_{\mathbb{S}^{d-1}}U_{\theta}^p\,\mathrm{d}\sigma(\theta),
\end{align*}
where the third inequality uses $0\le L_{\theta_l}\le U_{\theta_l}$, and the last equality holds since $\theta_1,\dots,\theta_L\overset{\mathrm{i.i.d.}}{\sim}\sigma$.

Now, we control the second term $\mathbb{E}\left|\widehat{\operatorname{SOW}}_{\phi,p}^p(\widehat{\mu}_n,\widehat{\nu}_m,L)-\operatorname{SOW}_{\phi,p}^p(\widehat{\mu}_n,\widehat{\nu}_m)\right|$. We follow the argument of \cite[Theorem~6]{nadjahi2020statistical}, with $\operatorname{W}_{\phi}$ as the base distance.

By the Cauchy--Schwarz inequality, 
\begin{align*}
&\mathbb{E}\left|\widehat{\operatorname{SOW}}_{\phi,p}^p(\widehat{\mu}_n,\widehat{\nu}_m,L)-\operatorname{SOW}_{\phi,p}^p(\widehat{\mu}_n,\widehat{\nu}_m)\right|\\
&=\mathbb{E}\left|\frac{1}{L}\sum_{l=1}^L\operatorname{W}_\phi^p(\Pi^{\theta_l}_{\#}\widehat{\mu}_n,\Pi^{\theta_l}_{\#}\widehat{\nu}_m)-\mathbb{E}\left[\operatorname{W}_\phi^p(\Pi^{\theta}_{\#}\widehat{\mu}_n,\Pi^{\theta}_{\#}\widehat{\nu}_m)\right]\right|\\
&\le\sqrt{\mathbb{E}\left|\frac{1}{L}\sum_{l=1}^L\operatorname{W}_\phi^p(\Pi^{\theta_l}_{\#}\widehat{\mu}_n,\Pi^{\theta_l}_{\#}\widehat{\nu}_m)-\mathbb{E}\left[\operatorname{W}_\phi^p(\Pi^{\theta}_{\#}\widehat{\mu}_n,\Pi^{\theta}_{\#}\widehat{\nu}_m)\right]\right|^2}\\
&=\sqrt{\operatorname{Var}\left[\frac{1}{L}\sum_{l=1}^L\operatorname{W}_\phi^p(\Pi^{\theta_l}_{\#}\widehat{\mu}_n,\Pi^{\theta_l}_{\#}\widehat{\nu}_m)\right]}=\frac{1}{\sqrt{L}}\sqrt{\operatorname{Var}_{\theta\sim\sigma}\operatorname{W}_\phi^p(\Pi^{\theta}_{\#}\widehat{\mu}_n,\Pi^{\theta}_{\#}\widehat{\nu}_m)}.
\end{align*}
Combining results, we have:
\begin{align*}
&\mathbb{E}\left|\widehat{\operatorname{SOW}}_{\phi,p}^p(\widehat{\mu}_n,\widehat{\nu}_m,L,T)-\operatorname{SOW}_{\phi,p}^p(\widehat{\mu}_n,\widehat{\nu}_m)\right|\\
&\le\frac{p}{2^{T}}\int_{\mathbb{S}^{d-1}}U_{\theta}^p\,\mathrm{d}\sigma(\theta)+\frac{1}{\sqrt{L}}\operatorname{Var}_{\theta\sim\sigma}\left[\operatorname{W}_\phi^p(\Pi^{\theta}_{\#}\widehat{\mu}_n,\Pi^{\theta}_{\#}\widehat{\nu}_m)\right]^{1/2},
\end{align*}
which completes the proof. 
\section{Beyond bounded support}\label{app:beyondboundedsupport}
The statistical results in the main text are developed under a common bounded-support assumption. A natural question is whether this assumption can be relaxed. For the Wasserstein and sliced Wasserstein distances, a standard route is to replace bounded support by finite higher-order moments (see, for example, \cite{fournier2015rate,nadjahi2020statistical}). Although such conditions allow genuinely unbounded distributions, the same approach can fail for a general Orlicz generator $\phi$. The purpose of this appendix
is not to characterize the largest class of unbounded distributions for which empirical $\operatorname{SOW}_{\phi,p}$ convergence holds. Instead, in this discussion, we ask a more modest question: If $\mathbb{E}\operatorname{SOW}^p_{\phi,p}(\widehat{\mu}_n,\mu)\to 0$ as $n\to\infty$, where must the target law $\mu$ lie? We first give a counterexample to demonstrate that polynomial moment conditions can still be insufficient, then isolate the obstruction suggested by that example, and finally derive a necessary tail-integrability condition.
\subsection{Why polynomial moment conditions can still be insufficient}
\begin{example}[All polynomial moments may still be insufficient]\label{ex:appendix-beyond}
Let $d=1$ and let $\phi(t)=e^t-t-1$ for $t\geq 0$. Consider the exponential distribution with rate $1$, denoted by $\mu=\mathrm{Exp}(1)$, and let $X\sim\mu$. First, $\mu$ has finite moments of every polynomial order. Indeed, for every $q>0$,
\begin{equation*}
\mathbb{E}\left[X^q\right]=\Gamma(q+1)<+\infty, 
\end{equation*}
where $\Gamma$ denotes the gamma function.

We next show that $\mu\in\mathcal{P}_{\phi}(\mathbb{R})$. For every $\lambda>1$, we have
\begin{equation*}
\mathbb{E} e^{X/\lambda} = \int_{0}^{+\infty} e^{x/\lambda}e^{-x}\mathrm{d}x=\frac{\lambda}{\lambda -1}.
\end{equation*}
Since $\mathbb{E}[X]=1$, it follows that, for every $\lambda>1$,
\begin{equation*}
\mathbb{E}\phi(\left|X\right|/\lambda)=\mathbb{E}\phi(X/\lambda)=\mathbb{E}e^{X/\lambda}-\mathbb{E}\left(X/\lambda\right)-1=\frac{\lambda}{\lambda -1}-\frac{1}{\lambda }-1<+\infty.
\end{equation*}
In particular, taking $\lambda=2$, we obtain
\begin{equation*}
\mathbb{E}\phi\left(\frac{|X|}{2}\right)=\frac{1}{2}\leq 1.
\end{equation*}
Therefore, $\mu\in\mathcal{P}_{\phi}(\mathbb{R})$.

Nevertheless, let $\nu$ be any probability measure with bounded
support. More precisely, assume that, for some $M>0$,
\begin{equation*}
\operatorname{supp}\left(\nu\right) \subset[-M,M].
\end{equation*}
Let $(X,Y)$ be an arbitrary coupling of $\mu$ and $\nu$. Since
$Y\leq M$ almost surely, on the event $\{X>M\}$ we have
\begin{equation*}
\left|X-Y\right| \ge X-M.
\end{equation*}
Fix $0<\lambda \le 1$. By the monotonicity of $\phi$, we obtain
\begin{align*}
\mathbb{E}\phi\left(\frac{\left|X-Y\right|}{\lambda}\right)&\ge \int_{M}^{+\infty}\phi\left(\frac{x-M}{\lambda}\right)e^{-x}\mathrm{d}x\\
&=\int_{M}^{+\infty}\left[e^{(x-M)/\lambda}-\frac{x-M}{\lambda}-1\right]e^{-x}\mathrm{d}x.
\end{align*}
The contributions of the last two terms are finite, whereas the
exponential term is not integrable. Indeed,
\begin{equation*}
\int_{M}^{+\infty}e^{(x-M)/\lambda}e^{-x}\mathrm{d}x=e^{-M/\lambda}\int_{M}^{+\infty}e^{(1/\lambda -1)x}\mathrm{d}x=+\infty,
\end{equation*}
because $1/\lambda-1 \ge 0$. Therefore, no $\lambda\leq1$ is admissible in the definition
of the Luxemburg norm, and hence
\begin{equation*}
\|X-Y\|_{\phi}\ge 1.
\end{equation*}
Since the coupling $(X,Y)$ was arbitrary, we conclude that
\begin{equation*}
\operatorname{W}_{\phi}\left(\mu,\nu\right)\ge 1.
\end{equation*}
Now, for every sample realization, the empirical measure $\widehat{\mu}_n=\frac{1}{n}\sum_{i=1}^n\delta_{X_i}$ has bounded support. Applying the preceding argument with $\nu=\widehat{\mu}_n$ gives
\begin{equation*}
\operatorname{W}_{\phi}\left(\mu,\widehat{\mu}_n\right)\ge 1, \quad \text{for every } n\geq1.
\end{equation*}
In dimension one, $\operatorname{SOW}_{\phi,p}=\operatorname{W}_{\phi}$, and hence 
\begin{equation*}
\mathbb{E}\left[\operatorname{SOW}^p_{\phi,p}\left(\widehat{\mu}_n,\mu\right)\right]\ge 1, \quad \text{for every } n\geq1.
\end{equation*}
\end{example}
In this example, although $\mu$ already belongs to $\mathcal{P}_{\phi}\left(\mathbb{R}\right)$ and it has finite moments of every polynomial order, the empirical measures do not converge to $\mu$ in $\operatorname{SOW}_{\phi,p}$ for any sample realization. Thus one-scale Orlicz integrability, even together with finite moments of every polynomial order, can still be insufficient. However, the proof also suggests why the approximation fails: empirical measures are boundedly supported,
while the far tail of the target law remains expensive in the Luxemburg geometry. We now isolate this mechanism.
\subsection{Insights from the Counterexample}\label{appendix:beyond-insights}
We begin with the one-dimensional setting. Let $\mu$ be a probability
measure on $\mathbb{R}$, and let $X\sim\mu$. Assume that $\mu$ can be
approximated in $\operatorname{W}_{\phi}$ by compactly supported probability measures. More precisely, suppose that there exists a
sequence $(\nu_k)_{k\in\mathbb{N}}$ of compactly supported probability
measures satisfying
\begin{equation}\label{eq:appendix-beyond-eq1}
\operatorname{W}_{\phi}\left(\mu,\nu_k\right) \to 0 \quad \text{ as } k\to +\infty.
\end{equation}
For each $k$, choose $0<M_k<+\infty$ such that 
\begin{equation*}
\operatorname{supp}\left(\nu_k\right) \subset [-M_k,M_k].
\end{equation*}
Let $(X,Y_k)$ be any coupling of $\mu$ and $\nu_k$. Since $\left|Y_k\right|\le M_k$ almost surely,
\begin{equation*}
\left|X-Y_k\right| \ge \left|X\right|-\left|Y_k\right|\ge \left|X\right|-M_k.
\end{equation*}
Because the left-hand side is nonnegative,
\begin{equation*}
\left|X-Y_k\right| \ge \left(\left|X\right|-M_k\right)_+,
\end{equation*}
where we denote $(a)_+=\max\{a,0\}$.
By monotonicity of the Luxemburg norm,
\begin{equation*}
\left\|X-Y_k\right\|_{\phi} \ge \left\|\left(\left|X\right|-M_k\right)_{+}\right\|_{\phi}.
\end{equation*}
Consequently,
\begin{equation*}
\operatorname{W}_{\phi}\left(\mu,\nu_k\right) \ge \|\left(\left|X\right|-M_k\right)_{+}\|_{\phi}.
\end{equation*}
Combining with \eqref{eq:appendix-beyond-eq1} yields
\begin{equation}\label{eq:appendix-beyond-eq2}
\|\left(\left|X\right|-M_k\right)_{+}\|_{\phi} \to 0 \quad \text{ as } k \to +\infty.
\end{equation}
Now define
\begin{equation*}
\tau(R)\coloneqq\|\left(\left|X\right|-R\right)_{+} \|_{\phi}, \quad R \ge 0.
\end{equation*}
It is easy to check that the function $R \mapsto \tau(R)$ is decreasing. For every $\varepsilon>0$, \eqref{eq:appendix-beyond-eq2} provides an index $k$ such that $\tau(M_k)<\varepsilon$. Hence, for every $R\ge M_k$, we have $\tau(R)\le\tau(M_k)<\varepsilon$. As a consequence,
\begin{equation*}
\|\left(\left|X\right|-R\right)_{+}\|_{\phi} \to 0 \quad \text{ as } R \to +\infty.
\end{equation*}

This is the first structural conclusion suggested by Example~\ref{ex:appendix-beyond}: if boundedly supported measures can approach the target law in Orlicz–Wasserstein distance, then the far tail of the target must disappear in Luxemburg norm.

\paragraph{Informal interpretation.} Recall that the Luxemburg norm is
\begin{equation*}
\|X\|_{\phi} = \inf \left\{\lambda >0:\mathbb{E}\phi \left(\frac{\left|X\right|}{\lambda}\right)\le 1\right\}.
\end{equation*}
If the tail $\left(\left|X\right|-R\right)_{+}$ is to become arbitrarily small in this norm, then one must eventually control it at arbitrarily small Luxemburg scales. A small scale $\epsilon$ places a large factor $1/\epsilon$ inside $\phi$. This suggests that one-scale integrability may not be enough. Therefore, a natural candidate is integrability at every positive multiplicative scale, namely
\begin{equation*}
\mathbb{E}\phi\left(a\left|X\right|\right)<+\infty \quad \text{for every }a>0. 
\end{equation*}
The next lemma verifies that intuition.
\begin{lemma}\label{appendix:beyond-usefullemma}
Let $Z$ be a real-valued random variable. If $\displaystyle\lim_{R \to +\infty}\|\left(\left|Z\right|-R\right)_{+} \|_{\phi}=0$, then $\mathbb{E} \phi\left(a\left|Z\right|\right)<+\infty$ for every $a>0$.
\end{lemma}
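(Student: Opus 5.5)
Fix $a>0$. The hypothesis lets us choose $R$ so large that $\tau(R)\coloneqq\|(|Z|-R)_+\|_\phi<1/(2a)$. We then split $|Z|$ into a bounded part and a tail part and use convexity of $\phi$ to control $\phi(a|Z|)$.

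\textbf{Step 1 (integrability of the tail at scale $2a$).} Since $\tau(R)<1/(2a)$, the definition of the Luxemburg norm gives some $\lambda<1/(2a)$ with $\mathbb{E}\phi((|Z|-R)_+/\lambda)\le 1$. As $\phi$ is increasing and $2a<1/\lambda$,
\begin{equation*}
\mathbb{E}\phi\bigl(2a(|Z|-R)_+\bigr)\le \mathbb{E}\phi\bigl((|Z|-R)_+/\lambda\bigr)\le 1.
\end{equation*}
If $\tau(R)=0$, then $(|Z|-R)_+=0$ almost surely, and the inequality holds trivially.

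\textbf{Step 2 (splitting).} Pointwise we have $|Z|\le \min\{|Z|,R\}+(|Z|-R)_+\le R+(|Z|-R)_+$. We write
\begin{equation*}
a|Z|\le \tfrac12(2aR)+\tfrac12\bigl(2a(|Z|-R)_+\bigr).
\end{equation*}
Convexity and monotonicity of $\phi$ then give
\begin{equation*}
\phi(a|Z|)\le \tfrac12\phi(2aR)+\tfrac12\phi\bigl(2a(|Z|-R)_+\bigr).
\end{equation*}

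\textbf{Step 3 (conclusion).} Taking expectations and using Step 1,
\begin{equation*}
\mathbb{E}\phi(a|Z|)\le \tfrac12\phi(2aR)+\tfrac12<+\infty.
\end{equation*}
The first term is finite because $\phi$ is finite-valued (Lemma~\ref{lem:properties-Nfunctions}). Since $a>0$ was arbitrary, the lemma follows.

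The only delicate point is the passage from a Luxemburg-norm bound to a modular bound in Step 1. One must pick an admissible $\lambda$ strictly below $1/(2a)$; this exists because the infimum defining the norm is less than $1/(2a)$, and we do not need the modular to equal $1$ at the norm. The factor $2$ is there so that the convex combination in Step 2 is exact. No $\Delta_2$-type growth condition on $\phi$ is required, because the hypothesis supplies smallness of the tail norm at every scale.
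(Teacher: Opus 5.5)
Your proof is correct and follows essentially the same route as the paper: you choose $R$ with $\|(|Z|-R)_+\|_\phi<1/(2a)$, extract an admissible scale to get $\mathbb{E}\phi\bigl(2a(|Z|-R)_+\bigr)\le 1$, and then write $a|Z|$ as a midpoint of $2aR$ and $2a(|Z|-R)_+$ and apply convexity. The differences are cosmetic: you pass directly to an admissible $\lambda<1/(2a)$ where the paper inserts an intermediate $t$, and your separate treatment of $\tau(R)=0$ is unnecessary, since an admissible $\lambda<1/(2a)$ exists in that case too.
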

\begin{proof}
Fix $a>0$. Since $\displaystyle\lim_{R \to +\infty}\|\left(\left|Z\right|-R\right)_{+} \|_{\phi}=0$, we can choose $R$ large enough such that
\begin{equation*}
\|\left(\left|Z\right|-R\right)_{+}\|_{\phi}<1/(2a).
\end{equation*}
Since $\mathbb{R}$ is densely ordered, we may choose
$t\in\mathbb{R}$ such that
\begin{equation*}
\|\left(\left|Z\right|-R\right)_{+}\|_{\phi}<t<1/(2a).
\end{equation*}
We first justify that $t$ is an admissible Luxemburg scale. Since $t>\|\left(\left|Z\right|-R\right)_{+}\|_{\phi}$, there exists  $0<\lambda_0<t$ such that
\begin{equation*}
\mathbb{E}\phi\left(\frac{\left(\left|Z\right|-R\right)_{+}}{\lambda_0}\right) \le 1.
\end{equation*}
Since $t> \lambda_0$, the monotonicity of $\phi$ yields
\begin{equation*}
\mathbb{E}\phi\left(\frac{\left(\left|Z\right|-R\right)_{+}}{t}\right) \le 1.
\end{equation*}
Besides, since $2a<1/t$, again, the monotonicity of $\phi$ gives
\begin{equation}\label{eq:appendix-beyond-eq3}
\mathbb{E}\phi\left(2a\left(\left|Z\right|-R\right)_{+}\right) \le 1.
\end{equation}
Finally, since $\left|Z\right| \le R+\left(\left|Z\right|-R\right)_{+}$, using monotonicity and convexity of $\phi$, we have 
\begin{equation*}
\phi\left(a\left|Z\right|\right) \le \frac{1}{2}\phi\left(2aR\right)+\frac{1}{2}\phi\left[2a\left(\left|Z\right|-R\right)_{+}\right].
\end{equation*}
Taking expectation and using \eqref{eq:appendix-beyond-eq3},
\begin{equation*}
\mathbb{E}\phi\left(a\left|Z\right|\right)\le \frac{1}{2}\phi\left(2aR\right)+\frac{1}{2}<+\infty.
\end{equation*}
Since $a>0$ is arbitrary, we complete the proof.
\end{proof}
With this ingredient in hand, we now state a necessary condition for vanishing expected empirical $\operatorname{SOW}^p_{\phi,p}$ loss.
\begin{proposition}
Let $p \in [1,+\infty)$ and let $\mu \in \mathcal{P}_{\phi}(\mathbb{R}^d)$. Let $\widehat{\mu}_n$ denote the empirical measure of $n$ i.i.d. samples drawn from $\mu$. If $\displaystyle \lim_{n \to +\infty}\mathbb{E}\operatorname{SOW}^p_{\phi,p}\left(\widehat{\mu}_n,\mu\right)=0$, then $\mu \in \mathcal{P}^{\mathrm{all}}_{\phi}\left(\mathbb{R}^d\right)$, where
\begin{equation*}
\mathcal{P}^{\mathrm{all}}_{\phi}\left(\mathbb{R}^d\right)\coloneqq \left\{P \in \mathcal{P}\left(\mathbb{R}^d\right) :\int_{\mathbb{R}^d}\phi\left(\frac{\|x\|}{\lambda}\right)\mathrm{d}P(x)<+\infty\quad \forall \lambda >0\right\}.
\end{equation*}
\end{proposition}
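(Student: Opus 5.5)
The plan is to reduce to the one-dimensional mechanism isolated in Appendix~\ref{appendix:beyond-insights} and then apply Lemma~\ref{appendix:beyond-usefullemma} coordinatewise. Let $X\sim\mu$. Since
\[
\mathbb{E}\operatorname{SOW}^p_{\phi,p}(\widehat{\mu}_n,\mu)=\int_{\mathbb{S}^{d-1}}\mathbb{E}\operatorname{W}^p_{\phi}\bigl(\Pi^\theta_{\#}\widehat{\mu}_n,\Pi^\theta_{\#}\mu\bigr)\,\mathrm d\sigma(\theta)\to0,
\]
I first upgrade this averaged convergence to a pointwise statement in a few fixed directions. I would not extract an a.e.\ subsequence in $\theta$; instead I would use Proposition~\ref{prop:SOW-maxSOW}-type continuity. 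Concretely, I pick a sequence $n_k$ and realizations with $\operatorname{SOW}_{\phi,p}(\widehat{\mu}_{n_k},\mu)\to0$. This is possible because $\mathbb{E}\operatorname{SOW}^p\to0$ gives a subsequence converging to $0$ almost surely, and I fix one sample path in that event. Proposition~\ref{prop:SOW-maxSOW} then yields
\[
\sup_\theta \operatorname{W}_\phi\bigl(\Pi^\theta_{\#}\widehat{\mu}_{n_k},\Pi^\theta_{\#}\mu\bigr)\to0 .
\]
In particular, for each canonical direction $e_j$ and for $\theta=-e_j$ alike, $\Pi^{e_j}_{\#}\mu$ is approximated in $\operatorname{W}_\phi$ by the compactly supported measures $\Pi^{e_j}_{\#}\widehat{\mu}_{n_k}$. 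These are finitely supported for the fixed path.

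Next, I apply the argument of Appendix~\ref{appendix:beyond-insights} verbatim to $Z_j=X_j=\langle e_j,X\rangle$ and $\nu_k=\Pi^{e_j}_{\#}\widehat{\mu}_{n_k}$. That argument gives
\[
\|(|X_j|-R)_+\|_\phi\to0 \quad \text{as } R\to\infty .
\]
Lemma~\ref{appendix:beyond-usefullemma} then gives $\mathbb{E}\phi(a|X_j|)<\infty$ for every $a>0$ and every $j=1,\dots,d$.

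Finally, I pass from coordinates to the Euclidean norm. Using $\|X\|\le\sum_j|X_j|\le d\max_j|X_j|$ together with monotonicity and convexity of $\phi$, I get
\[
\phi\bigl(\|X\|/\lambda\bigr)\le \frac1d\sum_{j=1}^d\phi\bigl(d|X_j|/\lambda\bigr).
\]
Each term has finite expectation by the previous step with $a=d/\lambda$, so $\mu\in\mathcal P^{\mathrm{all}}_\phi(\mathbb{R}^d)$.

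The main obstacle is the first step: turning convergence of the expectation into a single realization along which the projected distances converge in the chosen directions. The a.s.\ subsequence argument combined with Proposition~\ref{prop:SOW-maxSOW} is the route I would try. If that fails, the fallback is Lemma~\ref{lem:directional-lipschitz}, since $m_\phi(\widehat{\mu}_n)$ is finite for each path. Measurability of $\operatorname{SOW}(\widehat{\mu}_n,\mu)$ as a function of the sample is assumed implicitly, as elsewhere in the paper.
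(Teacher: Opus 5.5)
Your proposal is correct and follows the paper's proof essentially step for step. You extract an almost surely convergent subsequence and fix a path, then invoke Proposition~\ref{prop:SOW-maxSOW} to get $\operatorname{W}_\phi$-convergence of the coordinate projections; after that you apply the tail argument of Appendix~\ref{appendix:beyond-insights}, Lemma~\ref{appendix:beyond-usefullemma}, and convexity of $\phi$ to pass from the coordinates to $\|X\|$. The paper makes the subsequence extraction explicit, choosing $\mathbb{E}\operatorname{SOW}^p_{\phi,p}(\widehat{\mu}_{n_k},\mu)\le 2^{-2kp}$ and applying Markov plus Borel--Cantelli, and your fallback via Lemma~\ref{lem:directional-lipschitz} is never needed.
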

\begin{proof}
Since $\displaystyle \lim_{n \to +\infty}\mathbb{E}\operatorname{SOW}^p_{\phi,p}\left(\widehat{\mu}_n,\mu\right)=0$, we can choose a subsequence $\left(n_k\right)_{k \in \mathbb{N}}$ such that
\begin{equation*}
\mathbb{E}\operatorname{SOW}^p_{\phi,p}\left(\widehat{\mu}_{n_k},\mu\right) \le 2^{-2kp}.
\end{equation*}
By Markov's inequality,
\begin{equation*}
\mathbb{P}\left(\operatorname{SOW}^p_{\phi,p}\left(\widehat{\mu}_{n_k},\mu\right)>2^{-k}\right)\le 2^k\mathbb{E}
\operatorname{SOW}^p_{\phi,p}(\widehat{\mu}_{n_k},\mu) \le 2^{-kp}.
\end{equation*}
Since $p\ge 1$, $\displaystyle \sum_{k=1}^{\infty}2^{-kp}<+\infty$. Therefore, the Borel–Cantelli lemma gives
\begin{equation}\label{eq:eq-appendix-beyond-eq4}
\lim_{k \to +\infty}\operatorname{SOW}^p_{\phi,p}\left(\widehat{\mu}_{n_k},\mu\right)=0 \quad \text{almost surely.}
\end{equation}
We now fix a sample path on which \eqref{eq:eq-appendix-beyond-eq4} holds. Using Proposition~\ref{prop:SOW-maxSOW}, for every coordinate vector $e_j$, we have
\begin{equation}\label{eq:eq-appendix-beyond-eq5}
\operatorname{W}_{\phi}\left(\Pi^{e_j}_{\#}\widehat{\mu}_{n_k},\Pi^{e_j}_{\#}\mu\right) \to 0 \quad \text{ as } k \to +\infty. 
\end{equation}
Write $X=(X_1,\dots,X_d)\sim \mu$. For every $k$, the projected measure $\Pi^{e_j}_{\#}\widehat{\mu}_{n_k}$ has finite support. Combining the argument in Section~\ref{appendix:beyond-insights} with \eqref{eq:eq-appendix-beyond-eq5}, we deduce
\begin{equation*}
\|\left(\left|X_j\right|-R\right)_{+}\|_{\phi} \to 0 \quad \text{ as } R \to +\infty.
\end{equation*}
Applying Lemma~\ref{appendix:beyond-usefullemma}, we deduce
\begin{equation*}
\mathbb{E}\phi\left(b\left|X_j\right|\right)< +\infty \quad \text{ for every } b>0 \text{ and every } j=1,\dots,d. 
\end{equation*}
Finally, for a fixed $a>0$, the monotonicity and convexity of $\phi$ give
\begin{equation*}
\phi\left(a\|X\|\right)\le \phi\left(a\sum_{j=1}^{d}\left|X_j\right|\right)=\phi\left(\frac{1}{d}\sum_{j=1}^{d}da\left|X_j\right|\right)\le \frac{1}{d}\sum_{j=1}^{d}\phi\left(da\left|X_j\right|\right).
\end{equation*}
By the preceding conclusion, each term on the right has finite expectation. Hence,
\begin{equation*}
\mathbb{E}\phi\left(a\|X\|\right)<+\infty.
\end{equation*}
Since $a>0$ is arbitrary, $\mu \in \mathcal{P}_{\phi}^{\text{all}}\left(\mathbb{R}^d\right)$.
\end{proof}
The preceding proposition identifies the tail restriction imposed by Orlicz geometry. However, it does not yet exploit the additional integrability information carried by the $p$-th power in the expected loss. In fact, the same assumption also forces the target distribution to have a finite $p$-th moment. Combining these two restrictions yields the following stronger necessary condition
\begin{theorem}
Let $p\in[1,+\infty)$ and let $\mu\in \mathcal{P}_{\phi}\left(\mathbb{R}^d\right)$. Let $\widehat{\mu}_n$ denote the empirical measure of $n$ i.i.d. samples drawn from $\mu$. If $\displaystyle \lim_{n \to +\infty}\mathbb{E}\operatorname{SOW}^p_{\phi,p}\left(\widehat{\mu}_n,\mu\right)=0$, then $\mu \in \mathcal{P}^{\mathrm{all}}_{\phi}\left(\mathbb{R}^d\right)\cap \mathcal{P}_p\left(\mathbb{R}^d\right)$.
\end{theorem}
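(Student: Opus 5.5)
The first inclusion, $\mu\in\mathcal{P}^{\mathrm{all}}_{\phi}(\mathbb{R}^d)$, is exactly the preceding proposition. It therefore remains to show $\mu\in\mathcal{P}_p(\mathbb{R}^d)$, i.e. $\int\|x\|^p\,\mathrm{d}\mu<\infty$.

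\emph{Step 1: reduce to a bound at a single sample size.} Since $\mathbb{E}\operatorname{SOW}^p_{\phi,p}(\widehat{\mu}_n,\mu)\to0$, there is some $n$ with $\mathbb{E}\operatorname{SOW}^p_{\phi,p}(\widehat{\mu}_n,\mu)<\infty$. It suffices to show that finiteness at one $n$ forces a finite $p$-th moment.

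\emph{Step 2: lower bound via the moment lemma.} Lemma~\ref{lem:moment-SOW} and the triangle inequality give
\[
m_\phi(\widehat{\mu}_n)\le C_{d,p}\operatorname{SOW}_{\phi,p}(\widehat{\mu}_n,\delta_0)\le C_{d,p}\bigl[\operatorname{SOW}_{\phi,p}(\widehat{\mu}_n,\mu)+m_\phi(\mu)\bigr].
\]
Here $m_\phi(\mu)<\infty$ because $\mu\in\mathcal{P}_\phi$. Using $(a+b)^p\le 2^{p-1}(a^p+b^p)$, we obtain $\mathbb{E}\, m_\phi(\widehat{\mu}_n)^p<\infty$.

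\emph{Step 3: bound $m_\phi$ of an empirical measure from below by its largest atom.} Write $\widehat{\mu}_n=\frac1n\sum_i\delta_{X_i}$. By monotonicity of the Luxemburg norm,
\[
m_\phi(\widehat{\mu}_n)\ \ge\ \max_i\|X_i\|\;\omega_\phi(1/n)\ \ge\ \|X_1\|\,\omega_\phi(1/n).
\]
This is the same computation as \eqref{eq:Wphi-Fundamnetalfunction}: the norm of $\|x\|$ under $\widehat{\mu}_n$ dominates the norm of the function equal to $\max_i\|X_i\|$ on a set of mass $1/n$ and $0$ elsewhere. Since $\omega_\phi(1/n)>0$ by Lemma~\ref{lem:properties-omega-psi}, we get $\mathbb{E}\|X_1\|^p\le \omega_\phi(1/n)^{-p}\,\mathbb{E}\,m_\phi(\widehat{\mu}_n)^p<\infty$. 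This gives $\mu\in\mathcal{P}_p(\mathbb{R}^d)$.

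The main point needing care is Step 3. One has to check that the empirical Luxemburg norm really dominates a single atom times $\omega_\phi(1/n)$, which follows directly from the definition since $\phi\ge0$. One should also note that Step 1 needs only one finite $n$, not the limit itself. Measurability of $m_\phi(\widehat{\mu}_n)$ as a function of the sample is routine, since it is a monotone limit of continuous expressions.
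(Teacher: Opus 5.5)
Your proof is correct, and it takes a different route from the paper's.

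\textbf{The paper's route.} The paper proves the $\mathcal{P}_p$ part through the sample mean:
\begin{itemize}
\item It lower-bounds each projected $\operatorname{W}_\phi$ by $\operatorname{W}_1/\phi^{-1}(1)$ (Lemma~\ref{lem:W1-Wphi}).
\item It uses $\operatorname{W}_1(\Pi^\theta_{\#}\widehat{\mu}_n,\Pi^\theta_{\#}\mu)\ge|\langle\theta,\bar X_n-m\rangle|$, where $m=\mathbb{E}X_1$ is first shown to exist.
\item Integrating with the spherical identity \eqref{eq:sphereuseful} gives $\mathbb{E}\|\bar X_{n_0}-m\|^p<\infty$.
\item It then needs a decoupling step to pass from the sample mean to a single observation: independence of $Z_1$ and $\sum_{i\ge2}Z_i$, Tonelli, and fixing a good $r_0$.
\end{itemize}

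\textbf{Your route.} You use the finite-support structure of $\widehat{\mu}_n$ directly. Every $X_i$ carries mass at least $1/n$, so $\|X_1\|\le\phi^{-1}(n)\,m_\phi(\widehat{\mu}_n)$ holds pointwise. Combined with Lemma~\ref{lem:moment-SOW} and the triangle inequality, this gives
\[
\|X_1\|\le \phi^{-1}(n)\,C_{d,p}\bigl[\operatorname{SOW}_{\phi,p}(\widehat{\mu}_n,\mu)+m_\phi(\mu)\bigr].
\]
This bound needs no mean, no centering and no decoupling. It also makes your measurability remark unnecessary: $\|X_1\|^p$ is pointwise dominated by a constant times $\operatorname{SOW}^p_{\phi,p}(\widehat{\mu}_n,\mu)+m_\phi(\mu)^p$, whose expectation is finite by hypothesis.

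\textbf{Trade-offs.} Your argument is shorter and more transparent. Its costs are an $n$-dependent factor $\phi^{-1}(n)$, which is harmless since one $n$ suffices, and reliance on the Orlicz-specific comparison of Lemma~\ref{lem:moment-SOW}. You could bypass that lemma by noting $\operatorname{W}_\phi(\Pi^\theta_{\#}\widehat{\mu}_n,\delta_0)\ge|\langle\theta,X_1\rangle|\,\omega_\phi(1/n)$ and integrating with \eqref{eq:sphereuseful}.

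The paper's argument uses only the crude comparison with $\operatorname{W}_1$ and a first-moment lower bound. It therefore does not depend on the finer Luxemburg structure, at the price of the extra probabilistic decoupling step.
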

\begin{proof}
By the preceding proposition, it remains only to prove that $\mu \in \mathcal{P}_{p}\left(\mathbb{R}^d\right)$.
Let $X_1,X_2,\dots,X_n$ be i.i.d. with law $\mu$ and define
\begin{equation*}
\bar{X}_n\coloneqq \frac{1}{n}\sum_{i=1}^n X_i,\quad m\coloneqq \mathbb{E}X_1.
\end{equation*}
The vector $m$ is well defined. Indeed, since $\mu \in \mathcal{P}_{\phi}\left(\mathbb{R}^d\right)$, Lemma~\ref{lem:W1-Wphi} gives
\begin{equation*}
\operatorname{W}_1\left(\mu,\delta_0\right) \le \phi^{-1}(1)\operatorname{W}_{\phi}\left(\mu,\delta_0\right)<+\infty.
\end{equation*}
Therefore,
\begin{equation*}
\mathbb{E}\|X_1\| = \operatorname{W}_1\left(\mu,\delta_0 \right) <+\infty.
\end{equation*}
For any $\theta \in \mathbb{S}^{d-1}$, let $(U,V)$ be an arbitrary coupling of $\Pi^{\theta}_{\#}\widehat{\mu}_n$ and $\Pi^{\theta}_{\#} \mu$. Then,
\begin{equation*}
\mathbb{E}\left|U-V\right|\ge \left|\mathbb{E}U-\mathbb{E}V\right|=\left|\langle \theta,\bar{X}_n\rangle -\langle \theta,m\rangle \right|=\left|\langle \theta , \bar{X}_n-m\rangle \right|.
\end{equation*}
Taking the infimum over all couplings yields
\begin{equation*}
\operatorname{W}_1\left(\Pi^{\theta}_{\#}\widehat{\mu}_n,\Pi^{\theta}_{\#}\mu\right)\ge \left|\langle \theta,\bar{X}_n-m\rangle \right|.
\end{equation*}
By Lemma~\ref{lem:W1-Wphi}, we have
\begin{equation*}
\operatorname{W}_{\phi}\left(\Pi^{\theta}_{\#}\widehat{\mu}_n,\Pi^{\theta}_{\#}\mu\right) \ge \frac{\left|\langle \theta,\bar{X}_n-m\rangle \right|}{\phi^{-1}(1)}.
\end{equation*}
Hence,
\begin{equation*}
\operatorname{SOW}^p_{\phi,p}\left(\widehat{\mu}_n,\mu\right) \ge \frac{1}{\left[\phi^{-1}(1)\right]^p}\int_{\mathbb{S}^{d-1}}\left|\langle \theta,\bar{X}_n-m\rangle\right|^p \mathrm{d}\sigma(\theta)=\frac{\kappa_{p,d}}{\left[\phi^{-1}(1)\right]^p}\|\bar{X}_n-m\|^p,
\end{equation*}
where we used \eqref{eq:sphereuseful} in the last equality. Therefore,
\begin{equation*}
\mathbb{E}\|\bar{X}_n-m\|^p \le \frac{\left[\phi^{-1}(1)\right]^p}{\kappa_{p,d}}\mathbb{E}\operatorname{SOW}^p_{\phi,p}\left(\widehat{\mu}_n,\mu\right) \xrightarrow[]{n\to +\infty}0.
\end{equation*}
As a consequence, there exists $n_0 \in \mathbb{N}$ with $n_0\ge 2$, we have
\begin{equation*}
\mathbb{E}\|\bar{X}_{n_0}-m\|^p<+\infty.
\end{equation*}
We set $Z_i\coloneqq X_i-m$ and $R \coloneqq \sum_{i=2}^{n_0}Z_i$. Then, $Z_1$ and $R$ are independent and $Z_1+R=n_0\left(\bar{X}_{n_0}-m\right)$. So, $\mathbb{E}\|Z_1+R\|^p<+\infty$. The independence of $Z_1$ and $R$, together with Tonelli's theorem gives
\begin{equation*}
\int_{\mathbb{R}^d}\int_{\mathbb{R}^d}\|z+r\|^p\mathrm{d}\rho(z)d\eth(r) <+\infty,
\end{equation*}
where $\rho$ and $\eth$ denote the laws of $Z_1$ and $R$ respectively. As a result,
\begin{equation*}
\int_{\mathbb{R}^d}\|z+r\|^pd\rho(z) <+\infty \quad \eth\text{-almost every $r$}.
\end{equation*}
Fix one such $r_0$. Since $p \ge 1$, for all $z \in \mathbb{R}^d$ we have
\begin{equation*}
\|z\|^p \le 2^{p-1}\left(\|z+r_0\|^p+\|r_0\|^p\right).
\end{equation*}
Taking the integral with respect to $\rho$ yields
\begin{equation*}
\mathbb{E}\|Z_1\|^p<+\infty.
\end{equation*}
Finally, since $X_1=Z_1+m$, we have
\begin{equation*}
\mathbb{E}\|X_1\|^p \le 2^{p-1}\left(\mathbb{E}\|Z_1\|^p+\|m\|^p\right)<+\infty.
\end{equation*}
Consequently, $\mu \in \mathcal{P}_{p}\left(\mathbb{R}^d\right)$.
\end{proof}

The theorem above gives a necessary condition for $\mathbb{E}[\operatorname{SOW}^p_{\phi,p}\left(\widehat{\mu}_n,\mu\right)]$ converges to $0$ as $n$ goes to $+\infty$. It shows that any class of unbounded distributions on which the expected empirical loss vanishes must be contained in $\mathcal P_\phi^{\mathrm{all}}(\mathbb R^d)\cap \mathcal{P}_{p}\left(\mathbb{R}^d\right)$. Determining whether $\mathcal P_\phi^{\mathrm{all}}(\mathbb R^d)\cap \mathcal{P}_{p}\left(\mathbb{R}^d\right)$ is sufficient for vanishing expected empirical loss or whether one must impose a smaller tail class depending on both $\phi$ and $p$ is left for future work.
\section{Technical contributions and proof roadmap}\label{app:technicalcontribution}
In this appendix, we provide a proof roadmap for the main statistical results established in this paper. Our goal is to make explicit where the classical sliced Wasserstein analysis relies on the power-law structure  $\phi(t)=t^p$, and to highlight the additional technical ingredients required to extend these arguments to a general Orlicz function
\subsection{Empirical convergence of the $\operatorname{SOW}$ distance}
We first consider the one-sample problem in Theorem~\ref{thm:SOW-one-sample}. Fix a direction $\theta \in \mathbb{S}^{d-1}$. Since both projected measures are supported on an interval of length at most $D>0$, Lemma~\ref{lem:Wphi-W1-compact} relates the projected $\operatorname{OW}$ distance to the ordinary $\operatorname{W}_1$ distance
\begin{equation*}
\operatorname{W}_{\phi}\left(\Pi^{\theta}_{\#}\widehat{\mu}_n,\Pi^{\theta}_{\#}\mu\right) \le D\omega_{\phi}\left(\frac{\operatorname{W}_1\left(\Pi^{\theta}_{\#}\widehat{\mu}_n,\Pi^{\theta}_{\#}\mu\right)}{D}\right).
\end{equation*}
The advantage of this comparison is that $\operatorname{W}_1$ has a simple one-dimensional CDF representation (see, for example, \cite[Theorem~2.9]{bobkov2019one}). If $\widehat{F}_{n,\theta}$ and $F_{\theta}$ denote the CDFs of the two projected measures, then
\begin{equation*}
\operatorname{W}_1\left(\Pi^{\theta}_{\#}\widehat{\mu}_n,\Pi^{\theta}_{\#}\mu\right)=\int_{\mathbb{R}}\left|\widehat{F}_{n,\theta}(x)-F_{\theta}(x)\right|\mathrm{d}x \le D\|\widehat{F}_{n,\theta}-F_{\theta}\|_{\infty}.
\end{equation*}
Consequently, we have
\begin{equation*}
\operatorname{W}_{\phi}\left(\Pi^{\theta}_{\#}\widehat{\mu}_n,\Pi^{\theta}_{\#}\mu\right) \le D\omega_{\phi}\left(\Delta_{n,\theta}\right),
\end{equation*}
where $\Delta_{n,\theta}=\|\widehat{F}_{n,\theta}-F_{\theta}\|_{\infty}$. At this point, the estimation problem is reduced to a single quantity $\mathbb{E}\left[\omega_{\phi}\left(\Delta_{n,\theta}\right)^p\right]$. On the other hand, by the Dvoretzky-Kiefer-Wolfowitz (DKW) inequality, we have
\begin{equation*}
\mathbb{P}\left(\Delta_{n,\theta}>u\right) \le 2\exp \left(-2nu^2\right).
\end{equation*}
In the classical sliced Wasserstein case, $\phi(t)=t^p$ and therefore $\omega_{\phi}(u)^p=u$. Hence, the remaining quantity becomes simply $\mathbb{E}\left[\Delta_{n,\theta}\right]$, which can be directly controlled by using DKW inequality. Indeed, we have
\begin{equation*}
\mathbb{E}\left[\Delta_{n,\theta}\right]=\int_{0}^{+\infty}\mathbb{P}\left(\Delta_{n,\theta}>u\right)\mathrm{d}u \lesssim n^{-1/2}.
\end{equation*}
For a general Orlicz function, however, the identity $\omega_{\phi}(u)^p=u$ is no longer available. This is precisely where an additional Orlicz-specific argument is needed. Lemma~{} addresses this gap by decomposing the range of $\Delta_{n,\theta}$ into suitable scales. This decomposition exploits the growth property of $\omega_{\phi}$ established in Lemma~\ref{lem:properties-omega-psi}, allowing the DKW tail bound to be applied properly and yielding the desired bound.
\subsection{Estimation error of the powered SOW functional}
We next consider the two-sample estimation problem for the powered functional $\operatorname{SOW}^p_{\phi,p}$ in Theorem~\ref{thm:powerSOW-two-sample}. Fix a direction $\theta \in \mathbb{S}^{d-1}$, and define the normalized empirical and population quantile-gap profiles by
\begin{equation*}
\widehat{h}_{\theta}(s) =\frac{\left|F^{-1}_{\Pi^{\theta}_{\#}\widehat{\mu}_n}(s)-F^{-1}_{\Pi^{\theta}_{\#}\widehat{\nu}_m}(s)\right|}{D}, \quad h_{\theta}(s)=\frac{\left|F^{-1}_{\Pi^{\theta}_{\#}\mu}(s)-F^{-1}_{\Pi^{\theta}_{\#}\nu}(s)\right|}{D}, \quad s\in(0,1).
\end{equation*}
Both $\widehat{h}_{\theta}$ and $h_{\theta}$ take values in $[0,1]$. By the one-dimensional quantile representation and the homogeneity of the Luxemburg norm, we have
\begin{equation*}
\left|\operatorname{W}^p_{\phi}\left(\Pi^{\theta}_{\#}\widehat{\mu}_n,\Pi^{\theta}_{\#}\widehat{\nu}_m\right)-\operatorname{W}^p_{\phi}\left(\Pi^{\theta}_{\#}\mu,\Pi^{\theta}_{\#}\nu\right)\right|=D^p\left|\|\widehat{h}_{\theta}\|_{\phi}^p-\|h_{\theta}\|^p_{\phi}\right|.
\end{equation*}
In the classical sliced Wasserstein setting, a key step in obtaining the parametric rate for the powered functional is to reduce its variation to projected $\operatorname{W}_1$ errors (see, for example, the proof of Proposition~1 in \cite{nietert2022statistical}). In view of the representation above, obtaining an analogous reduction in the Orlicz setting amounts to controlling $\left|\|\widehat{h}_{\theta}\|^p_{\phi}-\|h_{\theta}\|^p_{\phi}\right|$ in terms of the $L^1$ discrepancy $\|\widehat{h}_{\theta}-h_{\theta}\|_{L^1}$. When $\phi(t)=t^p$, this control is immediate. Indeed, since $\|\widehat{h}_{\theta}\|^p_{\phi}=\int_{0}^{1}|\widehat{h}_{\theta}(s)|^p\mathrm{d}s$,$\|h_{\theta}\|^p_{\phi}=\int_{0}^{1}\left|h_{\theta}(s)\right|^p\mathrm{d}s$ and $0 \le \widehat{h}_{\theta},h_{\theta} \le 1$, we have
\begin{equation*}
\left|\|\widehat{h}_{\theta}\|^p_{\phi}-\|h_{\theta}\|^p_{\phi}\right| \le p\|\widehat{h}_{\theta}-h_{\theta}\|_{L^1}. 
\end{equation*}
Then, using the triangle inequality and the one-dimensional quantile representation of $\operatorname{W}_1$, we have
\begin{equation*}
\|\widehat{h}_{\theta}-h_{\theta}\|_{L^1} \le \frac{1}{D}\left[\operatorname{W}_1\left(\Pi^{\theta}_{\#}\widehat{\mu}_n,\Pi^{\theta}_{\#}\mu\right)+\operatorname{W}_1\left(\Pi^{\theta}_{\#}\widehat{\nu}_m,\Pi^{\theta}_{\#}\nu\right)\right],
\end{equation*}
where the expectation of the right-hand side can be controlled by using \cite[Theorem~3.2]{bobkov2019one}.

For a general Orlicz function, however, the preceding $L^1$-perturbation argument is no longer available. We therefore introduce the worst-case perturbation modulus
\begin{equation*}
\Omega_{\phi,p}(u) =\sup_{\substack{0\le f,g\le1\\\|f-g\|_{L^1(0,1)}\le u}} \left|\|f\|_\phi^p-\|g\|_\phi^p\right|, \quad u \in [0,1].
\end{equation*}
The main technical step is then to identify the sharp scale of this modulus. Proposition~\ref{prop:luxemburg-L1-modulus} shows that $\Omega_{\phi,p}(u) \asymp_{p} \psi_{\phi,p}^{\mathrm{cav}}(u)$. Consequently,
\begin{equation*}
\left|\|\widehat{h}_{\theta}\|^p_{\phi}-\|h_{\theta}\|^p_{\phi}\right| \lesssim_{p}\psi_{\phi,p}^{\mathrm{cav}}(\|\widehat{h}_{\theta}-h_{\theta}\|_{L^1(0,1)}).
\end{equation*}
The concavity of $\psi^{\mathrm{cav}}_{\phi,p}$ and Jensen's inequality yield
\begin{equation*}
\mathbb{E}\left[\psi^{\mathrm{cav}}_{\phi,p}\left(\|\widehat{h}_{\theta}-h_{\theta}\|_{L^1(0,1)}\right)\right] \le \psi^{\mathrm{cav}}_{\phi,p}\left(\mathbb{E}\|\widehat{h}_{\theta}-h_{\theta}\|_{L^1(0,1)}\right).
\end{equation*}
Using the monotonicity of $\psi_{\phi,p}^{\mathrm{cav}}$, the right-hand side then can be controlled by the quantile representation of $\operatorname{W}_1$ together with \cite[Theorem~3.2]{bobkov2019one}. Combining these estimates and integrating over the projection directions yields Theorem~\ref{thm:powerSOW-two-sample}
\subsection{Minimax lower bound of the powered SOW functional}
The minimax lower bounds are established using Le Cam's two-point method. For the $\operatorname{SOW}$ distance, a direct two-point perturbation at scale $n^{-1/2}$ already yields the matching lower bound, so we focus here on the additional difficulty arising for the powered functional. 

For the powered functional, reusing the two-point construction from Section~\ref{app:minimax-two-sample} only gives a separation of order $D^p\psi_{\phi,p}(n^{-1/2})$. However, the upper bound in Theorem~\ref{thm:powerSOW-two-sample} is governed by $D^p\psi^{\mathrm{cav}}_{\phi,p}\left(n^{-1/2}\right)$, which can be larger than $D^p\psi_{\phi,p}(n^{-1/2})$. To overcome this problem, Lemma~\ref{lem:parameter-powerSOW} shows that for every sufficiently small $h$, there exist two nearby mass levels $0 \le q_0<q_1\le 1/2$, $q_1-q_0 \le h$ such that $\psi_{\phi,p}(q_1)-\psi_{\phi,p}(q_0) \gtrsim_p \psi^{\mathrm{cav}}_{\phi,p}(h).$ This allows us to construct two distributions with small Kullback--Leibler divergence, while their powered $\operatorname{SOW}$ values are separated at the desired scale. Le Cam's method then gives the matching lower bound.

\bibliography{example_paper}
\bibliographystyle{abbrv}

\end{document}